\newcommand{\A}{\mathcal{A}}
\newcommand{\mB}{\mathcal{B}}
\newcommand{\I}{\mathcal{I}}
\newcommand{\mS}{\mathcal{S}}
\newtheorem{theorem}{Theorem}
\newtheorem{lemma}{Lemma}
\newtheorem{proof}{Proof}
\newtheorem{corollary}{Corollary}
\providecommand{\keywords}[1]{\textit{\quad Key words }:  #1}
\newcommand{\qedsymbol}{\hfill $\blacksquare$}
\begin{document}

\title{A Splicing Approach to Best Subset of Groups Selection}


\author[1, 3]{Yanhang Zhang\textsuperscript{*}}
\author[2]{Junxian Zhu\textsuperscript{*}}
\author[3]{Jin Zhu\textsuperscript{*}}
\author[4]{Xueqin Wang\textsuperscript{$\dagger$}}

\affil[1]{\footnotesize School of Statistics,Renmin University of China}
\affil[2]{\footnotesize Saw Swee Hock School of Public Health, National University of Singapore}
\affil[3]{\footnotesize Southern China Center for Statistical Science, Department of Statistical Science, School of Mathematics, Sun Yat-Sen University}
\affil[4]{\footnotesize Department of Statistics and Finance/International Institute of Finance,
 	School of Management,
 	University of Science and Technology of China\\
 	\texttt{wangxq20@ustc.edu.cn} }

\date{}
\maketitle \sloppy

\begin{abstract}
  \noindent Best subset of groups selection (BSGS) is the process of selecting a small part of non-overlapping groups to achieve the best interpretability on the response variable.
  It has attracted increasing attention and has far-reaching applications in practice.
  However, due to the computational intractability of BSGS in high-dimensional settings, 
  developing efficient algorithms for solving BSGS remains a research hotspot.
  In this paper,
  we propose a group-splicing algorithm that iteratively detects the relevant groups and excludes the irrelevant ones.
  Moreover, coupled with  a novel group information criterion, 
  we develop an adaptive algorithm to determine the optimal model size.
  Under mild conditions, it is certifiable that our algorithm can identify the optimal subset of groups in polynomial time with high probability.
  Finally, we demonstrate the efficiency and accuracy of our methods by comparing them with several state-of-the-art algorithms on both synthetic and real-world datasets.
    
\end{abstract}

\keywords{Best subset of groups selection; Group splicing; Group information criterion; 
Selection consistency of subset of groups ; Polynomial computational complexity}

\begingroup\renewcommand\thefootnote{*}
\footnotetext{Equal contribution}
\begingroup\renewcommand\thefootnote{$\dagger$}
\footnotetext{Corresponding author}

\section{Introduction}\label{sec:intro}
Consider a linear model with $J$ non-overlapping groups:
\begin{align*}
{\bm y}=\sum\limits_{j=1}^{J}{\bm X}_{G_j}{\bm\beta}_{G_j}+{\bm \varepsilon},
\end{align*}
where ${\bm y}\in \mathbb{R}^n$ is the response variable, ${\bm X}_{G_j} \in \mathbb{R}^{n\times p_j}$ is the design matrix of the $j$th group,
${\bm\beta}_{G_j} \in \mathbb{R}^{p_j}$ are the regression coefficients of the $j$th group, and ${\bm \varepsilon} \in \mathbb{R}^n$ is the random error term.
Here, $\{G_j\}_{j=1}^J$ are group indices of $p$ predictors such that
$\cup_{j=1}^JG_j=\{1,\ldots,p\}$ and $G_i \cap G_j = \emptyset$ for all $i\neq j$.
Suppose the group size of the $j$th group is $p_j$.
We name the above linear model as group linear model, and it simplifies to an ordinary linear model when $p_1=\cdots=p_J=1$.
Group linear model is valid for analyzing variables with a certain group structure.
For instance, a categorical variable with several levels is often represented by a group of dummy variables.
Besides this, in a nonparametric additive model, a continuous component can be represented by a set of basis functions, e.g., a linear combination of spline basis functions \citep{huang2010}.
In multivariate response regression, one predictor contributing to various responses naturally possesses group structure \citep{peng2010}.
Additionally, prior knowledge can impose group structure on variables. A typical example is that
the genes belonging to the same biological pathway can be considered as a group in the genomic data analysis \citep{pan2010incorporating}.

In recent decades, high-dimensional group selection has played an essential role in various applications 
\citep{bach, zhao2009, obozinski2011, won2020group}.
One of the most natural formulations for this research is 
the best subset of groups selection (BSGS). 
In specific, BSGS minimizes the quadratic loss under a $\ell_{0,2}$ (pseudo) norm constraint for ${\bm\beta}$:
\begin{equation}\label{eq:constraint}
\min\limits_{{\bm\beta} \in \mathbb{R}^p} \frac{1}{2n}\| {\bm y}-{\bm X}{\bm\beta}\|_2^2\quad \text{s.t.}\ \|{\bm\beta}\|_{0,2}\leqslant T,
\end{equation}
where $\|{\bm\beta}\|_{0,2} = \sum_{j=1}^J I(\|{\bm\beta}_{G_j}\|_{2}\neq 0)$ in which 
$\| \cdot \|_2$ is the $\ell_2$ norm and $I(\cdot)$ is the indicator function, 
and the model size $T$ is a positive integer to be determined from data.
The $\ell_{0,2}$ constraint applies the $\ell_2$ penalty within a group and the $\ell_0$ penalty across the groups. Thus BSGS exclusively encourages sparsity at the group level, so that coefficients will either be zero or nonzero within a group.
Notably, when $p_1 = \ldots = p_J = 1$, BSGS boils down to the standard best subset selection problem, which is an NP-hard problem \citep{natarajan1995sparse}.
\citet{eldar2009robust} first considered the reconstruction of group-sparse signals by BSGS.
A naive approach to solving BSGS is exhaustively searching for all the possible combinations of $T$ groups,
and then using an information criterion, like Bayesian information criterion (BIC, \cite{schwarz1978estimating}),
for the choice of the model size $T$.
However, this approach is quite time-consuming and is intractable to tackle with high dimensionality.
To alleviate this issue, many researchers focused their attention on greedy-type methods. 
One of the representative methods is group orthogonal matching pursuit (GOMP, \cite{BOMP,ben2011near}). 
In each iteration, 
GOMP picks the group that has the strongest correlation
with the current residuals into the selected set.
Then, GOMP updates the residuals by projecting the response onto the linear subspace spanned by the selected groups.
After $T$ iterations, the selected set is output as the solution of GOMP.
Although these greedy-type methods have demonstrated promising performance in practice, 
strong assumptions are required to guarantee the estimation performance theoretically \citep{GIGA}.
Recently, several appealing works have shown that the best subset selection can be solved exactly on a large scale \citep{bertsimas2016, Bertsimas2020, hazimeh2020sparse}. 
\cite{bertsimas2021slowly} developed a slowly varying regression framework that 
can solve exactly BSGS with $\sim$ 30,000 variables in minutes.
Despite their usefulness, these works have not provided a certifiable polynomial complexity in theory.

To circumvent the computational intractability of BSGS in high-dimensional settings,
several regularization methods have been developed for group selection.
One popular approach to group selection is group Lasso (GLasso, \citet{Y2006}), a natural extension of the Lasso estimator \citep{T1996}.
Many classical works have addressed the theory of GLasso. 
\citet{10.1214/09-AOS778} introduced the concept of strong group sparsity and showed that for strong group-sparse signals, GLasso has an edge over the standard Lasso in estimation and prediction.
However, GLasso inherits similar drawbacks as the standard Lasso, e.g., the selection bias and the heavy shrinkage of large coefficients.
To remedy these drawbacks,
multiple efforts were made to develop adaptive Lasso \citep{zou2006} or extend various nonconvex penalties, such as smoothly clipped absolute deviation penalty (SCAD, \cite{F2001}) and minimax concave penalty (MCP, \cite{zhang2010}).
\citet{wang2008} proposed the adaptive GLasso and proved its oracle property.
\citet{W2010} studied the asymptotic selection and estimation properties of adaptive GLasso
when the number of groups $J$ exceeds the sample size $n$.
As for the nonconvex group penalties, \citet{W2007} extended the SCAD penalty to group selection and proved its oracle property.
\citet{H2012} proposed group MCP (GMCP). Additionally, they proved its oracle property in the high-dimensional scenario, in which $J$ is allowed to exceed $n$.

Different from these shrinkage-based methods, a practical and widely studied framework is
the Lagrangian form of problem \eqref{eq:constraint}, which is transformed into an unconstrained optimization problem:
\begin{equation}\label{eq:Lagrangian}
\min_{{\bm\beta} \in \mathbb{R}^p}\frac{1}{2n}\| {\bm y}-{\bm X}{\bm\beta}\|_2^2+\lambda\|{\bm\beta}\|_{0,2},
\end{equation}
where $\lambda > 0$ is the tuning parameter.
Some remarkable methods of solving problem \eqref{eq:Lagrangian} have been developed in recent years.
\citet{J2016} applied a primal-dual active set strategy to group selection
and conducted a theoretical analysis of the algorithm, such as a provable finite-step convergence and support recovery.
\citet{Hu2017} proposed a proximal gradient method to solve problem \eqref{eq:Lagrangian} and showed the proposed algorithm converges to a feasible point.
\citet{hazimeh2021grouped} presented a new algorithmic framework based on discrete mathematical optimization to solve problem \eqref{eq:Lagrangian}.
Furthermore, they established non-asymptotic prediction and estimation error bounds for the estimators.
However, owing to the nonconvexity of $\ell_{0,2}$ penalty, \eqref{eq:constraint} and \eqref{eq:Lagrangian} are not equivalent.
Hence $\lambda$ lacks a direct connection with $T$, so the $\ell_{0,2}$ regularization approaches cannot select an exact model size.
Empirically, tuning the values of $\lambda$ in problem \eqref{eq:Lagrangian} or other group penalties, such as GLasso, is quite time-consuming \citep{H2018}.
By contrast, problem~\eqref{eq:constraint} directly controls the exact level of model size via the choice of model size $T$.
Additionally, \citet{shen2013constrained} indicated that constrained form \eqref{eq:constraint} is more desirable over Lagrangian form \eqref{eq:Lagrangian} with respect to statistical properties of the solution, for example,
\eqref{eq:Lagrangian} requires slightly stronger assumptions than \eqref{eq:constraint} to achieve selection consistency.

In this paper, our primary aim is to design fast algorithms to obtain a high-quality solution to problem~\eqref{eq:constraint}.
Our contributions are three-fold:

\begin{itemize}
\item A finite-step convergence group-splicing (GSplicing) algorithm is proposed to solve BSGS iteratively.
In terms of group selection, the solution of GSplicing can cover the true subset of groups with high probability when the given model size is not less than the true model size.
Moreover, we establish an upper bound of the convergence rate of loss with high probability.

\item We propose a novel group information criterion (GIC) to identify an optimal model size for group selection.
By integrating GIC and GSplicing,
we develop an adaptive algorithm to determine the optimal selected set.
Theoretically, without prior information on the true model size, the adaptive algorithm can perfectly recover the true subset of groups in polynomial time with high probability.
Moreover, a heuristic strategy is equipped to accelerate the adaptive algorithm. 

\item 
We apply our proposed methods to both synthetic and real-world datasets. 
In synthetic experiments, comprehensive empirical comparisons with several state-of-the-art methods show the superiority of our methods across a variety of metrics.
Additionally, computational results for a real-world dataset demonstrate our approach that produces more accurate predictive power with fewer groups.

\end{itemize}

\subsection{Organization}
The rest of this paper is organized as follows. 
In Section~\ref{sec:methodology}, 
we detail the proposed methods for BSGS
and develop a novel information criterion to determine the optimal model size. 
Section 3 conducts the theoretical analysis of our algorithms, including the statistical and convergence properties.
Numerical experiments follow in Section \ref{numerical}, 
where we compare our methods with several state-of-the-art methods using both synthetic and real-world datasets.
Finally, we summarize our study in Section~\ref{sec:conclusion}. 
The online supplement includes technical proofs and additional theoretical results.

\subsection{Notations}
Let $\mS = \{1, \ldots, J\}$. For a given subset of groups $\A \subseteq \mS$ with size $|\A|$, 
denote $\#\{\A\} = \sum\limits_{j \in \A}p_j$.
Let ${\bm\beta}_{\A} = ({\bm\beta}_{G_j}, j\in \A) \in \mathbb{R}^{\#\{\A\}}$ and ${\bm X}_{\A} = ({\bm X}_{G_j}, j \in \A)\in \mathbb{R}^{n \times \#\{\A\}}$.
Denote $\mathrm{\bm I}_{p_j}$ as the $p_j \times p_j$ identity matrix. 
Denote the selected set $\A = \{j\in \mS:\|{\bm\beta}_{G_j}\|_2 \neq 0\}$ and the unselected set $\I = \mS\backslash \A=\A^c$. 
Let ${\bm\beta}^*$ be the true regression coefficients 
and $\A^*$ be the true subset of groups such that $\A^* = \{j \in \mS: \|{\bm\beta}^*_{G_j}\|_2 \neq 0 \}$. 
Assume the true model size $|\A^*|=s^*$.
Let $\I^* = (\A^*)^c$. 
Let the maximum group size $p_{\max}=\max\{p_j:1\leqslant j\leqslant J\}$ and the minimum group size $\ p_{\min}=\min\{p_j:1\leqslant j\leqslant J\}$.
Define $[\cdot]$ as the function that returns the nearest integer and $L({\bm\beta}) = \frac{1}{2n}\| {\bm y}-{\bm X}{\bm\beta}\|_2^2$ as the loss function. Throughout this paper, we assume to apply a groupwise orthonormalization, e.g., by a QR decomposition, to obtain $\dfrac{{\bm X}_{G_j}^\top {\bm X}_{G_j}}{n}=\mathrm{\bm I}_{p_j}$ for all $j \in \mS$.

\section{Methodology}\label{sec:methodology}
In Section~\ref{sec:gsplicing}, we first introduce an algorithm for solving BSGS .
In Section~\ref{sec:agsplicing}, we develop an adaptive algorithm to recover the true subset of groups.
\subsection{Group-Splicing algorithm}\label{sec:gsplicing}
The augmented Lagrangian form \citep{ito2013variational} of problem~\eqref{eq:constraint} is
\begin{align}\label{s2.1}
\begin{split}
\min\limits_{{\bm\beta},\bm \upsilon,\bm d \in \mathbb{R}^p}\quad &\frac{1}{2n}\| {\bm y}-{\bm X}{\bm\beta}\|_2^2+\bm d^\top({\bm\beta}-\bm \upsilon)+\frac{\kappa}{2}\|{\bm\beta}-\bm \upsilon\|_2^2 \\
  \text{s.t.}\quad &\|\bm \upsilon\|_{0,2} \leqslant T, \\
\end{split}
\end{align}
where $\bm \upsilon \in \mathbb{R}^p$, $\kappa$ is a positive constant and $\bm d \in \mathbb{R}^p$ is the dual variable.
Without loss of generality, the response variable and predictors are centered around the mean and let $\|\bm \upsilon\|_{0,2}=T$.
We derive the optimal conditions of problem \eqref{s2.1} as follows:

\begin{lemma}\label{optimal}
  Suppose $({\bm\beta}^\diamond, \bm \upsilon^\diamond, \bm d^\diamond)$ is a coordinate-wise minimizer of \eqref{s2.1}.
  Denote $\A^\diamond = \{j\in \mS : I(\|\bm \upsilon_{G_j}^\diamond\|_2 \neq 0)\}$ and $\I^\diamond = (\A^\diamond)^c$. 
  Then $(\bm \upsilon^\diamond, {\bm\beta}^\diamond, \bm d^\diamond)$ and $(\A^\diamond, \I^\diamond)$ satisfy:
  \begin{align*}
      &{\bm\beta}^\diamond_{\A^\diamond} = ({\bm X}^\top_{\A^\diamond} {\bm X}_{\A^\diamond})^{-1} {\bm X}_{\A^\diamond}^\top \bm y,\ {\bm\beta}^\diamond_{\I^\diamond} = 0,\\
      &\bm d^\diamond_{\A^\diamond}=0,\ \bm d^\diamond_{\I^\diamond}={\bm X}^\top_{\I^\diamond}(y-{\bm X}{\bm\beta}^\diamond)/n,\\
      &\bm \upsilon^\diamond = {\bm\beta}^\diamond,\\
      &\A^\diamond =\{j\in \mS:\sum_{i=1}^{J}I(\|{\bm\beta}_{G_j}^\diamond+\frac{1}{\kappa}\bm d^\diamond_{G_j}\|_2^2\leqslant \|{\bm\beta}_{G_i}^\diamond+\frac{1}{\kappa}\bm d^\diamond_{G_i}\|_2^2)\leqslant T\}.
  \end{align*}
\end{lemma}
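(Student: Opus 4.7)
The plan is to invoke block coordinate-wise optimality in the three variables $({\bm\beta}, \bm\upsilon, \bm d)$ separately, read off the first-order condition of each block, and assemble them into the stated identities. A useful preliminary observation is that the augmented Lagrangian is linear in $\bm d$ through the term $\bm d^\top({\bm\beta}-\bm\upsilon)$; hence the only way block-optimality in $\bm d$ can hold (rather than drive the objective to $-\infty$) is to read it as the primal-feasibility condition ${\bm\beta}^\diamond = \bm\upsilon^\diamond$, and this equality will be the driving force of the whole argument.

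I would start with the $\bm\upsilon$-block. Discarding the terms independent of $\bm\upsilon$ and completing the square against the proximal term, the $\bm\upsilon$-subproblem reduces to
\[
\bm\upsilon^\diamond \in \mathop{\arg\min}_{\|\bm\upsilon\|_{0,2}=T}\ \bigl\|\bm\upsilon - ({\bm\beta}^\diamond + \bm d^\diamond/\kappa)\bigr\|_2^2,
\]
which is a group hard-thresholding of $\bm w := {\bm\beta}^\diamond + \bm d^\diamond/\kappa$: set $\bm\upsilon^\diamond_{G_j} = \bm w_{G_j}$ on the $T$ groups with largest $\|\bm w_{G_j}\|_2$, and $\bm\upsilon^\diamond_{G_j}=0$ otherwise. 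This yields the stated characterization of $\A^\diamond$, since the indicator inequality in the lemma counts precisely the groups whose squared norm is at least as large as that of $j$, and that count being $\leqslant T$ is the definition of $j$ lying in the top $T$. Feeding in the equality ${\bm\beta}^\diamond = \bm\upsilon^\diamond$ then gives two further facts: on $\I^\diamond$, $\bm\upsilon^\diamond_{G_j}=0$ forces ${\bm\beta}^\diamond_{\I^\diamond}=0$; on $\A^\diamond$, $\bm\upsilon^\diamond_{G_j} = {\bm\beta}^\diamond_{G_j} + \bm d^\diamond_{G_j}/\kappa = {\bm\beta}^\diamond_{G_j}$ forces $\bm d^\diamond_{\A^\diamond}=0$.

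I would finish with the ${\bm\beta}$-block, which is an unconstrained quadratic. Setting $\nabla_{\bm\beta} = 0$ and using ${\bm\beta}^\diamond = \bm\upsilon^\diamond$ to cancel the proximal term gives $\bm d^\diamond = \tfrac{1}{n}{\bm X}^\top({\bm y}-{\bm X}{\bm\beta}^\diamond)$. Restricting to $\I^\diamond$ directly produces the formula for $\bm d^\diamond_{\I^\diamond}$; restricting to $\A^\diamond$ and combining with $\bm d^\diamond_{\A^\diamond}=0$ together with ${\bm\beta}^\diamond_{\I^\diamond}=0$ yields the normal equation ${\bm X}^\top_{\A^\diamond}{\bm X}_{\A^\diamond}{\bm\beta}^\diamond_{\A^\diamond}={\bm X}^\top_{\A^\diamond}{\bm y}$, whose inversion (legitimate thanks to the groupwise orthonormalization assumed at the end of the Notations) is the least-squares formula. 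The genuinely delicate step is the very first one: justifying that block-optimality in the unbounded direction $\bm d$ has to be interpreted as the primal-feasibility relation ${\bm\beta}^\diamond = \bm\upsilon^\diamond$; once that convention is pinned down, everything else is mechanical linear algebra on the partitioned normal equations.
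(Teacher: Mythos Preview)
Your proposal is correct and follows essentially the same approach as the paper's proof: the paper likewise derives the three block-optimality conditions (the first-order condition in ${\bm\beta}$, the group hard-thresholding form of the $\bm\upsilon$-subproblem after completing the square, and the linearity in $\bm d$ forcing ${\bm\beta}=\bm\upsilon$), then assembles them to obtain the stated identities. One minor remark: your justification that the inversion of ${\bm X}_{\A^\diamond}^\top{\bm X}_{\A^\diamond}$ is legitimate ``thanks to the groupwise orthonormalization'' is not quite right, since that assumption only normalizes each individual block ${\bm X}_{G_j}^\top{\bm X}_{G_j}/n=\mathrm{\bm I}_{p_j}$ and says nothing about invertibility of the full cross-group Gram matrix; the paper does not address this point either, and in context it is implicitly covered by the GSRC condition (C2).
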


\vspace{-0.1cm}
From Lemma~\ref{optimal}, $\bm \upsilon$ plays a critical role in deciding the optimal selected set $\A^\diamond$.
In general, we approximate the optimal conditions iteratively. Let $\{\A^k, \I^k, {\bm\beta}^k, \bm d^k\}$ be the solution in the $k$th iteration. We update $\{\A^{k+1}, \I^{k+1}\}$ by
\vspace{-0.1cm}
\begin{align}\label{s2.2}
\begin{split}
&\A^{k+1} = \{j\in \mS:\sum_{i=1}^{J}I(\|{\bm\beta}_{G_j}^k+\frac{1}{\kappa}\bm d^k_{G_j}\|_2^2\leqslant \|{\bm\beta}_{G_i}^k+\frac{1}{\kappa}\bm d^k_{G_i}\|_2^2)\leqslant T\},\\
& \I^{k+1} = (\A^{k+1})^c.
\end{split}
\end{align}
Then, we update the primal variable ${\bm\beta}^{k+1}$ and the dual variable $\bm d^{k+1}$ by
\begin{align*}
  &{\bm\beta}^{k+1}_{\A^{k+1}} = ({\bm X}_{\A^{k+1}}^\top {\bm X}_{\A^{k+1}})^{-1}{\bm X}_{\A^{k+1}}^\top \bm y, \; {\bm\beta}^{k+1}_{\I^{k+1}} = 0, \\
  &\bm d^{k+1}_{\A^{k+1}} = 0, \; \bm d_{\I^{k+1}}^{k+1} = {\bm X}_{\I^{k+1}}^\top (\bm y - {\bm X}{\bm\beta}^{k+1})/n.
\end{align*}
In \eqref{s2.2}, $\kappa$ weighs the importance of $\bm d^k$ in the $k$th iteration.
It is worth noting that a large $\kappa$ (e.g., $\kappa\rightarrow+\infty$) makes a minor update on the selected set,
which controls only a small number of groups change between the selected set and the unselected set.
Conversely, a small $\kappa$ (e.g., $\kappa\rightarrow 0$) might completely change the selected set.
Motivated by this observation, we consider the update~\eqref{s2.2} as an exchange between the selected set and the unselected set, 
which we call the ``splicing'' procedure \citep{Zhu202014241}.
Therefore, we can select $\kappa$ by determining the splicing size.
We precisely characterize the idea in the next paragraph. 

Suppose the size of the exchanged subset of groups is a positive integer  $C (\leqslant |\A^k|)$. 
The smallest $C$ groups in $\A^k$ and the largest $C$ groups in $\I^k$ are defined as
\begin{align}\label{s2.5}
  \begin{split}
    \mS^k_{C,1}
    &=\{j \in \A^k:\sum_{i \in \A^k} I(\|{\bm\beta}^k_{G_j}+\frac{1}{\kappa}\bm d^k_{G_j}\|_2^2 \geqslant \|{\bm\beta}^k_{G_i}+\frac{1}{\kappa}\bm d^k_{G_i}\|_2^2)\leqslant C\}\\
    &=\{j \in \A^k:\sum_{i \in \A^k} I(\|{\bm\beta}^k_{G_j}\|_2^2 \geqslant \|{\bm\beta}^k_{G_i}\|_2^2)\leqslant C\},
  \end{split}
  \end{align}
  and
  \begin{align}\label{s2.6}
  \begin{split}
    \mS^k_{C,2}
    &=\{j \in \I^k:\sum_{i \in \I^k} I(\|{\bm\beta}^k_{G_j}+\frac{1}{\kappa}\bm d^k_{G_j}\|_2^2 \leqslant \|{\bm\beta}^k_{G_i}+\frac{1}{\kappa}\bm d^k_{G_i}\|_2^2)\leqslant C\}\\
    &=\{j \in \I^k:\sum_{i \in \I^k} I(\|\bm d^k_{G_j}\|_2^2 \leqslant \|\bm d^k_{G_i}\|_2^2)\leqslant C\},
  \end{split}
  \end{align}
where the last equation in \eqref{s2.5} follows from $\bm d^k_{\A^k} = 0$, and the last equation in \eqref{s2.6} follows from ${\bm\beta}^k_{\I^k}=0$.
According to Lemma \ref{sacrifices},
$\mS^k_{C,1}\ (\mS^k_{C,2})$ can be interpreted as the groups in $\A^k\ (\I^k)$ with the smallest (largest) contributions to the decrease of loss: 
\begin{lemma}\label{sacrifices}
\begin{itemize}
  \item [(i)] For any $j \in \A^k$, the contribution to the decrease of $L({\bm\beta}^k)$ by discarding the $j$th group is
  \begin{equation*}\label{s2.3}
    L({\bm\beta}^{\A^k\backslash j})-L({\bm\beta}^k) = \frac{1}{2}\|{\bm\beta}^k_{G_j}\|_2^2,
  \end{equation*}
  where ${\bm\beta}^{\A^k\backslash j}$ is the estimator assigning the $j$th group of ${\bm\beta}^k$ to be zero.
  \item [(ii)] For any $j \in \I^k$, the contribution to the decrease of $L({\bm\beta}^k)$ by adding the $j$th group is
  \begin{align*}\label{s2.4}
    \begin{split}
      L({\bm\beta}^k)-L({\bm\beta}^k+\bm t^k_j) =  \frac{1}{2}\|\bm d^k_{G_j}\|_2^2,
  \end{split}
  \end{align*}
  where ${\bm t}^k_j = \mathop{\mathrm{argmin}}\limits_{{\bm t}_{G_j} \neq 0}L({\bm\beta}^k+\bm t)$, $\bm d^k_{G_j} = {\bm X}_{G_j}^\top(\bm y-{\bm X}{\bm\beta}^k)/n$.
\end{itemize}
\end{lemma}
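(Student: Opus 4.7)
The plan is to prove both parts by a direct expansion of the quadratic loss, exploiting the groupwise orthonormality assumption $\tfrac{1}{n}{\bm X}_{G_j}^\top {\bm X}_{G_j} = \mathrm{\bm I}_{p_j}$ together with the forms of ${\bm\beta}^k$ and $\bm d^k$ dictated by the update rules stated just above the lemma (${\bm\beta}^k_{\I^k}=0$, $\bm d^k_{\A^k}=0$, and $\bm d^k_{\I^k}={\bm X}_{\I^k}^\top (\bm y-{\bm X}{\bm\beta}^k)/n$). No deep inequality or combinatorial argument is needed; the whole content is that, after orthonormalization, the loss decomposes cleanly on each group.

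For part (i), I would observe that ${\bm\beta}^{\A^k\backslash j}$ differs from ${\bm\beta}^k$ only on block $G_j$, so that ${\bm X}{\bm\beta}^{\A^k\backslash j} = {\bm X}{\bm\beta}^k - {\bm X}_{G_j}{\bm\beta}^k_{G_j}$. Expanding
\begin{equation*}
L({\bm\beta}^{\A^k\backslash j}) = \frac{1}{2n}\bigl\|(\bm y-{\bm X}{\bm\beta}^k) + {\bm X}_{G_j}{\bm\beta}^k_{G_j}\bigr\|_2^2
\end{equation*}
produces $L({\bm\beta}^k)$, a cross term $\tfrac{1}{n}(\bm y-{\bm X}{\bm\beta}^k)^\top {\bm X}_{G_j}{\bm\beta}^k_{G_j} = (\bm d^k_{G_j})^\top {\bm\beta}^k_{G_j}$, and a quadratic term which equals $\tfrac{1}{2}\|{\bm\beta}^k_{G_j}\|_2^2$ by orthonormality. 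Since $j\in\A^k$, the relation $\bm d^k_{\A^k}=0$ kills the cross term, leaving exactly the claimed identity.

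For part (ii), the argument is symmetric. Fixing support of $\bm t$ on $G_j$ reduces $L({\bm\beta}^k+\bm t)$ to a strictly convex quadratic in $\bm t_{G_j}$; the first-order condition combined with orthonormality gives the closed-form minimizer
\begin{equation*}
\bm t^k_{G_j} = \bigl({\bm X}_{G_j}^\top {\bm X}_{G_j}\bigr)^{-1}{\bm X}_{G_j}^\top(\bm y-{\bm X}{\bm\beta}^k) = \tfrac{1}{n}{\bm X}_{G_j}^\top(\bm y-{\bm X}{\bm\beta}^k) = \bm d^k_{G_j},
\end{equation*}
using $j\in\I^k$ to identify this with the dual block. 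Plugging back and expanding the squared norm gives a cross term $-\|\bm d^k_{G_j}\|_2^2$ and a quadratic term $+\tfrac{1}{2}\|\bm d^k_{G_j}\|_2^2$, for a net decrease of $\tfrac{1}{2}\|\bm d^k_{G_j}\|_2^2$ as claimed.

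There is essentially no obstacle here; the one thing to keep track of is that two different consequences of the update are used in the two parts. In (i) the cross term vanishes because $\bm d^k_{G_j}=0$ for $j\in\A^k$, while in (ii) the fact that ${\bm\beta}^k$ has no mass on $G_j$ (since $j\in\I^k$) is what lets us treat $\bm t$ as a pure block addition and identify ${\bm X}_{G_j}^\top(\bm y-{\bm X}{\bm\beta}^k)/n$ with $\bm d^k_{G_j}$. Both computations are one-line expansions once orthonormality is invoked, so I would simply present them sequentially as two short displays.
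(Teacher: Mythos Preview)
Your proposal is correct and follows essentially the same approach as the paper: both parts are handled by directly expanding the quadratic loss, using the groupwise orthonormality $\tfrac{1}{n}{\bm X}_{G_j}^\top {\bm X}_{G_j}=\mathrm{\bm I}_{p_j}$ and the vanishing of $\bm d^k_{G_j}$ on $\A^k$ to kill the cross term in (i), and computing the one-block least-squares minimizer in (ii). The only difference is cosmetic packaging of the expansion; the logical content is identical.
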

As Lemma \ref{assoc:k_c} shows, the sizes of $\mS^k_{C,1}$ and $\mS^k_{C,2}$ are related to $\kappa$. 
\begin{lemma}\label{assoc:k_c}
Assume the size of the exchanged subset of groups is $C$. 
For any positive integer $C < |\A^k|$, 
the corresponding range of $\kappa$ in the $k$th iteration is 
\begin{equation*}
  \kappa \in \left(\frac{\min_{j \in \mS^k_{C+1, 2}}\|\bm d^k_{G_j}\|_2}{\max_{i\in \mS^k_{C+1, 1}}\|{\bm\beta}^k_{G_i}\|_2}, \frac{\min_{j \in \mS^k_{C, 2}}\|\bm d^k_{G_j}\|_2}{\max_{i\in \mS^k_{C, 1}}\|{\bm\beta}^k_{G_i}\|_2}\right],
  \end{equation*}
and for $C=|\A^k|$, we have
\begin{equation*}
  \kappa \in \left(0, \frac{\min_{j \in \mS^k_{C, 2}}\|\bm d^k_{G_j}\|_2}{\max_{i\in \A^k}\|{\bm\beta}^k_{G_i}\|_2}\right].
\end{equation*}
\end{lemma}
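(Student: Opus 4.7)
The plan is to exploit two simplifications already used in the derivations of~(5) and~(6): because $\bm d^k_{\A^k}=0$ and ${\bm\beta}^k_{\I^k}=0$, the quantity $\|{\bm\beta}^k_{G_j}+\tfrac{1}{\kappa}\bm d^k_{G_j}\|_2^2$ that drives the update~(3) collapses to $\|{\bm\beta}^k_{G_j}\|_2^2$ when $j\in\A^k$ and to $\tfrac{1}{\kappa^2}\|\bm d^k_{G_j}\|_2^2$ when $j\in\I^k$. The terms within $\A^k$ are therefore independent of $\kappa$, while the terms within $\I^k$ are all rescaled by the same factor $\kappa^{-2}$, so the internal rankings within each block, and hence the sets $\mS^k_{C,1}$ and $\mS^k_{C,2}$, do not depend on $\kappa$; only the interleaving of the $\A^k$-block with the $\I^k$-block does, and this is governed by $\kappa$ alone.

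Next I would read the update~(3) as ``keep the $T$ indices with the largest such values''. Since $|\A^k|=T$, exactly $C$ groups are swapped if and only if every element of $\mS^k_{C,1}$ (the $C$ smallest in $\A^k$) is overtaken by every element of $\mS^k_{C,2}$ (the $C$ largest in $\I^k$), while the $(C+1)$th smallest group in $\A^k$ is \emph{not} overtaken by the $(C+1)$th largest group in $\I^k$. Both conditions reduce to elementary scalar comparisons between $\|{\bm\beta}^k_{G_i}\|_2$ and $\|\bm d^k_{G_j}\|_2/\kappa$: the overtaking condition tightens at the worst pair and rearranges to the upper bound $\kappa\leqslant \min_{j\in\mS^k_{C,2}}\|\bm d^k_{G_j}\|_2/\max_{i\in\mS^k_{C,1}}\|{\bm\beta}^k_{G_i}\|_2$, while the non-overtaking condition at rank $C+1$ produces the strict lower bound with $\mS^k_{C+1,1}$, $\mS^k_{C+1,2}$ in place of $\mS^k_{C,1}$, $\mS^k_{C,2}$, since each of $\mS^k_{C+1,\cdot}\setminus\mS^k_{C,\cdot}$ is a singleton that realizes the relevant extremum. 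For the corner $C=|\A^k|$, the non-overtaking condition is vacuous and the lower bound collapses to $\kappa>0$.

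The step I expect to demand the most care is justifying the strict-versus-weak character of the two endpoints. At the upper endpoint the worst pair achieves equality in the overtaking inequality, and the tie must be resolved so that exactly $C$ swaps occur; at the lower endpoint the analogous tie must instead be resolved so that a $(C+1)$th swap is triggered. Both resolutions follow directly from the ``$\leqslant T$'' convention baked into the indicator-sum definition of $\A^{k+1}$ in~(3), but verifying them formally requires enumerating the tied indices and checking that the remaining (non-tied) comparisons strictly favour the intended side. Beyond this bookkeeping, the proof is a routine sorting argument.
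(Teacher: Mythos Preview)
Your approach is correct and essentially the same as the paper's: both exploit the simplifications $\bm d^k_{\A^k}=0$ and ${\bm\beta}^k_{\I^k}=0$ to reduce the comparison driving~(3) to a threshold between $\max_{i\in\mS^k_{C,1}}\|{\bm\beta}^k_{G_i}\|_2$ and $\min_{j\in\mS^k_{C,2}}\|\bm d^k_{G_j}\|_2/\kappa$, derive the upper bound $\kappa\leqslant d_C/a_{T-C+1}$ for each $C$, and then take the set difference between the ranges for $C$ and $C{+}1$ to isolate the half-open interval. Your treatment is in fact tighter than the paper's in two respects: you state the ``exactly $C$ swaps'' condition as an if-and-only-if and you flag the strict-versus-weak endpoint issue, whereas the paper's displayed chain routes the inequality through $\max_{i\in\A^k}\|{\bm\beta}^k_{G_i}\|_2^2$ (which is not generally implied by a size-$C$ exchange) and does not discuss tie-breaking at all.
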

Obviously, deciding an optimal $C$ is more efficient than tuning $\kappa$.
A natural approach to minimize the loss in problem \eqref{eq:constraint} is choosing $C$ such that loss can decrease after updating the selected set.
We refer to this as ``group splicing''.
By performing group splicing in each iteration, we obtain the group-splicing (GSplicing) algorithm, which we summarize in Algorithm~\ref{alg:gsplicing}. 

\begin{algorithm}[h]
\caption{\label{alg:gsplicing} \textbf{G}roup-\textbf{Splicing} (GSplicing) algorithm}
\begin{algorithmic}[1]
\REQUIRE ${\bm X},\ \bm y,\ \{G_j\}_{j=1}^J,\ T, \ C_{\max},\ \pi_T, \ \A^0$.
\STATE Initialize $k=0$ and solve primal variable ${\bm\beta}^k$ and dual variable $\bm d^k$
\vspace{-0.3cm}
\begin{align*}
  &{\bm\beta}_{\A^k}^k = ({\bm X}_{\A^k}^\top {\bm X}_{\A^k})^{-1}{\bm X}_{\A^k}^\top \bm y,\ {\bm\beta}_{\I^k}^k = 0,\\
  &\bm d^k_{\I^k} = {\bm X}_{\I^k}^\top(\bm y-{\bm X}{\bm\beta}^k)/n,\ \bm d_{\A^k}^k = 0.
\end{align*}
\vspace{-1cm}
\WHILE{$\A^{k+1} \neq \A^k, $}
\STATE Compute $L=\frac{1}{2n}\|{\bm y-{\bm X}{\bm\beta}^k}\|_2^2$ and update $\mS_1^k, \mS_2^k$ by
\vspace{-0.1cm}
\begin{align*}
  &\mS_1^k = \{j \in \mathcal{A}^k: \sum\limits_{i\in \mathcal{A}^k} I(\|{{\bm\beta}_{G_j}^k}\|_2^2 \geqslant \|{{\bm\beta}_{G_i}^k}\|_2^2) \leqslant C_{\max}\},\\
  &\mS_2^k = \{j \in \mathcal{I}^k: \sum\limits_{i\in \mathcal{I}^k} I(\|{\bm d_{G_j}^k}\|_2^2 \leqslant \|{\bm d_{G_i}^k}\|_2^2) \leqslant C_{\max}\}.
\end{align*}
\vspace{-0.8cm}
\FOR {$C=C_{\max},\ldots, 1, $}
\STATE Let $\tilde{\A}^k_C=(\mathcal{A}^k\backslash \mS_1^k)\cup \mS_2^k\ ,\ \tilde{\I}^k_C = (\mathcal{I}^k\backslash \mS_2^k)\cup \mS_1^k$ and solve
\vspace{-0.1cm}
\begin{align*}
  &\tilde{\bm\beta}_{\tilde{\A}^k_C}=({\bm X}_{\tilde{\A}^k_C}^\top {\bm X}_{\tilde{\A}^k_C})^{-1}{\bm X}_{\tilde{\A}^k_C}^\top \bm y,\ \tilde {\bm\beta}_{\tilde{\I}^k_C}=0,\\
  &\tilde {\bm d} = {\bm X}^\top (\bm y-{\bm X}\tilde{{\bm\beta}})/n,\ \tilde{L}=\frac{1}{2n}\|{\bm y-{\bm X}\tilde{{\bm\beta}}}\|_2^2.
\end{align*}
\vspace{-0.8cm}
\IF {$L - \tilde{L} < \pi_T, $}
\STATE Denote $(\tilde{\A}^k_C, \tilde{\I}^k_C, \tilde{{\bm\beta}}, \tilde{\bm d}) \text{ as } (\A^{k+1}, \I^{k+1}, {\bm\beta}^{k+1}, \bm d^{k+1})$ and break.
\ELSE
\STATE Update $\mS_1^k\ \text{and}\ \mS_2^k$: 
\vspace{-0.3cm}
\begin{align*}
  \mS_1^k = \mS_1^k\backslash \mathop{\mathrm{argmax}}\limits_{i \in \mS_1^k} \{\|{{\bm\beta}_{G_i}^k}\|_2^2\}, \; \mS_2^k = \mS_2^k\backslash \mathop{\mathrm{argmin}}\limits_{i \in \mS_2^k} \{\|{\bm d_{G_i}^k}\|_2^2\}.
\end{align*}
\vspace{-1cm}
\ENDIF
\ENDFOR
\ENDWHILE
\ENSURE $(\A^{k+1}, \I^{k+1}, {\bm\beta}^{k+1}, \bm d^{k+1})$.
\end{algorithmic}
\end{algorithm}
We provide details about the input parameters in Algorithm~\ref{alg:gsplicing}.
The first parameter $C_{\max}$ is a positive integer no more than $T$. It controls the maximum exchanged size in Algorithm \ref{alg:gsplicing}. 
The simulation results in Section \ref{cmax} suggest that
$C_{\max} = 2$ allows Algorithms~\ref{alg:sgsplicing} and \ref{alg:ggsplicing} to obtain high-quality solutions in less runtime.
The second parameter $\pi_T$ is a threshold related to the given model size $T$. 
It prevents redundant splicings and accelerates the convergence of Algorithm \ref{alg:gsplicing}.
According to condition (C5) in Section \ref{sec:theory}, we set $\pi_T = 0.1Tp_{\max} \log p \log (\log n)/n$.
The last one is the initial selected set $\A^0$. 
Typically, we choose the $T$ largest elements of set $\{\|{\bm X}_{G_j}^\top \bm y\|_2^2, j\in \mS\}$ as $\A^0$.
Notably, Algorithm \ref{alg:gsplicing} terminates in a finite number of iterations
since the loss decreases at least $\pi_T$ in each iteration, and the choices of the selected set with fixed model size $T$ are finite.

\subsection{Adaptive group-splicing algorithm}\label{sec:agsplicing}
It is crucial to decide the optimal model size for BSGS, which is usually unknown in practice. 
A natural idea is to take the model size $T$ as a tuning parameter and run GSplicing algorithm along a sequence of $T$.
Indeed, we can set the sequence from $T=1$ to $T=T_{\max}$, where $T_{\max}$ is the upper bound of the potential model size. 
Then, we can combine some model selection techniques, 
such as information criterion, to determine the optimal model size.
One popular choice of the information criterion is the Bayesian information criterion (BIC, \citet{schwarz1978estimating}). Recall that the BIC supported on $\cup_{j \in \hat \A}G_j$ is defined as
\begin{equation*}
  \text{BIC}(\hat \A) = n\log \left(\frac{\| {\bm y}-{\bm X}\hat{\bm\beta}\|_2^2}{n}\right) +  \#\{\hat \A\}\log n ,
  \end{equation*}
where $\hat{\bm\beta}$ is the least-squares estimator given the selected set $\hat \A$. 
However, for the high-dimensional data, BIC tends to identify a model with numerous spurious predictors because of its light penalty on the model complexity \citep{chen2008ebic}.
To adapt to the high dimensionality with a group structure, 
we propose a novel information criterion for group selection named group information criterion (GIC).
We define GIC supported on $\cup_{j \in \hat \A}G_j$ as
\begin{equation*}
\text{GIC}(\hat \A) = n\log L(\hat {\bm\beta}) +  \#\{\hat \A\}\log J\log(\log n) .
\end{equation*}
GIC considers the penalty for the number of groups $J$ as $\log J$, which is adjusted adaptively by the sample size $n$, i.e., the term $\log (\log n)$.
Meanwhile, $\log (\log n)$ diverges to infinity at a slow rate to prevent underfitting. 
Employing GIC, we design a sequential group-splicing algorithm, which is summarized in Algorithm \ref{alg:sgsplicing}.
\begin{algorithm}[htbp]
  \caption{\label{alg:sgsplicing}\textbf{S}equential \textbf{G}roup-\textbf{Splicing} (SGSplicing) algorithm}
  \begin{algorithmic}[1]
  \REQUIRE ${\bm X},\ \bm y,\ \{G_j\}_{j=1}^J,\ T_{\max}, \ C_{\max}.$
  \STATE $\hat \A_{0} = \{\}, \hat{{\bm\beta}}_0 = \mathbf{0}. $
  \FOR {$T=1,\ldots, T_{\max},$}
  \STATE $\A^0_T = \hat\A_{T-1} \cup \mathop{\mathrm{argmax}}\limits_{j \in \hat \A_{T-1}^c} \{ \|{\bm X}^\top_{G_j}(\bm y-{\bm X}\hat{\bm\beta}_{T-1})\|_2^2\}.$
  \STATE $(\hat \A_{T}, \hat \I_T, \hat{\bm\beta}_{T}, \hat {\bm d}_T) =$  GSplicing$({\bm X},\ \bm y,\ \{G_j\}_{j=1}^J,\ T,\ C_{\max},\ \pi_T,\ \A_T^0)$.
  \STATE $\text{GIC}_T =\text{GIC}(\hat \A_{T})$.
  \ENDFOR
  \STATE $T^* = \mathop{\mathrm{argmin}}\limits_{T} \{ \text{GIC}_T \}$.
  \ENSURE $(\hat \A_{T^*},\hat \I_{T^*}, \hat{\bm\beta}_{T^*},\hat {\bm d}_{T^*})$.
  \end{algorithmic}
\end{algorithm}

Algorithm~\ref{alg:sgsplicing} would miss the true model size $s$ if $T_{\max}$ is small,
while runtime would visibly increase if $T_{\max}$ is proportional to $J$. 
From condition (C6) in Section \ref{sec:theory}, we suggest $T_{\max} = [\frac{n}{p_{\min} \log p}]$.
\begin{figure}[htbp]
  \centering
  \includegraphics[scale = 0.6]{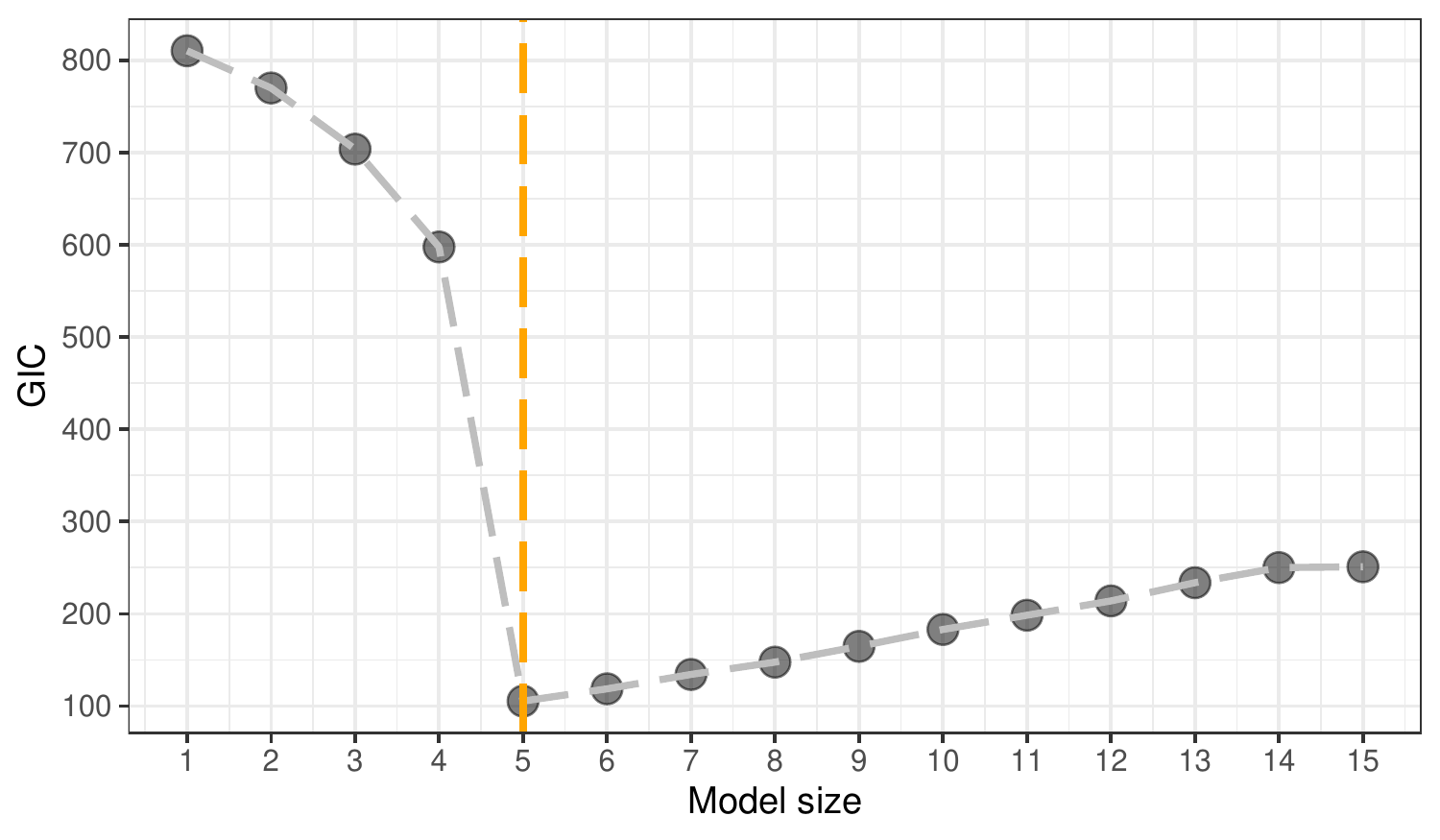}
  \caption{
    ``Model size v.s. GIC'' plot. The $x$-axis is model size,
    and the $y$-axis is GIC's value recorded in SGSplicing algorithm.
    The entries of design matrix ${\bm X}$ and error term ${\bm \varepsilon}$ are both i.i.d. sampled from $\mathcal{N}(0, 1)$. The shape of ${\bm X}$ is $200 \times 600$.
    Take the three adjacent variables as one group. Set $s^*=5$ and $T_{\max}=15$. All nonzero coefficients are equal to 2.
    The orange vertical dash line indicates the true model size.}\label{fig1}
\end{figure}

A typical GIC path of Algorithm~\ref{alg:sgsplicing} is presented in Figure~\ref{fig1}, 
from which we see that GIC decreases from $T=1$ to $T=5$, but increases as $T$ exceeds 5.
In other words, the GIC path of the SGSplicing algorithm is a strictly unimodal function achieving the minimum at the true model size $T = 5$.
Motivated by this observation, we consider a golden-section search technique \citep{k1953} to obtain the minimum of the GIC path and determine the optimal model size $T$.
We summarize the golden-section group-splicing algorithm in Algorithm~\ref{alg:ggsplicing}. 
Notably, by virtue of the golden-section search technique, Algorithm~\ref{alg:ggsplicing} runs GSplicing algorithm $O(\log_{0.618^{-1}}T_{\max})$ times rather than $O(T_{\max})$ times in Algorithm~\ref{alg:sgsplicing}. 
\begin{center}
\begin{algorithm}[htbp]
  \caption{\textbf{G}olden-section \textbf{G}roup-\textbf{Splicing} (GGSplicing) algorithm}\label{alg:ggsplicing}
  \begin{algorithmic}[1]
  \REQUIRE ${\bm X},\ \bm y,\ \{G_j\}_{j=1}^J,\ T_{\min},\ T_{\max},\ C_{\max}.$
 \STATE Initialize $T_1,\ T_2$ by $T_1 = [0.618 \times T_{\min} + 0.382 \times T_{\max}], \ T_2 = [0.382 \times T_{\min} + 0.618 \times T_{\max}]$.
 \vspace{-0.5cm}
  \STATE Run GSplicing algorithm given the model size $T_1$ and $T_2$.
  \STATE Compute GIC and denote them as $\text{GIC}_{T_1}$ and $\text{GIC}_{T_2}$.
  \WHILE {$T_1 \neq T_2,$}
  \IF {$\text{GIC}_{T_1} \leqslant \text{GIC}_{T_2},$}
  \STATE $T_{\max} = T_2,\ T_2 = T_1,\ T_1 = [0.618 \times T_{\min} + 0.382 \times T_{\max}], \text{GIC}_{T_2} = \text{GIC}_{T_1}$.
  \STATE Run GSplicing algorithm given the model size $T_1$ and compute $\text{GIC}_{T_1}$.
  \ELSE
  \STATE $T_{\min} = T_1,\ T_1 = T_2,\ T_2 = [0.382 \times T_{\min} + 0.618 \times T_{\max}], \text{GIC}_{T_1} = \text{GIC}_{T_2}$.
  \STATE Run GSplicing algorithm given the model size $T_2$ and compute $\text{GIC}_{T_2}$.
  \ENDIF
  \ENDWHILE
  \STATE $(\hat{\mathcal{A}}, \hat \I, \hat{\bm\beta}, \hat {\bm d}) =$ GSplicing(${\bm X}, \bm y, \{G_j\}_{j=1}^J, T_1, C_{\max}, \pi_{T_1}, \A_{T_1}^0$).
  \ENSURE $(\hat{\mathcal{A}}, \hat \I, \hat {\bm\beta}, \hat {\bm d})$.
  \end{algorithmic}
  \end{algorithm}
\end{center}

\section{Theoretical properties}\label{sec:theory}
In this section, we study the theoretical properties of our algorithms.
We first present the statistical properties of our algorithms in Section~\ref{subsec:statistical}. 
Next, in Section~\ref{subsec:convergence_rate}, we analyze the proposed algorithms from a computational viewpoint. 
Moreover, we provide the $\ell_2$ error bounds of the estimator in Appendix B of the online supplement.
Before formally presenting the theoretical properties, we discuss some of the technical conditions which our analysis requires.

The first condition constrains the behavior of the error term:
\begin{itemize}
  \item [(C1)] The random errors ${\bm \varepsilon}_1,\ldots,{\bm \varepsilon}_n$ are i.i.d with mean zero and sub-Gaussian tails, that is, there exists a positive number $\sigma$ such that $P(|{\bm \varepsilon}_i|>z) \leqslant 2\exp(-z^2/\sigma^2)$, for all $z \geqslant 0$.
\end{itemize} 
Condition (C1) assumes the probability distribution of error terms ${\bm \varepsilon}$ satisfies the sub-Gaussian distribution, which is slightly weaker than the standard normality.
The sub-Gaussian condition (C1) is widely used in statistical literature to analyze the non-asymptotic bounds under high-dimensional settings \citep{zhang2010, H2018, wainwright2019high}.

As a mild identifiability condition for the linear model, the sparse Riesz condition (SRC) 
is used to investigate the theory of Lasso \citep{Z2008} and MCP \citep{zhang2010}.
The design matrix ${\bm X}$ satisfies SRC with order $\tau$ and spectrum bounds $0 < c_-(\tau) \leqslant c_+(\tau) < \infty$ if
\begin{equation*}
  c_-(\tau)\|\bm u\|_2^2\leqslant\dfrac{\|{\bm X} \bm u \|_2^2}{n}\leqslant c_+(\tau)\|\bm u\|_2^2 ,\ \forall \bm u \neq 0, \bm u \in \mathbb{R}^{p}\ \text{with}\  \|\bm u\|_0 \leqslant \tau.
\end{equation*}
Recently, SRC has been extended to group selection, serving as an indispensable ingredient for the analysis of selection consistency of adaptive GLasso and GMCP \citep{W2010, H2012}.
The design matrix ${\bm X}$ satisfies group SRC (GSRC) with order $\tau$ and spectrum bounds $0 < c_*(\tau) \leqslant c^*(\tau) < \infty$ if:
\begin{equation*}
  c_*(\tau)\|\bm u\|_2^2\leqslant\dfrac{\|{\bm X}_{\A} \bm u \|_2^2}{n}\leqslant c^*(\tau)\|\bm u\|_2^2 ,\ \forall \bm u \neq 0, \bm u \in \mathbb{R}^{\#\{\A\}}\ \text{with}\  |\A| \leqslant \tau.
\end{equation*}
GSRC is a less stringent assumption compared to the standard SRC.
Suppose the design matrix ${\bm X}$ satisfies SRC with order $N$ and spectrum bounds $0 < c_{-}(N) \leqslant c_{+}(N) < \infty$, 
where $N$ is the summation of the largest $\tau$ group sizes. 
By definition, ${\bm X}$ satisfies SRC for all $N$-sparse vectors with at most $\tau$ nonzero groups. This implies ${\bm X}$ satisfies GSRC with order $\tau$ and spectrum bounds: $c_{*}(\tau) = c_{-}(N)$ and $c^{*}(\tau) = c_{+}(N)$.
Consequently, the probability of satisfying SRC is smaller than that of satisfying GSRC, which illustrates the advantage of GSRC over SRC for group selection.
Our second condition is formally stated as:
\begin{itemize}
  \item [(C2)] ${\bm X}$ satisfies GSRC with order $2T$ and spectrum bounds $\{c_*(2T) , c^*(2T) \}$.
\end{itemize}
Condition (C2) requires that for any subset of groups $\A$ of size $|\A|\leqslant 2T$, the sub-matrix ${\bm X}_{\A}$ has full column rank, which is consistent with the assumption in GOMP \citep{ben2011near}.
If (C2) holds, the spectrum of the off-diagonal sub-matrices of ${\bm X}^\top {\bm X}/n$ can be bounded by a constant $\omega_T$. Specifically, $\omega_T$ is defined as the smallest number satisfying that:
\begin{equation*}
  \omega_T \|\bm u\|_2 \geqslant \dfrac{\|{\bm X}_{\A}^\top {\bm X}_{\mB} \bm u\|_2}{n},\ \forall  \bm u \neq 0, \bm u\in \mathbb{R}^{\#\{\mB\}}\ \text{with}\  |\A|\leqslant T, |\mB| \leqslant T, \text{and}\ \A \cap \mB = \emptyset.
\end{equation*}
Let $\delta_{T} = \max\{1-c_*(2T), c^*(2T)-1\}$. The constant $\delta_{T}$ is closely related to the block restricted isometry property (block RIP) constant for ${\bm X}$ \citep{eldar2009robust}.
In Lemma 4 of the online supplement, we show that $\omega_T$ can be bounded by $\delta_{T}$.

The third condition requires that:
\begin{itemize}
  \item [(C3)] $0\leqslant\mu_T<1$, where $\mu_T = \dfrac{8c^*(T)\left((1+\eta)\frac{\omega_T}{c_*(T)}(1+\frac{\omega_T}{c_*(T)})\right)^2}{(1-\eta)\left(c_*(T)-\frac{\omega_T^2}{c_*(T)}\right)}$ is a constant depending on $T$ for some constant $0 < \eta < 1$.
\end{itemize}

Condition (C3) restricts the correlation across the groups. In particular, when groups are uncorrelated, it can be verified that $\omega_T = 0$ and $c_*(2T)=c^*(2T)=1$.
For this ideal case, $\mu_T$ becomes zero.
As the correlation across the groups increases,
the spectrum bounds of GSRC expand away from 1, and $\omega_T$ increases. Consequently, $\mu_T$ increases away from zero.
To ensure (C3) holds, a sufficient condition is $\delta_{T} \leqslant 0.188$, i.e., $c_*(2T) \geqslant 0.812,\ c^*(2T)\leqslant 1.188$. 

For simplicity of notation, in what follows, we denote $\vartheta = \min\limits_{j\in \A^*}\|{\bm\beta}_{G_j}\|_2^2$ as the minimum group signal. Denote $\delta_1=O(p\exp\{-nC_1\vartheta/s^*p_{\max}\})$ and $\delta_2=O(p\exp\{-nC_2\vartheta/Tp_{\max}\})$  for some positive constants $C_1$ and $C_2$ depending on the spectrum bounds in GSRC. 
Finally, we require some necessary conditions for the magnitude of ${\bm\beta}^*$ and the threshold $\pi_T$:
\begin{itemize}
  \item [(C4)] The minimum group signal $\vartheta$ satisfies $\frac{Tp_{\max}\log p\log(\log n)}{n\vartheta} = o(1)$.
  \item [(C5)] The threshold in Algorithm \ref{alg:gsplicing} satisfies $\pi_T=O(\frac{Tp_{\max}\log p\log(\log n)}{n})$.
\end{itemize}
Condition (C4) requires a lower bound of the minimum group signal. 
It is a common and necessary assumption to achieve selection consistency of group selection \citep{W2010,H2012,GIGA}.
Compared with the standard splicing approach \citep{Zhu202014241}, a main advantage of GSplicing is that the minimum group signal, rather than the minimum individual signal, needs to be lower-bounded.
Condition (C5) assumes the threshold $\pi_T$ grows at an appropriate rate
since a large $\pi_T$ will miss the effective iterations. On the other hand, a small $\pi_T$ will increase the number of iterations as Corollary \ref{cor:step} shows.

\subsection{Statistical properties}\label{subsec:statistical}
In Section~\ref{subsec:support}, we first show that when $T \geqslant s^*$, the solution of GSplicing covers the true subset of groups $\A^*$ with high probability.
We refer to this property as the support recovery of GSplicing.
Next, in Section~\ref{subsec:selection}, we investigate the selection consistency of SGSplicing, that is, without any prior knowledge of $s^*$, SGSplicing is able to identify $\A^*$ correctly with high probability.

\subsubsection{Support recovery}\label{subsec:support}
\begin{theorem}\label{thm:no-false-exclusion}
Denote $(\hat \A, \hat \I, \hat {\bm\beta}, \hat {\bm d})$ as the solution of Algorithm 1. If (C1)-(C5) hold,
when $T \geqslant s^*$, we have
\begin{equation*}
P(\hat \A \supseteq \A^*) \geqslant 1 - \delta_1-\delta_2,
\end{equation*}
and, specifically, if $T=s^*$, we have
\begin{equation*}
  P(\hat \A = \A^*) \geqslant 1 - \delta_1-\delta_2.
  \end{equation*}
\end{theorem}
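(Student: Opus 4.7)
The plan is to prove Theorem~\ref{thm:no-false-exclusion} by a contradiction argument anchored at the termination condition of Algorithm~\ref{alg:gsplicing}, combined with a high-probability comparison of the per-group ``sacrifices'' formalized in Lemma~\ref{sacrifices}. First I would characterize the output $(\hat{\A},\hat{\I},\hat{{\bm\beta}},\hat{\bm d})$ as a stationary configuration: the outer \textbf{while} loop exits only if $\A^{k+1}=\A^k$, which can happen only when the inner \textbf{for} loop hits the break branch at $C=1$. Thus the single-pair swap that exchanges the smallest-$\|\hat{{\bm\beta}}_{G_i}\|_2$ group in $\hat{\A}$ with the largest-$\|\hat{\bm d}_{G_j}\|_2$ group in $\hat{\I}$, followed by a least-squares refit, must fail to decrease the loss by more than $\pi_T$. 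I then assume for contradiction that $\A^*\not\subseteq\hat{\A}$. Since the algorithm preserves $|\hat{\A}|=T\geqslant s^*$, both $\bar{\A}:=\A^*\setminus\hat{\A}$ and $\bar{\I}:=\hat{\A}\setminus\A^*$ are nonempty, with $|\bar{\A}|\leqslant|\bar{\I}|$.

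Second, on a high-probability event I would establish two uniform quantitative bounds: every spurious group $i\in\bar{\I}$ satisfies $\|\hat{{\bm\beta}}_{G_i}\|_2^2\lesssim p_{\max}\log p/n$, while every missed true group $j\in\bar{\A}$ satisfies $\|\hat{\bm d}_{G_j}\|_2^2\gtrsim\vartheta$. To obtain these, I decompose $\hat{{\bm\beta}}_{\hat{\A}}=({\bm X}_{\hat{\A}}^\top{\bm X}_{\hat{\A}})^{-1}{\bm X}_{\hat{\A}}^\top({\bm X}_{\A^*}{\bm\beta}^*_{\A^*}+{\bm \varepsilon})$ into a signal part and a noise part. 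The signal leakage onto $\bar{\I}$ coordinates is controlled by GSRC in (C2) together with the cross-group spectral bound $\omega_T$, while the noise part is controlled by a union bound over the sub-Gaussian tails of ${\bm X}_{G_j}^\top{\bm \varepsilon}/n$ afforded by (C1). A parallel decomposition of $\hat{\bm d}_{G_j}$ for $j\in\bar{\A}$ shows $\hat{\bm d}_{G_j}\approx{\bm\beta}^*_{G_j}+\text{(perturbation)}$, where the perturbation is of lower order than $\vartheta$ by (C4). Aggregating the sub-Gaussian tails across the at most $p$ coordinate directions yields exactly the declared error probabilities $\delta_1=O(p\exp\{-nC_1\vartheta/(s^*p_{\max})\})$ and $\delta_2=O(p\exp\{-nC_2\vartheta/(Tp_{\max})\})$.

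Third, I combine these bounds with Lemma~\ref{sacrifices} to force the contradiction. The smallest-$\|\hat{{\bm\beta}}_{G_\cdot}\|_2$ index $i^*\in\hat{\A}$ then lies at level at most $O(p_{\max}\log p/n)$, and the largest-$\|\hat{\bm d}_{G_\cdot}\|_2$ index $j^*\in\hat{\I}$ lies at level at least $\Omega(\vartheta)$, because $\bar{\I}$ and $\bar{\A}$ are nonempty and furnish witnesses attaining these extremes. By Lemma~\ref{sacrifices}, dropping $i^*$ increases the loss by $\tfrac{1}{2}\|\hat{{\bm\beta}}_{G_{i^*}}\|_2^2$, adding $j^*$ decreases it by $\tfrac{1}{2}\|\hat{\bm d}_{G_{j^*}}\|_2^2$, the least-squares refit after the swap only helps further, and the cross-coupling error from combining the two moves is controlled by $\omega_T/c_*(T)$ via (C3). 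Hence the net loss decrease is $\gtrsim\vartheta - O(p_{\max}\log p/n)$, which by (C4) and (C5) strictly exceeds $\pi_T=O(Tp_{\max}\log p\log(\log n)/n)$. This contradicts the break at $C=1$, so $\A^*\subseteq\hat{\A}$ with probability at least $1-\delta_1-\delta_2$. When $T=s^*$, the equality $\hat{\A}=\A^*$ follows automatically from $|\hat{\A}|=T=s^*=|\A^*|$ together with containment.

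The main obstacle is obtaining the bounds in the second step uniformly over the data-dependent set $\hat{\A}$: the estimator $\hat{{\bm\beta}}_{\hat{\A}}$ is not a least-squares fit on a fixed subset, so a single concentration inequality does not suffice. I would resolve this by decoupling the two random ingredients, bounding $\sup_j\|{\bm X}_{G_j}^\top{\bm \varepsilon}/n\|_2$ by a sub-Gaussian supremum over $p$ coordinates and $\max_{|\A|\leqslant 2T}\|({\bm X}_\A^\top{\bm X}_\A)^{-1}\|_{\mathrm{op}}$ deterministically via (C2), and then instantiating these generic bounds at the random index $\hat{\A}$. The associated union-bound factors are precisely what gets absorbed into the polynomial $p$ factor in the definitions of $\delta_1$ and $\delta_2$.
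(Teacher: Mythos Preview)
Your contradiction strategy is sound, and the high-level structure matches the paper's argument. However, your second step contains a genuine error that breaks the proof as written.

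You claim that ``every spurious group $i\in\bar{\I}$ satisfies $\|\hat{{\bm\beta}}_{G_i}\|_2^2\lesssim p_{\max}\log p/n$''. This is false whenever $\omega_T>0$. Writing the profile estimator on the spurious block,
\[
\hat{\bm\beta}_{\A_2}=\bigl({\bm X}_{\A_2}^\top(\bm{\mathrm I}_n-\bm H_{\A_1}){\bm X}_{\A_2}\bigr)^{-1}{\bm X}_{\A_2}^\top(\bm{\mathrm I}_n-\bm H_{\A_1})\bigl({\bm X}_{\I_1}{\bm\beta}^*_{\I_1}+{\bm \varepsilon}\bigr),
\]
the signal-leakage part has norm of order $\tfrac{\omega_T}{c_*(T)}\|{\bm\beta}^*_{\I_1}\|_2$, not $O(\sqrt{p_{\max}\log p/n})$. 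Since $\|{\bm\beta}^*_{\I_1}\|_2\geqslant\sqrt{\vartheta}$ and (C4) forces $\vartheta\gg p_{\max}\log p/n$, the spurious fitted coefficients can be arbitrarily large relative to the noise scale. Your conclusion ``net loss decrease is $\gtrsim\vartheta-O(p_{\max}\log p/n)$'' therefore does not follow. The dual bound on missed true groups is similarly contaminated: for $j\in\I_1$, $\hat{\bm d}_{G_j}$ has a cross-term $-(\omega_T+\omega_T^2/c_*(T))\|{\bm\beta}^*_{\I_1\setminus\{j\}}\|_2$ that can overwhelm the own-group signal $\|{\bm\beta}^*_{G_j}\|_2$ if several large groups are missed simultaneously.

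The paper resolves this by working at splicing size $C=|\I_1|$ rather than $C=1$, and by comparing both the pre-swap and post-swap losses directly \emph{in units of} $\|{\bm\beta}^*_{\I_1}\|_2^2$. The crucial device is a comparison lemma (Lemma~\ref{lem:A} in the appendix): because the groups $\A_{12}$ swapped out of $\hat\A$ have \emph{smaller} fitted norm than the spurious groups $\A_{21}$ kept in, and because $\|\hat{\bm\beta}_{\A_{21}}\|_2\lesssim\tfrac{\omega_T}{c_*(T)}\|{\bm\beta}^*_{\I_1}\|_2$, one gets $\|{\bm\beta}^*_{\A_{12}}\|_2\lesssim\tfrac{\omega_T}{c_*(T)}\|{\bm\beta}^*_{\I_1}\|_2$; an analogous comparison on the $\hat{\bm d}$ side gives $\|{\bm\beta}^*_{\I_{12}}\|_2\lesssim\tfrac{\omega_T+\omega_T^2/c_*(T)}{c_*(T)}\|{\bm\beta}^*_{\I_1}\|_2$. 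Hence the post-swap missed set $\A_{12}\cup\I_{12}$ carries only a $\mu_T$-fraction of the signal energy, and the loss drop is at least $(1-\mu_T)(1-\eta)\bigl(c_*(T)-\omega_T^2/c_*(T)\bigr)\|{\bm\beta}^*_{\I_1}\|_2^2/2\geqslant\text{const}\cdot\vartheta>\pi_T$. Condition~(C3) is calibrated precisely to make this multiplicative contraction work; your absolute bound bypasses the role of $\mu_T$ altogether. To repair your argument you would need to replace the incorrect absolute bounds with these relative ones and either move to $C=|\I_1|$ or carry out a more delicate pigeonhole at $C=1$ that still respects the $\|{\bm\beta}^*_{\I_1}\|_2$ scaling.
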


\noindent \textbf{Proof sketch}\textit{
Assume the output $\hat \A$ misses several relevant groups.
(C2) serves as a useful tool to bound the gap between the current loss and the loss after group splicing.
We can prove that, with probability at least $1-\delta_1-\delta_2$, $L(\hat {\bm\beta})$ decreases more than $\frac{(1-\mu_T)(1-\eta)(c_*(T)-\frac{\omega_T^2}{c_*(T)})}{2}\vartheta$ after group splicing.
Here we use Hoeffding's inequality to control the behavior of the sub-Gaussian distributed ${\bm \varepsilon}$, which defines the specific form of the probabilities $\delta_1$ and $\delta_2$.
Combining (C3)-(C5), the decrease of $L(\hat {\bm\beta})$ exceeds the threshold $\pi_T$, which contradicts the convergence criterion of Algorithm \ref{alg:gsplicing}.
\indent As a result, we can conclude that, with high probability, $\hat \A$ includes all groups in $\A^*$, 
and, specifically, $\hat \A = \A^*$ when $T =s^*$.
\qedsymbol}

Theorem~\ref{thm:no-false-exclusion} shows that
the output of Algorithm~\ref{alg:gsplicing} is a no-false-exclusion estimator with high probability.
In comparison with splicing \citep{Zhu202014241}, GSplicing recovers the true subset of groups with weaker assumptions on identifiability conditions and signals,
which shows that GSplicing is superior to the standard splicing for capturing group structure.
\begin{corollary}\label{coro:support}
Assume the conditions in Theorem~\ref{thm:no-false-exclusion} hold. If  $T \geqslant s^*$, we have
  $
    \lim\limits_{n\rightarrow \infty} P(\hat \A \supseteq \A^*) = 1,
  $
and, specifically, if $T=s^*$, we have
  $
    \lim\limits_{n\rightarrow \infty} P(\hat \A = \A^*) = 1.
  $
\end{corollary}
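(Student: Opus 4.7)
The plan is to derive the corollary as an immediate asymptotic consequence of the non-asymptotic bound in Theorem~\ref{thm:no-false-exclusion}, so the only substantive task is to verify that both tail terms $\delta_1$ and $\delta_2$ vanish as $n\rightarrow\infty$ under condition (C4). I would not re-derive the probability inequality; instead I would invoke Theorem~\ref{thm:no-false-exclusion} as a black box and then argue in limits.

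First, I would recall that $\delta_1 = O(p\exp\{-nC_1\vartheta/(s^* p_{\max})\})$ and $\delta_2 = O(p\exp\{-nC_2\vartheta/(T p_{\max})\})$, and rewrite each as $\exp\{\log p - nC_i\vartheta/(\cdot\, p_{\max})\}$ up to constants. To show these go to zero it suffices to show that the negative exponent dominates $\log p$, i.e. that $nC_i\vartheta/(\cdot\, p_{\max})-\log p\to+\infty$.

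Second, I would apply condition (C4), which states $\frac{T p_{\max}\log p\log(\log n)}{n\vartheta}=o(1)$. Rearranging, $\frac{n\vartheta}{T p_{\max}\log p}\to\infty$ (in fact at rate $\log(\log n)$), so $\frac{n C_2\vartheta}{T p_{\max}}\gg \log p$ and hence $\delta_2\to 0$. Since $T\geqslant s^*$ throughout the regime of interest, the same inequality gives $\frac{n C_1\vartheta}{s^* p_{\max}}\geqslant \frac{n C_1\vartheta}{T p_{\max}}\gg \log p$, so $\delta_1\to 0$ as well. Combining, $1-\delta_1-\delta_2\to 1$, which yields $P(\hat\A\supseteq\A^*)\to 1$ for $T\geqslant s^*$ and $P(\hat\A=\A^*)\to 1$ when $T=s^*$ directly from the two parts of Theorem~\ref{thm:no-false-exclusion}.

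There is essentially no obstacle here beyond bookkeeping: the only subtlety is ensuring that (C4) is strong enough to dominate $\log p$ in both exponents simultaneously, which is exactly the point of including the extra $\log(\log n)$ factor in (C4). Because (C4) is stated in terms of $T$ (the input model size) rather than $s^*$, one should note explicitly that the bound for $\delta_1$ uses the monotonicity $s^*\leqslant T$ to transfer the asymptotic rate from $T$ to $s^*$; this is the only place any care is needed. No additional probabilistic argument is required.
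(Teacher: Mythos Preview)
Your proposal is correct and matches the paper's treatment: the paper states Corollary~\ref{coro:support} as an immediate consequence of Theorem~\ref{thm:no-false-exclusion} without writing out a proof, and your argument---using (C4) to show $\delta_1,\delta_2\to 0$, with the monotonicity $s^*\leqslant T$ to transfer the rate to the $\delta_1$ exponent---is exactly the intended one-line justification.
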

Corollary~\ref{coro:support} guarantees that
the solution of GSplicing includes all relevant groups with probability converging to one.
A similar support recovery property is held for the adaptive GLasso \citep{huang2010, W2010}.

\subsubsection{Selection consistency}\label{subsec:selection}
We require some reasonable assumptions on $p_{\max}$,  $\#\{\A^*\}$, and the maximum number of variables selected into the model $\#\{\hat\A_{T_{\max}}\}$.
We summarize them as conditions (C6) and (C7) below.
\begin{itemize}
  \item [(C6)] $\frac{\#\{\A^*\}\log J\log(\log n)}{n} = o(1)$ and $\frac{\#\{\hat\A_{T_{\max}}\}\log p}{n}=o(1)$.
  \item[(C7)] The maximum group size $p_{\max}$ satisfies $p_{\max} = o(J^{\log(\log n)})$.
\end{itemize}
Condition (C6) imposes a looser constraint on sparsity than the standard splicing.
Expressly, with a known group structure, GSplicing can guarantee a perfect recovery for a potentially higher sparsity level.
A similar result is obtained in GOMP \citep{BOMP}.
Condition (C7) assumes the logarithm of $p_{\max}$ grows at a slower rate than the penalty term in GIC, i.e., $\log J \log (\log n)$.
When $n$ and $J$ are large, a large $p_{\max}$ is allowed into the model.

\begin{theorem}\label{thm:consis}
  Denote $(\hat \A, \hat \I, \hat {\bm\beta}, \hat {\bm d})$ as the solution of Algorithm 2.  
  Assume (C2)-(C4) hold with $T_{\max}$ and (C1), (C5)-(C7) hold. For a sufficiently large $n$, we have
  \begin{equation*}
    P\left(\min\limits_{\hat \A \neq \A^*, \hat \A \subseteq \mS} \textup{GIC}(\hat\A) > \textup{GIC}(\A^*)\right) \geqslant 1 - O(p^{-\alpha}),
  \end{equation*}
  for some constant $0 < \alpha <1$.
\end{theorem}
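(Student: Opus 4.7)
The plan is to lower bound $\text{GIC}(\hat\A) - \text{GIC}(\A^*)$ uniformly over all $\hat\A \subseteq \mS$ with $\hat\A \neq \A^*$, and to show this lower bound is positive with probability at least $1 - O(p^{-\alpha})$. The central identity is
\[
\text{GIC}(\hat\A) - \text{GIC}(\A^*) = n\bigl[\log L(\hat{\bm\beta}_{\hat\A}) - \log L(\hat{\bm\beta}_{\A^*})\bigr] + \bigl(\#\{\hat\A\} - \#\{\A^*\}\bigr)\log J\log(\log n),
\]
where $\hat{\bm\beta}_{\A}$ denotes the least-squares estimator supported on $\A$. I would split into two regimes: an \emph{overfitting} case $\hat\A \supsetneq \A^*$ where the loss gap is negative but small, and an \emph{underfitting} case $\hat\A \not\supseteq \A^*$ where bias forces the loss gap to be large and positive.

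For the overfitting case, let $\mathcal D = \hat\A \setminus \A^*$. A standard orthogonal decomposition of the residual yields
\[
L(\hat{\bm\beta}_{\A^*}) - L(\hat{\bm\beta}_{\hat\A}) = \|P_{\mathcal D \mid \A^*}\,{\bm\varepsilon}\|_2^2 / (2n),
\]
where $P_{\mathcal D \mid \A^*}$ is the projection onto the component of the column space of ${\bm X}_{\mathcal D}$ orthogonal to that of ${\bm X}_{\A^*}$, a projector of rank $\#\{\mathcal D\}$ under GSRC. A sub-Gaussian quadratic-form concentration (C1), in the spirit of Hoeffding/Laurent--Massart, gives $\|P_{\mathcal D \mid \A^*}{\bm\varepsilon}\|_2^2 \lesssim \#\{\mathcal D\}\log p\cdot\sigma^2$ with probability $\geqslant 1 - \exp(-c\#\{\mathcal D\}\log p)$. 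Since $2nL(\hat{\bm\beta}_{\A^*})$ concentrates near $n\sigma^2$, a first-order expansion of $\log$ bounds the loss contribution by $O(\#\{\mathcal D\}\log p)$ in magnitude, while the penalty gain is exactly $\#\{\mathcal D\}\log J\log(\log n)$. Condition (C7) guarantees $\log p = o(\log J \log(\log n))$, so the penalty strictly dominates. A union bound over the $\binom{J}{k}\leqslant \exp(k\log J)$ candidate sets with $|\mathcal D|=k$ is absorbed by the exponential failure rate, summing to total failure probability $O(p^{-\alpha})$.

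For the underfitting case, let $\mathcal D^* = \A^* \setminus \hat\A \neq \emptyset$. GSRC (C2) applied to $\hat\A \cup \A^*$ (of cardinality at most $2T_{\max}$) yields the bias lower bound
\[
\|(I - P_{{\bm X}_{\hat\A}}){\bm X}_{\A^*}{\bm\beta}^*\|_2^2 / n \geqslant c_*(2T_{\max})\,\|{\bm\beta}^*_{\mathcal D^*}\|_2^2 \geqslant c_*(2T_{\max})\,\vartheta,
\]
by the minimum-signal hypothesis. Expanding $L(\hat{\bm\beta}_{\hat\A})$ into squared bias plus cross term plus noise, this forces $n[\log L(\hat{\bm\beta}_{\hat\A}) - \log L(\hat{\bm\beta}_{\A^*})] \gtrsim n\vartheta/\sigma^2$ on the event that the cross term is controlled. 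The cross term is the inner product of ${\bm\varepsilon}$ with a fixed unit vector, a single sub-Gaussian scalar governed by Hoeffding's inequality under (C1). By (C4), $n\vartheta \gg T_{\max}p_{\max}\log p\log(\log n)$, which dominates the worst adverse penalty shift $s^*p_{\max}\log J\log(\log n)$.

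The main obstacle is uniformity in the underfitting case: one must union bound over up to $2^J$ candidate subsets, so the per-set failure probability (of order $\exp(-cn\vartheta/\sigma^2)$) must absorb the $\exp(J\log 2)$ combinatorial factor. This is precisely where (C4) together with (C6)--(C7) enter, ensuring the signal $n\vartheta$ grows fast enough relative to $J$ and $\log p$ that the exponential tail dominates. Combining the failure probabilities from both regimes via a final union bound delivers the advertised $1 - O(p^{-\alpha})$ guarantee for some $0 < \alpha < 1$.
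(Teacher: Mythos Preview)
Your approach differs from the paper's in a way that creates a genuine gap in the underfitting argument.

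The paper does not compare $\text{GIC}(\A^*)$ against all subsets of $\mS$. Despite the notation in the displayed probability, the proof runs only over the $T_{\max}$ candidate sets $\hat\A_T$ output by GSplicing for $T=1,\dots,T_{\max}$, and the case split is by model size rather than by set inclusion: $T<s^*$ versus $T\geqslant s^*$. For $T\geqslant s^*$ the paper invokes Theorem~\ref{thm:no-false-exclusion} (support recovery) to conclude $\hat\A_T\supseteq\A^*$ with probability at least $1-\delta_1-\delta_2$, which forces every such candidate into your overfitting regime automatically. For $T<s^*$ there are at most $s^*-1$ candidates, each with $|\hat\A_T|=T<s^*=|\A^*|$ so $\I_1=\hat\I_T\cap\A^*\neq\emptyset$ is guaranteed; the bias is then lower bounded via GSRC and Hoeffding bounds on the $p$ individual columns $\{{\bm X}_{G_j}^{(i)}\}$ (a union over $p$ coordinates, not over subsets), yielding $n[\log L(\hat{\bm\beta}_T)-\log L(\hat{\bm\beta}^*)]\geqslant cn$, which dominates $\#\{\A^*\}\log J\log(\log n)=o(n)$ by (C6). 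No union over $2^J$ subsets is ever taken.

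Your underfitting argument, by contrast, tries to handle all $\hat\A\not\supseteq\A^*$ at once and explicitly relies on a union over $2^J$ subsets. You assert that (C4), (C6), (C7) make the per-set tail $\exp(-cn\vartheta/\sigma^2)$ absorb the $\exp(J\log 2)$ combinatorial factor, but this is not supported: (C4) only yields $n\vartheta\gg T_{\max}p_{\max}\log p\,\log(\log n)$, which by (C6) is at most of order $n\log(\log n)$, while $J$ may be any polynomial in $n$ (the paper explicitly allows $J>n$). Nothing in (C4)--(C7) forces $J=o(n\vartheta)$, so the $2^J$ union bound is not absorbable in general. A related issue: you describe the cross term as the inner product of ${\bm\varepsilon}$ with a ``fixed unit vector,'' but that vector depends on $\hat\A$ through the projector, so a single Hoeffding application does not give uniformity. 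The missing ingredient is precisely Theorem~\ref{thm:no-false-exclusion}, which collapses the candidate pool from exponential in $J$ down to $T_{\max}$ sets and makes the scenario ``$T\geqslant s^*$ but $\hat\A_T\not\supseteq\A^*$'' vacuous with high probability.
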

\noindent \textbf{Proof sketch}\textit{
We separate the proof into two cases, i.e., $T < s^*$ and $T \geqslant s^*$.
The main idea is to bound the gap between the logarithmic losses output by Algorithm \ref{alg:gsplicing} given $T$ and $s^*$, and then compare its divergence rate with the penalty term.
\indent A useful sandwich relation to bound the gap is
\begin{equation}\label{brief2}
  \dfrac{L({\bm\beta}_1)-L({\bm\beta}_2)}{L({\bm\beta}_1)}\leqslant \log \dfrac{L({\bm\beta}_1)}{L({\bm\beta}_2)} \leqslant \dfrac{L({\bm\beta}_1)-L({\bm\beta}_2)}{L({\bm\beta}_2)},
\end{equation}
where ${\bm\beta}_1$ and ${\bm\beta}_2$ are the estimators of Algorithm \ref{alg:gsplicing} given model sizes $T_1$ and $T_2$, respectively. 
For $T = T_1 < T_2 = s^*$, we use the left-hand side of \eqref{brief2} to lower bound the gap and show that it diverges to infinity at a rate $O(n)$. 
In comparison, (C6) deduces that the penalty term $\#\{\A^*\}\log J \log(\log n)$ diverges at a rate $o(n)$, which implies $\text{GIC}(\hat\A_{T_1}) > \text{GIC}(\hat\A_{T_2})$ for a sufficiently large $n$.
For $T = T_1 \geqslant T_2 = s^*$, (C6) and (C7) establish the upper bound of $\A_{T_{\max}}$ and $p_{\max}$, which are necessary to guarantee that the right-hand side of \eqref{brief2} diverges at a slower rate than the penalty term.
As a result, we have $\text{GIC}(\hat\A_{T_1}) < \text{GIC}(\hat\A_{T_2})$.
\indent Therefore, we derive that GIC attains a minimum when $T=s^*$.
Finally, combining the conclusion of Theorem \ref{thm:no-false-exclusion}, we prove that Algorithm \ref{alg:sgsplicing} identifies $\A^*$ with high probability.
\qedsymbol}

Theorem~\ref{thm:consis} shows that, with high probability, SGSplicing can identify the true subset of groups $\A^*$ with an unknown model size.
Consequently, we can directly deduce that the estimator of SGSplicing is the same as the oracle least-squares estimator.
\begin{corollary}\label{cor:consis}
  Assume the conditions in Theorem \ref{thm:consis} hold. The estimator of Algorithm \ref{alg:sgsplicing} is an oracle estimator with high probability,
     $P(\hat {\bm\beta} = \hat {\bm\beta}^o) \geqslant 1 - O(p^{-\alpha}),$
  where $0 < \alpha < 1$ and $\hat {\bm\beta}^o$ is the least-squares estimator given the true subset of groups $\A^*$.
  \end{corollary}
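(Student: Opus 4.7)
\emph{The plan is to show that under the stated conditions, Algorithm \ref{alg:sgsplicing} selects $\hat{\A}=\A^*$ with the claimed probability, after which the oracle property follows essentially by construction. There are two distinct events I need to control: the event $E_1$ that the GSplicing inner routine, when called with $T = s^*$, returns $\hat\A_{s^*}=\A^*$; and the event $E_2$ that $\A^*$ achieves the strict minimum of $\mathrm{GIC}$ among all subsets of groups. On $E_1 \cap E_2$, the GIC value produced at iteration $T=s^*$ equals $\mathrm{GIC}(\A^*)$ and is strictly smaller than the value produced at any other $T$, so $T^* = s^*$ and $\hat\A = \A^*$ is output.}

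\emph{First I would apply Theorem \ref{thm:no-false-exclusion} with $T = s^*$ to obtain $P(E_1) \geq 1 - \delta_1 - \delta_2$. Under conditions (C4) and the definitions of $\delta_1, \delta_2$ (recall $\delta_i = O(p \exp\{-n C_i \vartheta /(s^* \wedge T)\, p_{\max}\})$), the signal-to-noise condition forces $\delta_1, \delta_2 = o(p^{-\alpha})$ for any fixed $\alpha \in (0,1)$, so this event does not dominate the final probability bound. Next, Theorem \ref{thm:consis} directly provides $P(E_2) \geq 1 - O(p^{-\alpha})$. A union bound then yields $P(E_1 \cap E_2) \geq 1 - O(p^{-\alpha})$.}

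\emph{It remains to observe that on $E_1 \cap E_2$, the selected set equals $\A^*$ and hence $\hat{\bm\beta} = \hat{\bm\beta}^o$. By the definition of SGSplicing, the returned coefficient vector $\hat{\bm\beta}_{T^*}$ is the output of GSplicing with input $T=T^*$, whose $\A$-block is, by construction (line 1 of Algorithm \ref{alg:gsplicing} and Lemma \ref{optimal}), the least-squares estimator $(\mathbf{X}_{\hat\A}^\top \mathbf{X}_{\hat\A})^{-1} \mathbf{X}_{\hat\A}^\top \bm y$ with zeros elsewhere. When $\hat\A = \A^*$, this coincides exactly with the oracle estimator $\hat{\bm\beta}^o$ supported on $\A^*$. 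Combining the three probability bounds, we conclude $P(\hat{\bm\beta} = \hat{\bm\beta}^o) \geq 1 - O(p^{-\alpha})$.}

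\emph{There is no real obstacle here since this is a direct consequence: both of the hard technical steps (support recovery at the correct model size, and the selection consistency of GIC) have already been established. The only subtlety worth stating explicitly is confirming that the probability term $\delta_1 + \delta_2$ from Theorem \ref{thm:no-false-exclusion} is absorbed into the $O(p^{-\alpha})$ bound, which is immediate from condition (C4).}
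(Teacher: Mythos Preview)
Your proposal is correct and matches the paper's (implicit) reasoning. The paper does not give a separate proof of this corollary; it simply remarks that once Theorem~\ref{thm:consis} is established, one can ``directly deduce'' that the SGSplicing estimator coincides with the oracle least-squares estimator, and indeed the last line of the paper's proof of Theorem~\ref{thm:consis} already records that ``Algorithm~2 identifies the true subset of groups $\A^*$ with probability at least $1-O(p^{-\alpha})$.'' Your decomposition into $E_1$ (GSplicing at $T=s^*$ returns $\A^*$, via Theorem~\ref{thm:no-false-exclusion}) and $E_2$ (GIC is uniquely minimized at $\A^*$, via Theorem~\ref{thm:consis}) is exactly the content folded into that proof, just made explicit; the absorption of $\delta_1+\delta_2$ into $O(p^{-\alpha})$ under (C4) is handled the same way there.
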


\subsection{Convergence properties}\label{subsec:convergence_rate}
In this part, we first establish an upper bound of the convergence rate of loss. Next, we derive the maximum number of iterations when the selected set covers the true subset of groups $\A^*$.
Finally, we prove the polynomial complexity of SGSplicing with high probability.
\begin{theorem}\label{thm:conv_l2}
  Denote ($\A^k, \I^k, {\bm\beta}^k, \bm d^k$) as the results of Algorithm 1 in the $k$th iteration. Assume Conditions (C1)-(C5) hold. If $T \geqslant s^*$, then we have
  \begin{itemize}
    \item [(i)]
    \begin{equation*}
      |2nL({\bm\beta}^k)-2nL({\bm\beta}^*)| \leqslant \mu_T^k\| {\bm y}\|_2^2,
  \end{equation*}
    \item [(ii)]
    \begin{equation*}
      \A^k \supseteq \A^*,\ \text{if}\ k>\log_{\frac{1}{\mu_T}}\frac{\| {\bm y}\|_2^2}{(1-\frac{\eta}{2})n\left(c_*(T)-\frac{\omega_T^2}{c_*(T)}\right)\vartheta},
  \end{equation*}
  \end{itemize}
    with probability at least $1 - \delta_1-\delta_2$.
\end{theorem}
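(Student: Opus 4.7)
\noindent\textbf{Proof proposal for Theorem \ref{thm:conv_l2}.}
The plan is to derive part (i) from a single one-step contraction in the loss gap and then obtain part (ii) by combining that contraction with a lower bound on the loss gap whenever a relevant group is missing. Throughout, I will work on the high-probability event of Theorem~\ref{thm:no-false-exclusion} (which supplies the bounds controlling $\hat{\bm d}$ and $\hat{\bm\beta}$ via Hoeffding's inequality) so the probability mass $1-\delta_1-\delta_2$ is inherited automatically.

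For part (i), the main step is to establish the inequality
\begin{equation*}
  2nL({\bm\beta}^{k+1}) - 2nL({\bm\beta}^*) \;\leqslant\; \mu_T \bigl(2nL({\bm\beta}^k) - 2nL({\bm\beta}^*)\bigr).
\end{equation*}
I would approach this by writing $L({\bm\beta}^k)-L({\bm\beta}^*)$ as a sum of two contributions: the ``backward'' sacrifice $\tfrac{1}{2}\sum_{j\in\A^k\setminus\A^*}\|{\bm\beta}^k_{G_j}\|_2^2$ coming from spuriously included groups and the ``forward'' sacrifice $\tfrac{1}{2}\sum_{j\in\A^*\setminus\A^k}\|\bm d^k_{G_j}\|_2^2$ coming from missed relevant groups, where both identifications are the content of Lemma~\ref{sacrifices}. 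The splicing rule in Algorithm~\ref{alg:gsplicing} trades off the smallest elements of $\mS^k_{C,1}$ against the largest elements of $\mS^k_{C,2}$, so for the optimal $C$ the loss decrement dominates the total sacrifice up to a multiplicative factor controlled by the correlation constant $\omega_T$ and the spectrum $c_*(T),c^*(T)$. Combining this with (C2) (which gives $L({\bm\beta}^k)-L({\bm\beta}^*) \leqslant c^*(T)\|{\bm\beta}^k-{\bm\beta}^*\|_2^2$ and the analogous lower bound) and with the off-diagonal bound $\omega_T$ applied to the cross terms yields exactly the ratio $\mu_T$ appearing in (C3). Iterating the contraction gives
\begin{equation*}
  |2nL({\bm\beta}^k)-2nL({\bm\beta}^*)| \;\leqslant\; \mu_T^k\,|2nL({\bm\beta}^0)-2nL({\bm\beta}^*)| \;\leqslant\; \mu_T^k\|{\bm y}\|_2^2,
\end{equation*}
where the final step uses $L({\bm\beta}^*)\geqslant 0$ and $2nL({\bm\beta}^0)=\|{\bm y}-{\bm X}{\bm\beta}^0\|_2^2\leqslant\|{\bm y}\|_2^2$ since ${\bm\beta}^0$ is the least-squares projection onto ${\bm X}_{\A^0}$.

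For part (ii), I would argue by contradiction. Suppose $\A^k\not\supseteq\A^*$, so there exists $j^*\in\A^*\setminus\A^k$. Using Lemma~\ref{sacrifices}(ii) and then applying (C2) to the submatrix indexed by $\A^k\cup\{j^*\}$, together with the correlation bound $\omega_T$ to handle the overlap of ${\bm X}_{G_{j^*}}$ with ${\bm X}_{\A^k}$, the forward sacrifice of $j^*$ is at least $\tfrac{1}{2}\bigl(c_*(T)-\omega_T^2/c_*(T)\bigr)\|{\bm\beta}^*_{G_{j^*}}\|_2^2 \geqslant \tfrac{1}{2}\bigl(c_*(T)-\omega_T^2/c_*(T)\bigr)\vartheta$ after subtracting a stochastic slack of order $\eta/2$ controlled by the sub-Gaussian tail of ${\bm\varepsilon}$ on the same event used in Theorem~\ref{thm:no-false-exclusion}. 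This gives
\begin{equation*}
  2nL({\bm\beta}^k)-2nL({\bm\beta}^*) \;\geqslant\; \Bigl(1-\tfrac{\eta}{2}\Bigr) n\Bigl(c_*(T)-\tfrac{\omega_T^2}{c_*(T)}\Bigr)\vartheta.
\end{equation*}
Combining with part (i), the index $k$ cannot exceed $\log_{1/\mu_T}\!\bigl(\|{\bm y}\|_2^2/[(1-\tfrac{\eta}{2})n(c_*(T)-\omega_T^2/c_*(T))\vartheta]\bigr)$, which is precisely the threshold in the statement.

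The main obstacle is the one-step contraction in part (i): making the splicing trade-off yield exactly the constant $\mu_T$ requires a careful accounting of the mixed quadratic terms ${\bm X}_{\A^k\triangle\A^*}^\top{\bm X}_{\A^k\cap\A^*}/n$, since they are the only place where $\omega_T$ enters the bound and they determine the tightness of the ratio defining $\mu_T$. Once this algebraic step is handled, everything else reduces to bookkeeping on the high-probability event provided by the sub-Gaussian concentration already used for Theorem~\ref{thm:no-false-exclusion}. \qedsymbol
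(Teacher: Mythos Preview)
Your proposal for part (ii) is essentially the paper's argument: derive a lower bound on the loss gap whenever $\A^*\setminus\A^k\neq\emptyset$ and compare it with the geometric upper bound from (i). That part is fine.

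The gap is in part (i). Your proposed decomposition
\[
L({\bm\beta}^k)-L({\bm\beta}^*)\;=\;\tfrac12\sum_{j\in\A^k\setminus\A^*}\|{\bm\beta}^k_{G_j}\|_2^2+\tfrac12\sum_{j\in\A^*\setminus\A^k}\|{\bm d}^k_{G_j}\|_2^2
\]
is not a consequence of Lemma~\ref{sacrifices}. That lemma computes the change in loss from adding or removing \emph{one} group from ${\bm\beta}^k$, not the gap to the oracle $L({\bm\beta}^*)$; summing single-group sacrifices ignores all cross terms between groups, and those cross terms are precisely where $\omega_T$ and the ratio $\mu_T$ originate. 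So the starting identity is wrong, and the subsequent sentence (``the loss decrement dominates the total sacrifice up to a multiplicative factor controlled by $\omega_T$'') is left without a mechanism.

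What the paper actually does is different and relies on a structural ingredient you have not invoked. Writing $\I_1^k=\A^*\setminus\A^k$ and noting that after one splice the new missed set is $\I_1^{k+1}=\A_{12}^k\cup\I_{12}^k$, both loss gaps are expressed through the projection identity
\[
2nL({\bm\beta}^k)-2nL({\bm\beta}^*)=({\bm X}_{\I_1^k}{\bm\beta}^*_{\I_1^k}+{\bm\varepsilon})^\top(\mathrm{I}_n-{\bm H}_{\A^k})({\bm X}_{\I_1^k}{\bm\beta}^*_{\I_1^k}+{\bm\varepsilon})-{\bm\varepsilon}^\top{\bm\varepsilon},
\]
which gives an upper bound for step $k{+}1$ in terms of $\|{\bm\beta}^*_{\A_{12}^k\cup\I_{12}^k}\|_2^2$ and a lower bound for step $k$ in terms of $\|{\bm\beta}^*_{\I_1^k}\|_2^2$. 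The contraction then comes from an auxiliary lemma (Lemma~\ref{lem:A} in the appendix) that exploits the \emph{definition} of the splicing sets $\mS_1,\mS_2$ to prove
\[
\|{\bm\beta}^*_{\A_{12}}\|_2\leqslant 2(1+\eta)\tfrac{\omega_T}{c_*(T)}\|{\bm\beta}^*_{\I_1}\|_2,\qquad
\|{\bm\beta}^*_{\I_{12}}\|_2\leqslant 2(1+\eta)\tfrac{\omega_T+\omega_T^2/c_*(T)}{c_*(T)}\|{\bm\beta}^*_{\I_1}\|_2,
\]
so that $\|{\bm\beta}^*_{\I_1^{k+1}}\|_2^2$ is at most a factor (which, after inserting the GSRC bounds, is exactly $\mu_T$) times $\|{\bm\beta}^*_{\I_1^k}\|_2^2$. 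This lemma is the crux: it is where the ordering rule defining $\mS_1,\mS_2$ is used, and without it you cannot connect one iterate's missed signal to the next. Your proposal treats this step as ``careful accounting of the mixed quadratic terms,'' but it is a separate probabilistic estimate, not bookkeeping, and it is not implied by Lemma~\ref{sacrifices} or by GSRC alone.
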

\noindent \textbf{Proof sketch}\textit{
The basic technique for the proof is similar to Theorem 1. Much effort is spent on deriving the inequality 
\begin{equation}\label{brief1}
  |2nL({\bm\beta}^{k+1}) - 2nL({\bm\beta}^*)|\leqslant \mu_T |2nL({\bm\beta}^{k}) - 2nL({\bm\beta}^*)|,\ k=0,1,\ldots.
\end{equation}
By repeated application of \eqref{brief1}, we obtain part (i).
The right-hand side of part (i) decays geometrically since (C3) requires $\mu_T < 1$. 
When $\A^{k+1}$ misses some relevant groups,  we can establish the lower bound of  $|2nL({\bm\beta}^{k+1}) - 2nL({\bm\beta}^*)|$ in terms of the spectrum bounds in (C2) and $\vartheta$.
Therefore, when $\mu_T^k\| {\bm y}\|_2^2$ is smaller than the lower bound, we can conclude that $\A^k \supseteq \A^*$.
\qedsymbol}
Part (\textrm{i}) in Theorem~\ref{thm:conv_l2}  shows that the estimation error of the loss can be bounded by a term related to $\| {\bm y}\|_2^2$.
These bounds decay geometrically until GSplicing converges, and the contraction factor $0<\mu_T<1$ establishes an upper bound of the convergence rate of loss,
which shows that loss converges at least linearly.
Part (\textrm{ii}) shows that the output of GSplicing covers the true subset of groups $\A^*$ after $O\left(\log_{\frac{1}{\mu_T}}\frac{\| {\bm y}\|_2^2}{(1-\frac{\eta}{2})n(c_*(T)-\frac{\omega_T^2}{c_*(T)})\vartheta}\right)$ iterations.
Moreover, the $\ell_2$ error bounds of the estimator ${\bm\beta}^k$ are immediate results of Theorem \ref{thm:conv_l2}, which are presented in Appendix B of the online supplement.
\begin{corollary}\label{cor:step}
  Assume the conditions in Theorem \ref{thm:conv_l2} hold. If $T \geqslant s^*$, Algorithm \ref{alg:gsplicing} stops after
  $O\left(\log_{\frac{1}{\mu_T}}\frac{\| {\bm y}\|_2^2}{(1-\frac{\eta}{2})n(c_*(T)-\frac{\omega_T^2}{c_*(T)})\vartheta}\right)$ iterations
  with probability at least $1 - O(p^{-\alpha})$ for some constant $0 < \alpha < 1$.
\end{corollary}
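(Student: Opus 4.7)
The plan is to chain Theorem \ref{thm:conv_l2}(ii) with a short termination argument. By Theorem \ref{thm:conv_l2}(ii), with probability at least $1-\delta_1-\delta_2$, the iterate $\A^k$ satisfies $\A^k\supseteq\A^*$ for some $k\leqslant k^\star:=\lceil\log_{1/\mu_T}(\|{\bm y}\|_2^2/[(1-\eta/2)n(c_*(T)-\omega_T^2/c_*(T))\vartheta])\rceil$. It therefore suffices to show that, once this inclusion is attained, the outer while loop in Algorithm~\ref{alg:gsplicing} exits within $O(1)$ additional steps, and that under (C4) the combined probability of failure is $O(p^{-\alpha})$ for some $\alpha\in(0,1)$.

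For the termination step, once $\A^k\supseteq\A^*$ the fitted vector $\bm X{\bm\beta}^k=P_{\A^k}\bm y$ absorbs $\bm X_{\A^*}{\bm\beta}^*_{\A^*}$, so the OLS residual reduces to the projected noise $(\bm I-P_{\A^k}){\bm\varepsilon}$. Consequently every $\|\bm d^k_{G_j}\|_2^2=\|\bm X^\top_{G_j}(\bm I-P_{\A^k}){\bm\varepsilon}\|_2^2/n^2$ for $j\in\I^k$ and every $\|{\bm\beta}^k_{G_i}\|_2^2$ for $i\in\A^k\setminus\A^*$ is a quadratic form in the sub-Gaussian vector ${\bm\varepsilon}$. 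Using (C1), the groupwise orthonormalization $\bm X^\top_{G_j}\bm X_{G_j}/n=\mathrm{\bm I}_{p_j}$, a Hoeffding-type tail bound, and a union bound over $j\in\mS$ and the active sets of size $T$ containing $\A^*$, each such norm is $O(\sigma^2 p_{\max}\log p/n)$ with probability $1-O(p^{-\alpha})$. By Lemma~\ref{sacrifices}, the potential loss decrease from splicing any $C\leqslant C_{\max}$ groups is bounded by the sum of the $C_{\max}$ largest such norms, hence $O(C_{\max}p_{\max}\log p/n)$. Condition (C5) sets $\pi_T=0.1Tp_{\max}\log p\log(\log n)/n$, and since $T\geqslant s^*\geqslant 1$ and $\log(\log n)\to\infty$, this decrease is strictly below $\pi_T$ for $n$ sufficiently large. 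The splice acceptance test therefore fails for every candidate $C$, forcing $\A^{k+1}=\A^k$ and terminating the loop.

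Finally, (C4) gives $n\vartheta/(Tp_{\max})\gg\log p\log(\log n)$, so $\delta_1=O(p\exp\{-nC_1\vartheta/(s^*p_{\max})\})$ and $\delta_2=O(p\exp\{-nC_2\vartheta/(Tp_{\max})\})$ both decay like $O(p^{-\alpha})$ for any fixed $\alpha\in(0,1)$; a union bound with the concentration events above preserves this rate. The principal obstacle is the uniform concentration in the previous paragraph: the splice decision depends on $\|\bm d^k_{G_j}\|_2$ and $\|{\bm\beta}^k_{G_i}\|_2$ at an $\A^k$ that is random, so a union bound over all $\binom{J-s^*}{T-s^*}$ candidate active sets containing $\A^*$ is required. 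This contributes an extra $T\log p$ factor in the Hoeffding exponent, which is precisely what $\pi_T$ in (C5) is sized to absorb, leaving a $\log(\log n)$ cushion that enforces the strict inequality.
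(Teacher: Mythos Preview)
Your overall plan coincides with the paper's: invoke Theorem~\ref{thm:conv_l2}(ii) to reach $\A^k\supseteq\A^*$ within $k^\star$ steps, then show the algorithm halts within $O(1)$ further iterations, and finally use (C4) to convert $\delta_1+\delta_2$ into $O(p^{-\alpha})$. The difference is in the termination step.

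The paper does not go through Lemma~\ref{sacrifices} at all. Instead it observes that, once $\A^k\supseteq\A^*$, Theorem~\ref{thm:conv_l2} also forces $\A^{k+1}\supseteq\A^*$ with the same probability, so \emph{both} losses collapse to pure-noise projections $L({\bm\beta}^k)=\tfrac{1}{2n}{\bm\varepsilon}^\top(\bm{\mathrm I}-{\bm H}_{\A^k}){\bm\varepsilon}$ and likewise for $k+1$. A column-wise Hoeffding bound on $\|{\bm X}_{\A}^\top{\bm\varepsilon}\|_2$ together with GSRC then gives $L({\bm\beta}^k)-L({\bm\beta}^{k+1})\leqslant 2(1+\alpha)Tp_{\max}\log p/(nC)\leqslant\pi_T$ directly; no union over $\binom{J-s^*}{T-s^*}$ active sets is needed because the bound is uniform in the columns.

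Your route via Lemma~\ref{sacrifices} has a genuine gap. Lemma~\ref{sacrifices}(ii) states that the \emph{coordinate update} ${\bm\beta}^k\mapsto{\bm\beta}^k+{\bm t}^k_j$ drops the loss by exactly $\tfrac12\|\bm d^k_{G_j}\|_2^2$; but the splice in Algorithm~\ref{alg:gsplicing} computes the \emph{full OLS refit} $\tilde{\bm\beta}$ on $\tilde\A$, for which $L(\tilde{\bm\beta})\leqslant L({\bm\beta}^k+{\bm t}^k_j)$. Hence Lemma~\ref{sacrifices} supplies only a \emph{lower} bound on the single-group refit decrease, not the upper bound you need. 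The claim that the splice loss decrease is ``bounded by the sum of the $C_{\max}$ largest such norms'' does not follow from Lemma~\ref{sacrifices} alone. It can be repaired: since $\tilde\A\subseteq\A^k\cup\mS_2^k$, one has $L(\hat{\bm\beta}^k)-L(\tilde{\bm\beta})\leqslant L(\hat{\bm\beta}^k)-L(\hat{\bm\beta}^{\A^k\cup\mS_2^k})$, and Lemma~\ref{lem:C} then gives the missing factor $(c_*(T)-\omega_T^2/c_*(T))^{-1}$ to turn $\|\bm d^k_{\mS_2^k}\|_2^2$ into an honest upper bound. Alternatively, and more simply, follow the paper and bound $L({\bm\beta}^k)$ and $L({\bm\beta}^{k+1})$ separately as noise quadratic forms once you have invoked Theorem~\ref{thm:conv_l2} a second time to secure $\A^{k+1}\supseteq\A^*$.
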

In Corollary \ref{cor:step}, we can show that once $\A^k$ covers the true subset of groups $\A^*$,
the decrease of loss exceeds the threshold $\pi_T$ in the next iteration. Thus, GSplicing converges after $O\left(\log_{\frac{1}{\mu_T}}\frac{\| {\bm y}\|_2^2}{(1-\frac{\eta}{2})n(c_*(T)-\frac{\omega_T^2}{c_*(T)})\vartheta}\right)$ iterations.
Combined with Corollary \ref{cor:step}, we prove the polynomial computational complexity of SGSplicing with high probability.
\begin{theorem}\label{thm:complexity}
  Assume (C2)-(C4) hold with $T_{\max}$ and (C1), (C5) hold. The computational complexity of Algorithm \ref{alg:sgsplicing} for a given $T_{\max}$ is
  \begin{equation*}
    O\left((T_{\max}\log_{\frac{1}{\mu_T}}\frac{\| {\bm y}\|_2^2}{p_{\max}\log p\log(\log n)}+\frac{n\| {\bm y}\|_2^2}{p_{\max}\log p\log(\log n)})(np+nT_{\max}p_{\max}+J)C_{\max}\right),
  \end{equation*}
  with probability at least $1-O(p^{-\alpha})$ for some constant $0 < \alpha < 1$.
\end{theorem}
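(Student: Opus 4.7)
The plan is to decompose the runtime of Algorithm~\ref{alg:sgsplicing} as the sum over $T=1,\ldots,T_{\max}$ of (number of outer iterations of Algorithm~\ref{alg:gsplicing} at model size $T$) times (cost per outer iteration), and to bound these two factors separately before invoking Corollary~\ref{cor:step} for a uniform high-probability statement.

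I would first argue that a single outer iteration of GSplicing costs $O\bigl((np+nT_{\max}p_{\max}+J)C_{\max}\bigr)$. Forming $\mS_1^k,\mS_2^k$ only requires partial orderings up to the $C_{\max}$-th largest or smallest element, which is $O(J)$. The inner FOR loop runs at most $C_{\max}$ rounds; each round solves a least-squares problem on a support of size at most $Tp_{\max}$ and then evaluates $\tilde{\bm d}=\bm X^\top(\bm y-\bm X\tilde{\bm\beta})/n$ and $\tilde L$. Because splicing modifies at most $C_{\max}$ groups between consecutive supports, the Cholesky/QR factor of $\bm X_{\tilde\A}^\top \bm X_{\tilde\A}$ can be updated incrementally; together with the groupwise orthonormalization $\bm X_{G_j}^\top \bm X_{G_j}/n=\mathrm{\bm I}_{p_j}$, this keeps the dominant cost per inner round at the sparse products $\bm X\tilde{\bm\beta}$ and $\bm X^\top(\bm y-\bm X\tilde{\bm\beta})$, giving $O(np+nT_{\max}p_{\max})$ per round.

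I would then bound the number of outer iterations at each $T$, splitting into $T\geq s^*$ and $T<s^*$. When $T\geq s^*$, Corollary~\ref{cor:step} supplies, with probability at least $1-O(p^{-\alpha})$, the bound $O\bigl(\log_{1/\mu_T}(\|\bm y\|_2^2/(p_{\max}\log p\log(\log n)))\bigr)$, after using (C4) to replace $n\vartheta$ by the dominating quantity $Tp_{\max}\log p\log(\log n)$. When $T<s^*$, support recovery does not apply, so I would invoke the threshold $\pi_T=0.1Tp_{\max}\log p\log(\log n)/n$: by the inner-loop accept criterion, every non-terminal outer iteration strictly decreases the loss by at least $\pi_T$, while $L(\bm 0)=\|\bm y\|_2^2/(2n)$, yielding the deterministic bound $O(\|\bm y\|_2^2/(Tp_{\max}\log p\log(\log n)))$ on iterations. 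Using $1/T\leq 1$ and summing over at most $T_{\max}=[n/(p_{\min}\log p)]=O(n)$ values gives the contribution $O(n\|\bm y\|_2^2/(p_{\max}\log p\log(\log n)))$; similarly, summing the logarithmic bound for $T\in\{s^*,\ldots,T_{\max}\}$ gives $O\bigl(T_{\max}\log_{1/\mu_T}(\|\bm y\|_2^2/(p_{\max}\log p\log(\log n)))\bigr)$. Adding the two totals and multiplying by the per-iteration cost reproduces the bound in Theorem~\ref{thm:complexity}, while the probability statement follows from Corollary~\ref{cor:step} applied across the (at most $T_{\max}$) calls to GSplicing.

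The main obstacle is certifying the per-iteration cost, since a naive least-squares solve on a support of size $Tp_{\max}$ would cost $O(nT^2p_{\max}^2+T^3p_{\max}^3)$, already overshooting the $O(nT_{\max}p_{\max})$ target. The crux is to exploit that at most $C_{\max}$ groups change per inner step, so Cholesky/QR factors update in $O(nTp_{\max})$ time, combined with the groupwise orthonormalization that reduces the diagonal blocks of $\bm X_{\A}^\top \bm X_{\A}/n$ to identity matrices; a careful accounting of these incremental updates is what matches the claimed per-iteration cost.
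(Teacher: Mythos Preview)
Your approach is essentially the same as the paper's: both split the outer-iteration count into the regimes $T<s^*$ (bounded via the guaranteed $\pi_T$-decrease per iteration) and $T\geq s^*$ (bounded via Corollary~\ref{cor:step}), then multiply by a per-iteration cost of order $(np+nT_{\max}p_{\max}+J)C_{\max}$ and sum over $T$. The only notable difference is that you are more careful than the paper about the least-squares solve: the paper simply asserts that ``computing the primal variable and dual variable takes $O(nTp_{\max}+np)$ steps'' and attributes the $O(JC_{\max})$ selection cost to Hoare's algorithm, whereas you justify the $O(nTp_{\max})$ primal update via incremental Cholesky/QR updates exploiting that at most $C_{\max}$ groups change; your treatment here is in fact more rigorous than the paper's own argument. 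A minor bookkeeping difference is that the paper bounds the $T<s^*$ iteration count by $O(\|\bm y\|_2^2/\pi_T)$ directly (already carrying the factor $n$), while you use the sharper $L(\bm 0)/\pi_T$ and then recover the factor $n$ by summing over $T_{\max}=O(n)$ values; both routes land on the same total.
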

\noindent \textbf{Proof sketch}\textit{
The proof is straightforward since the maximum number of iterations for $T<s^*$ and $T\geqslant s^*$ is accessible from the threshold $\pi_T$ and Corollary 3, respectively.
Next, we can analyze the computational complexity for each iteration. By multiplying these two parts, the total computational complexity of Algorithm $\ref{alg:sgsplicing}$ is obtained.
\qedsymbol}
In particular, the computational complexity provided in Theorem \ref{thm:complexity} is bounded by the sample size $n$, the number of groups $J$, the dimensionality $p$, the maximum model size $T_{\max}$, the maximum group size $p_{\max}$, and several logarithmic terms of them. 
This result indicates that SGSplicing terminates in polynomial time with high probability.

\vspace{0.5cm}

\section{\label{numerical}Numerical experiments}
This section is devoted to illustrating the empirical performance of our methods on both synthetic datasets (Section 4.1) and a real-world dataset (Section 4.2). 
We compare against several  
state-of-the-art methods: GLasso and GMCP, which are computed by R package $\mathtt{grpreg}$  \citep{B2015}, and GOMP, whose implementation follows {\citet{BOMP}} in R. 
We implement our methods in R package $\mathtt{abess}$ \citep{JMLR:v23:21-1060}.
For GLasso and GMCP, we determine the tuning parameter by BIC and 5-fold cross-validation (CV), respectively. Moreover, we leave the remaining hyperparameters to their default values in $\mathtt{\mathtt{grpreg}}$.
For GOMP and our methods, we select the model size by GIC.
All numerical experiments are conducted in R and executed on a personal laptop (Intel(R) Core(TM) i7-8550U, 1.80 GHz, 16.00 GB of RAM).
The code is available at \url{https://github.com/abess-team/A-Splicing-Approach-to-Best-Subset-of-Groups-Selection}.

\subsection{Synthetic dataset analysis}
Synthetic datasets are generated from a group linear model, 
where the elements of ${\bm \varepsilon}$ are generated independently with ${\bm \varepsilon}_i \sim \mathcal{N}(0,\sigma_1^2),\ i = 1,\ldots,n$.
We generate the design matrix ${\bm X}$ as follows. 
First, we generate an $n$-by-$J$ matrix $\widetilde{{\bm X}}$ of which each row is drawn independently from a multivariate Gaussian distribution $\mathcal{MVN}(\bm 0, \bm \Sigma)$. 
The covariance matrix $\Sigma$ is considered as one of the following two settings:
\begin{itemize}
  \item exponential correlation structure: ${\bm\Sigma}_{ij}=\rho^{|i-j|}$, 
  \item constant correlation structure: ${\bm\Sigma}_{ij} = \rho^{I(i \neq j)}$, 
\end{itemize}
where the constant $\rho  \in [0, 1]$ controls the correlation among the columns of $\widetilde{{\bm X}}$. 
A large $\rho$ implies that the columns of $\tilde{{\bm X}}$ are highly correlated.
Next, we consider a group size of $K$, and then generate $J \times K$ random vectors $\bm R_1, \ldots, \bm R_{JK} \in \mathbb{R}^n,$ whose entries are independently from $\mathcal{N}(0, 1)$.
Finally, we generate the design matrix ${\bm X}$ as
\begin{equation*}
  {\bm X}_{(j-1)K+k} = \frac{\tilde{{\bm X}}_j+\bm R_{(j-1)K+k}}{\sqrt{2}},\quad 1\leqslant j\leqslant J, 1\leqslant k\leqslant K.
\end{equation*}
Notably, the group structure generated by a large $\rho$, e.g., $\rho=0.9$, is highly correlated across the groups.
The underlying regression coefficients ${\bm\beta}^*$ are generated in the following way.
For a relevant group $j \in \A^*$, 
the $i$th element of ${\bm\beta}^*_{G_j}$ is set as
\begin{equation*}
  ({\bm\beta}^*_{{G_j}})_i = \gamma_i^j - \dfrac{1}{K+1}\sum\limits_{i=1}^{K+1} \gamma^j_i, \ 1\leqslant i \leqslant K,
\end{equation*}
where $\gamma_1^j, \ldots, \gamma_{K+1}^j$ are independently drawn from $\mathcal{N}(0, 1)$.

Given an output $(\hat \A, \hat {\bm\beta})$,  
we use the following metrics to assess the group selection, model size selection, and parameter estimation:
\begin{itemize}
  \item \textbf{True Positive Rate (TPR)}: TP/(TP+FN), 
  where TP is the size of the intersection between $\hat \A$ and $\A^*$, FN is the size of the intersection between $\hat \I$ and $\A^*$.
  \item \textbf{False Positive Rate (FPR)}: FP/(FP+TN),
  where TN is the size of the intersection between $\hat \I$ and $\I^*$, FP is the size of the intersection between $\hat \A$ and $\I^*$.
  \item \textbf{Mathews Correlation Coefficient (MCC)}:
  \begin{align*}
    \text{MCC} = \frac{\text{TP}\times \text{TN}-\text{FP}\times \text{FN}}{\sqrt{(\text{TP+FP)(TP+FN)(TN+FP)(TN+FN)}}}.
  \end{align*}
  MCC ranges in $[-1, 1]$, and a larger MCC means better performance on group selection.
  \item \textbf{Group Sparsity Error (GSE)}: $|\hat \A| - |\A^*|$.
  \item \textbf{Relative Estimation Error (ReEE)}: $\text{ReEE} = \|\hat {\bm\beta} - {\bm\beta}^*\|_2/\|{\bm\beta}^*\|_2$.
\end{itemize}
Also, we provide the runtime in seconds for each method. 
Due to the high computational burden of CV, we remove the results of GLasso and GMCP tuned by CV at runtime.
All simulation results are based on 100 replications.
In Section 4.1.2-4.1.5, we use boxplots to help visualize the distribution of the computational results of each method, where the black dots represent outliers. 
Notably, the flat box in the boxplot implies that the method has stable performance on this metric.

\subsubsection{Discussion of $C_{\max}$}\label{cmax}
Here we present the empirical evidence of the choice of $C_{\max}$. We consider that $\tilde{{\bm X}}$ has an exponential correlation structure with $\rho = 0.9$.
We set the number of groups $J=2000$, the group size $K=5$ and the sample size $n = 1000$.
Additionally, we set $\sigma_1=3$. The true subset of groups $\A^*$ is randomly chosen with $s^*=10$.
We set different values of $C_{\max}=1,2,5,10$.
\begin{table}[H]
  \centering{
  \caption{\label{tab1} Comparison of SGSplicing and GGSplicing with different values of $C_{\max}$. The standard deviations are shown in the parentheses.}
  \begin{tabular}{*{8}c}
    \toprule
    $C_{\max}$& Method & TPR & FPR & MCC & ReEE & Runtime \\
    \midrule
    \multirow{2}*{1}
    &SGSplicing & 0.99  (0.04) & 0.00  (0.00) & 0.99  (0.02) & 0.15  (0.03) & 1.85 (0.36) \\
    &GGSplicing & 0.98  (0.05) & 0.00  (0.00) & 0.95  (0.04) & 0.16  (0.03) & 0.85 (0.18)\\
    \midrule
    \multirow{2}*{2}
    &SGSplicing & 0.99  (0.04) & 0.00  (0.00) & 0.99  (0.02) & 0.15  (0.03) & 1.47 (0.31) \\
    &GGSplicing & 0.99  (0.05) & 0.00  (0.00) & 0.97 (0.04) & 0.16  (0.03) & 0.59 (0.15)\\
    \midrule
    \multirow{2}*{5}
    &SGSplicing & 0.99  (0.04) & 0.00  (0.00) & 0.99  (0.02) & 0.15 (0.03) & 1.87 (0.37) \\
    &GGSplicing & 0.99  (0.05) & 0.00  (0.00) & 0.97 (0.04) & 0.16 (0.03) & 0.76 (0.20)\\
    \midrule
    \multirow{2}*{10}
    &SGSplicing & 0.99  (0.04) & 0.00 (0.00) & 0.99 (0.02) & 0.15  (0.03) & 2.10 (0.38) \\
    &GGSplicing & 0.98  (0.05) & 0.00 (0.00) & 0.97 (0.04) & 0.16 (0.03) & 0.93 (0.21)\\
    \bottomrule
  \end{tabular}}
\end{table}

Table \ref{tab1} summarizes the results, from which we see that the results are nearly identical, except the runtime among different settings of $C_{\max}$. 
Obviously, when $C_{\max}=2$, the runtime attains the minimum value, and the performance achieves the best.
In summary, a small $C_{\max}$, e.g., $C_{\max}=1$, increases the number of iterations. On the other hand, a large $C_{\max}$, e.g., $C_{\max}=5$ or $10$, brings several redundant updates in each iteration, although it will decrease the number of iterations.
Consequently, we set $C_{\max}=2$ in Algorithm \ref{alg:gsplicing} empirically.

\subsubsection{Influence of the correlation across the groups}
We consider the following settings for $\tilde{{\bm X}}$:
(i) exponential correlation structure with $\rho = 0.6$ or $\rho = 0.9$;
(ii) constant correlation structure with $\rho = 0.6$ or $\rho = 0.9$.
We set the number of groups $J=1500$, the group size $K=3$, and the sample size $n=500$.
Additionally, we set $\sigma_1=2$, and consider $\A^*$ is randomly chosen with $s^* = 15$.
\begin{figure}[h]
  \centering
  \includegraphics[width=\linewidth]{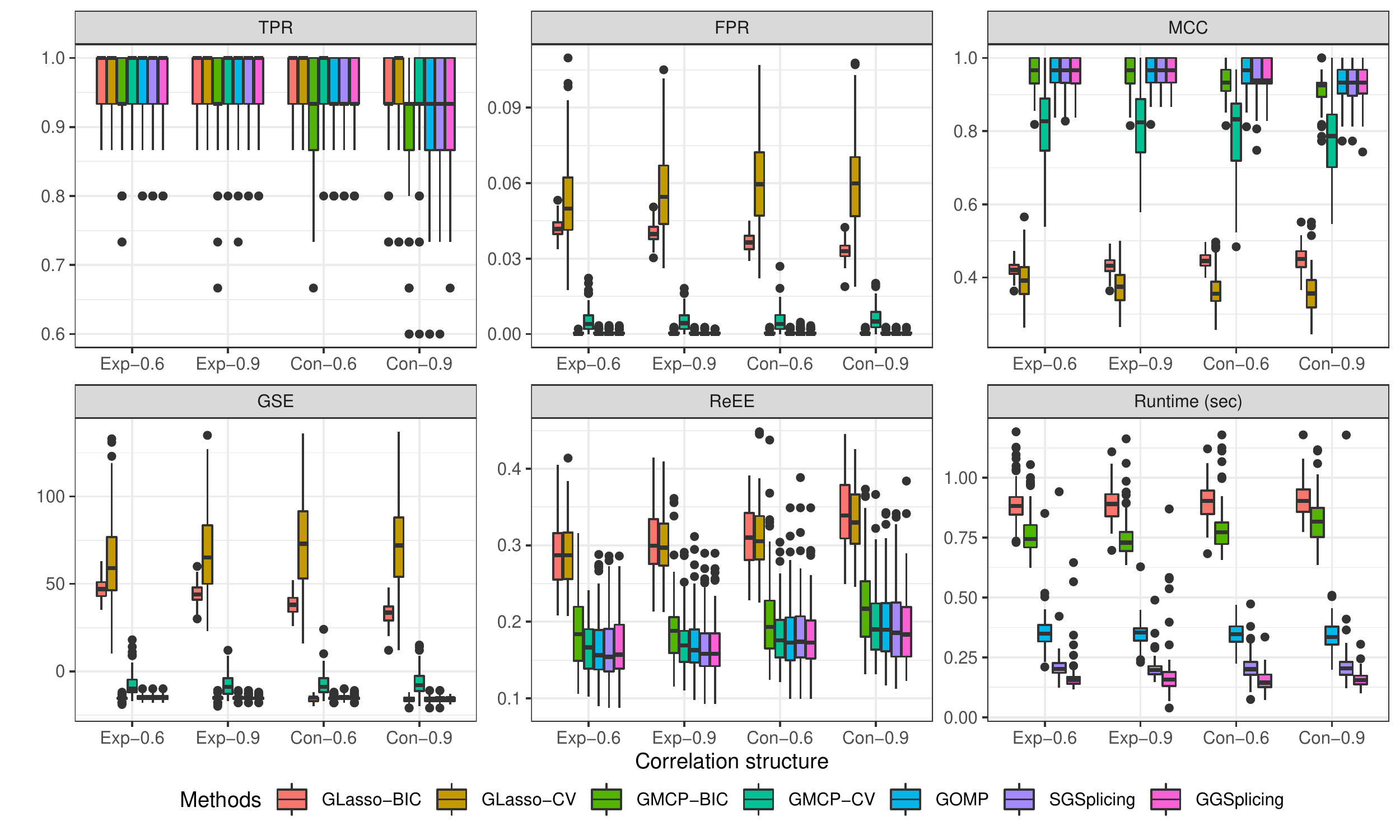}
  \caption{\label{simu1}The boxplots of all metrics of different methods with different correlation structures.
  Exp-0.6 corresponds to exponential correlation structure with $\rho = 0.6$, 
  and Con-0.6 corresponds to constant correlation structure with $\rho = 0.6$.  
  Con-0.9 and Exp-0.9 have similar meanings.}
\end{figure}

From Figure~\ref{simu1}, it is evident that 
a higher correlation across different groups decreases TPR and MCC but increases FPR and ReEE for each method. 
Consequently, we can conclude that as the correlation increases, group selection becomes more difficult.
From the GSE presented in Figure~\ref{simu1}, we see that GLasso-BIC and GLasso-CV select far more groups.
As a result, these methods have relatively poor performance on FPR, which is consistent with the evidence that GLasso tends to overestimate the model size. 
In comparison, other methods control the level of FPR at a relatively lower level, which leads to better performance on MCC.
Additionally, SGSplicing and GGSplicing achieve competitive or even the best performance on MCC and GSE. 
Furthermore, the ReEE of our methods outperforms GLasso and GMCP because of the unbiased estimate given by our methods. 
These results indicate that our methods outperform the other state-of-the-art methods in terms of group selection and parameter estimation when the group structure is highly correlated across the groups.
Finally, in terms of runtime, our methods are much faster than GLasso-BIC and GMCP-BIC. 
It is worthy to note that GGSplicing not only accelerates the selection procedures efficiently
but also preserves a competitive performance on group selection.

\subsubsection{Influence of the sample size}
We consider that $\tilde{{\bm X}}$ has a constant correlation structure with $\rho = 0.6$. 
We set $J=1000$, $K=5$, and the standard deviation of noise $\sigma_1=3$. Let $\A^*$ be randomly chosen with $s^* = 10$. 
The sample size $n$ is set at $500, 600$ and $700$. 
\begin{figure}[h]
  \centering
  \includegraphics[width=\linewidth]{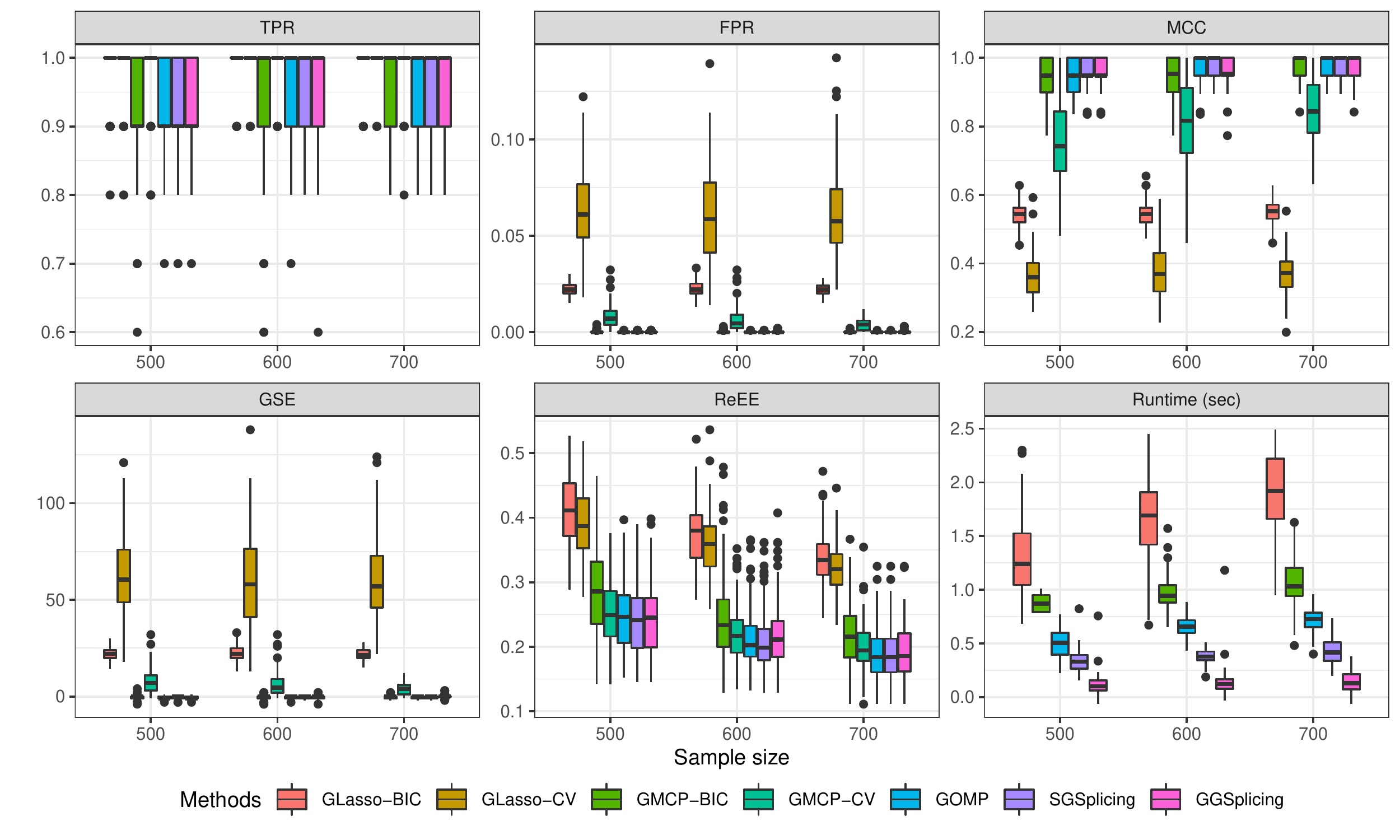}
  \caption{\label{simu2}The boxplots of all metrics of different methods when sample size increases.}
\end{figure}

The simulation results are shown in Figure~\ref{simu2}, from which we see that the performance of all methods becomes better as $n$ increases. 
Notably, our methods perform better than the others, especially when $n = 500$.
Meanwhile, the improvements of our methods are significant as $n$ increases,
and the runtime of our methods is significantly smaller than the other three methods.

\subsubsection{Influence of the number of groups}
We study the empirical performance of group selection methods when $J$ increases to 500, 1000, or 1500.
We consider an exponential correlation structure with $\rho = 0.9$ for $\tilde{{\bm X}}$. 
We set $n=500$ and $K=4$.
The remaining settings are consistent with Section 4.1.3.

\begin{figure}[h]
  \begin{center}
    \includegraphics[width=\linewidth]{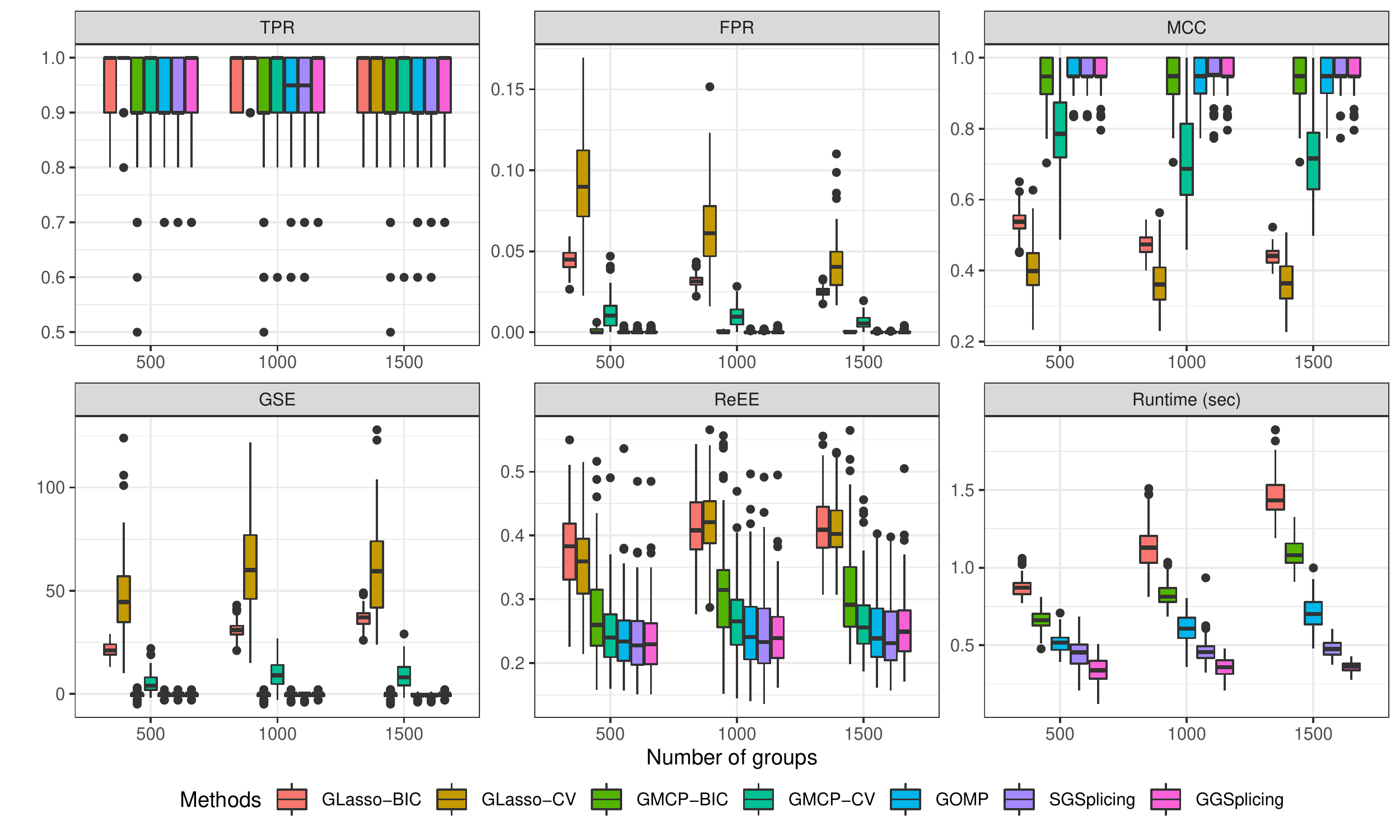}
    \caption{The boxplots of all metrics of different methods when the number of groups increases.}\label{simu3}
  \end{center}
\end{figure}
From Figure \ref{simu3}, we see that all methods perform worse as $J$ increases.
However, SGSplicing and GGSplicing outperform the others, especially on MCC and ReEE, indicating that our methods are more robust to high dimensionality.
Moreover, the runtime of GLasso, GMCP, and GOMP increases significantly as $J$ increases. 
In comparison, our methods significantly shorten the runtime.

\subsubsection{Influence of the group size}\label{simulation}
We consider that $\tilde{{\bm X}}$ has a constant correlation structure with $\rho = 0.9$. We set $\sigma_1=4$, $n=1000$, $J = 1000$ and $K=5, 10, 15$, respectively. 
The true subset of groups $\A^*$ is randomly chosen with $s^* = 5$. 

\begin{figure}[htbp]
  \centering
  \includegraphics[width=\linewidth]{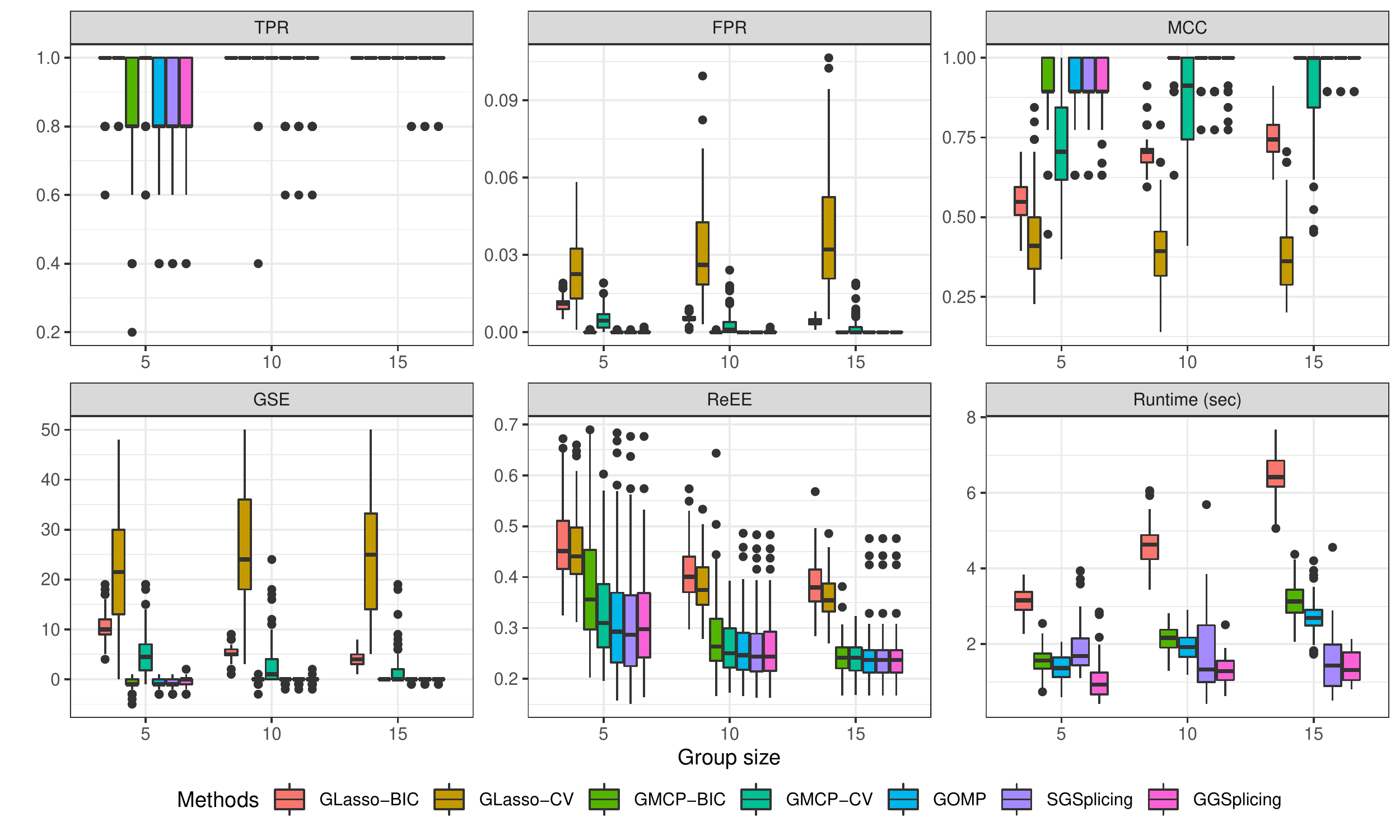}
  \caption{\label{simu4}The boxplots of all metrics of different algorithms when group size increases.}
\end{figure}
The simulation results are shown in Figure~\ref{simu4}. 
When the group size is small, e.g., $K=5$, the correlation across the groups is relatively high.
As $K$ increases, 
the magnitude of the group signal enhances, and the effect of correlation becomes smaller, 
which leads to better performance for all methods. 
Figure~\ref{simu4} shows that our methods enjoy competitive performance on all metrics.

\subsubsection{Computational complexity analysis}\label{computational_time}
In this part, we provide empirical evidence to support Theorem~\ref{thm:complexity}. 
We are interested in the effects of the number of groups $J$, the sample size $n$, 
and the maximum group size $p_{\max}$ on the computational complexity of SGSplicing and GGSplicing. 
Note that $T_{\max}$ for both SGSplicing and GGSplicing is $[\frac{n}{p_{\min}\log p}]$, which is related to the three components we consider here. 
To study the effects of these components, we 
designate the following three settings:
\begin{itemize}
  \item [\textbf{A}:] $n=1000$ and $p_{\max} = 3$. $J$ increases from 700 to 1000 with an increment of 30.
  \item [\textbf{B}:] $J=1000$ and $p_{\max} = 3$. $n$ increases from 1000 to 1500 with an increment of 100.
  \item [\textbf{C}:] $n=500$ and $J = 500$. $p_{\max}$ increases from 3 to 10 with an increment of 1.
\end{itemize}
Additionally, we set $\sigma_1=2$ and $s^*=10$, and consider an exponential correlation structure with $\rho=0.6$.
Note that $p_{\max} = p_{\min} = K$ here.

\begin{figure}[h]
  \centering
  \includegraphics[scale = 0.5]{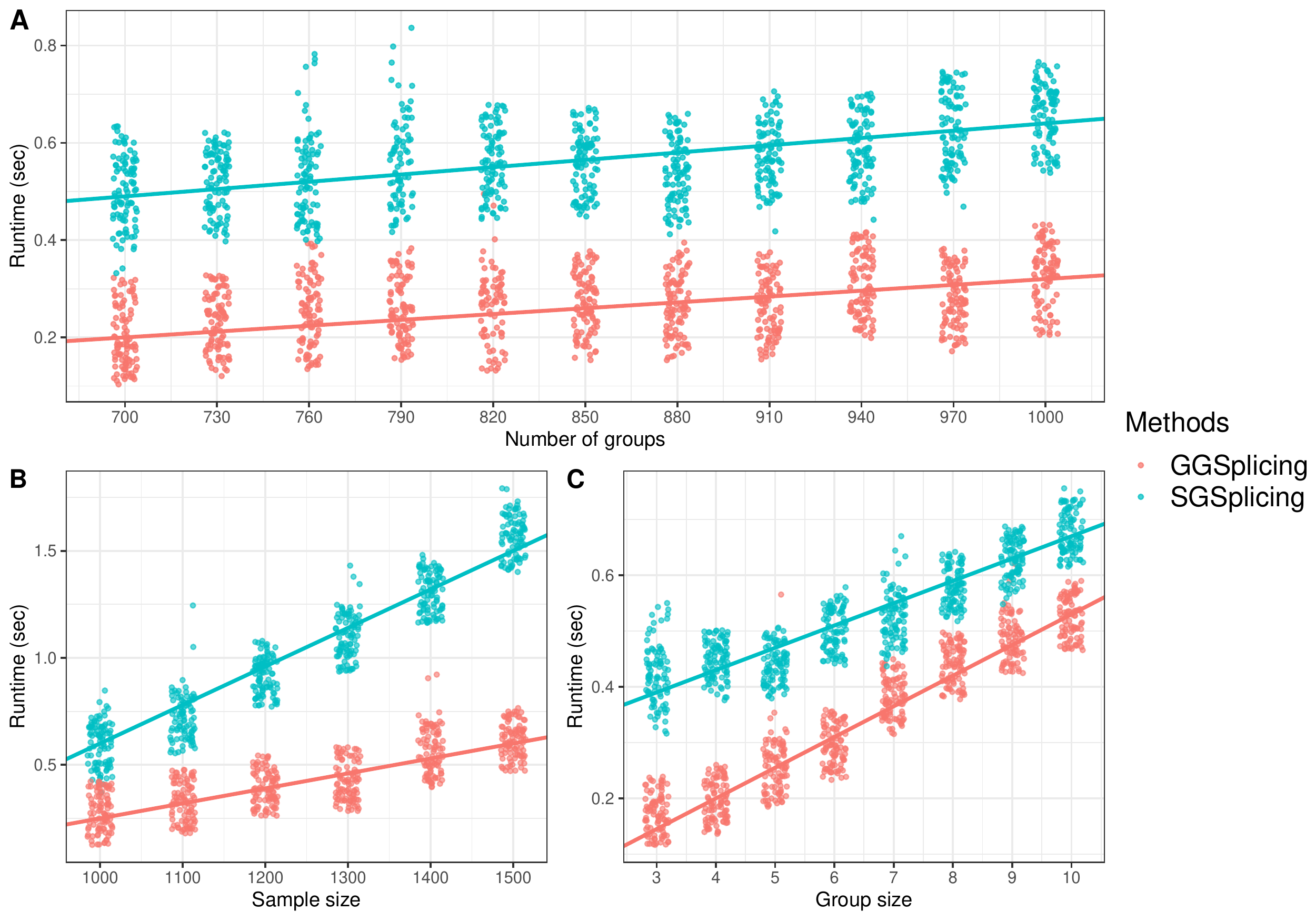}
  \caption{\label{computational_t}(A) Number of groups ($x$-axis) versus Runtime ($y$-axis) scatterplot. 
  (B) Sample size ($x$-axis) versus Runtime ($y$-axis) scatterplot. 
  (C) Group size ($x$-axis) versus Runtime ($y$-axis) scatterplot.
  The green and red straight lines are characterized by equation $y = a + bx$, in which slopes and intercepts 
  are estimated by the ordinary least squares.}
\end{figure}

From Figure~\ref{computational_t}, it becomes clear that 
when any two components are fixed, 
all the curves grow in a near-linear fashion as the remaining component increases.
These results coincide with the conclusion of Theorem \ref{thm:complexity}. 
Several observations about the slopes of these two straight lines are worthy of being noted. 
Figure~\ref{computational_t}A reveals that as $J$ increases, $T_{\max}$ remains relatively the same due to the slow growth of the logarithmic term, 
leading to two nearly parallel straight lines.
In Figure~\ref{computational_t}B, it is apparent that $T_{\max}$ increases with the sample size $n$ significantly. Meanwhile,
the straight line of GGSplicing has a smaller slope than SGSplicing, which reveals that the acceleration of the golden-section strategy is efficient.
In contrast, $p_{\max}$, as one part of the denominator, draws an opposite conclusion.

\subsection{Real-world dataset analysis}
We consider a genomic dataset in a rat eye disease study \citep{s2006}.
The dataset consists of 120 twelve-week-old male rats, which collects the expression of TRIM32,
a gene that has been shown to cause Bardet-Biedl syndrome \citep{chiang2006homozygosity}, and other 18,975 related genes that potentially influence the expression of TRIM32.
Although there are numerous potential genes, we expect only a small subset of these genes related to the expression of TRIM32 \citep{fan2011, huang2010,B2015}.

Following \citet{fan2011}, we focus our interest on a subset with 2,000 probe sets. In particular, we select 2,000 probe sets that have the highest marginal ball correlation \citep{pan2019} with TRIM32 and conduct this feature screening procedure by R package $\mathtt{Ball}$ \citep{JSSv097i06}.
Next, we consider a five-term natural cubic spline basis expansion of these genes,
resulting in a high-dimensional group selection problem with sample size $n=120$ and dimensionality $p=10,000$.
For SGSplicing and GGSplicing, due to the relatively small default value of $T_{\max}$, we set $T_{\max}=10$.
The 120 rats are randomly split into a
training set with 100 samples and a test set with the remaining 20 samples.
We replicate these randomly-splitting procedures 100 times and 
compute the average of the numbers of groups selected and the prediction mean square error (PE) in the test set.

\begin{table}[htbp]
  \caption{\label{tab2}Computational results for TRIM32 dataset. The standard deviations are shown in the parentheses.}
  \centering
  \setlength{\tabcolsep}{6mm}{
  \begin{tabular}{*{3}c}
    \toprule
    Method & Number of groups & 100 $\times$ PE   \\
    \midrule
    GLasso-BIC& 32.01  (3.83) & 1.03  (1.38) \\
    GLasso-CV& 33.35  (15.70) & 1.19  (1.99) \\
    GMCP-BIC & 1.21  (0.77)& 1.14  (0.85) \\
    GMCP-CV & 4.30  (3.49)& 1.18  (0.99) \\
    GOMP & 1.01 (0.10) & 1.16 (2.29)\\
    SGSplicing  & 1.03  (0.17) & 0.94 (0.54) \\
    GGSplicing & 1.03 (0.17) & 0.94 (0.54) \\
    \bottomrule
  \end{tabular}}
\end{table}

From Table~\ref{tab2}, we see that both GLasso-BIC and GLasso-CV select far more groups than the other methods, but this does not lead to the best prediction performance on the test set. 
In comparison, GMCP-BIC and GMCP-CV select fewer groups than GLasso.
However, these sparser models bring relatively high PE to GMCP.
For GOMP and our methods, all methods tend to select approximately one group into the model, but the prediction error of GOMP is much higher than our methods.
Overall, these results demonstrate the superiority of our methods on predictive accuracy.

Furthermore, the whole $120$ samples are used to learn a sparse group linear model for the expression of TRIM32.
Both SGSplicing and GGSplicing select a model with only one group: 1373534\_at. 
Notably, the gene at probe set 1373534\_at is also considered as an important gene related to the expression of TRIM32 by \citet{fan2011} and \citet{zhou2018model}.
\begin{figure}[H]
  \centering
  \includegraphics[scale = 0.85]{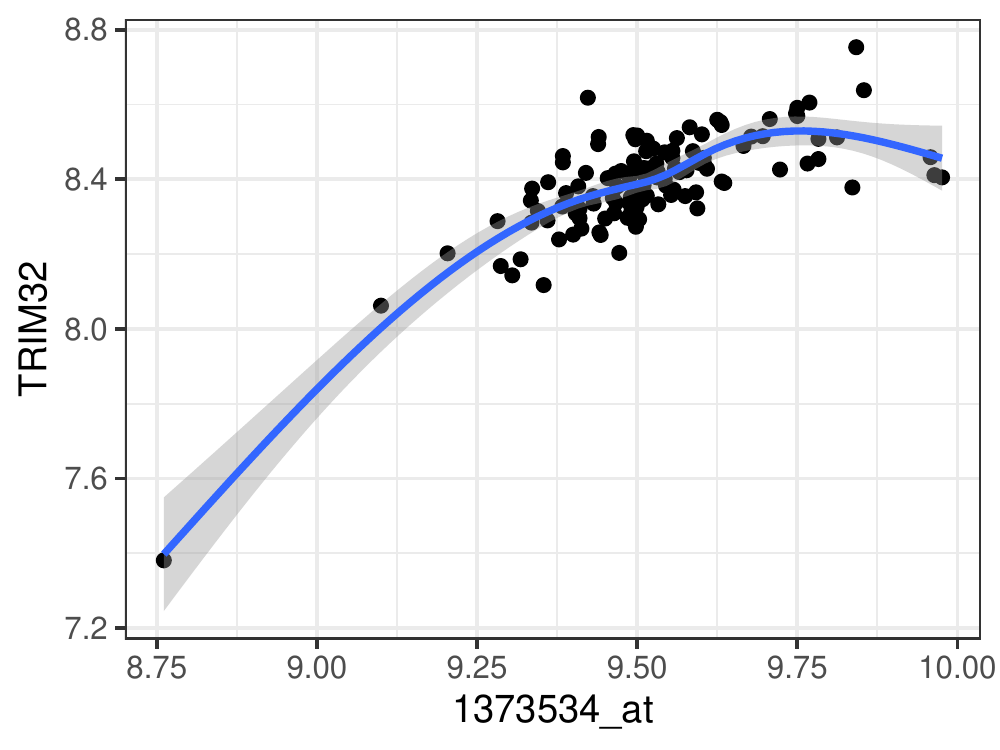}
  \caption{The scatterplot between group 1373534\_at and TRIM32, where the blue line is the linear model fitted by group 1373534\_at and TRIM32. The gray band is the confidence interval of the linear model.}\label{real1}
\end{figure}
In Figure~\ref{real1}, the group linear model based on group 1373534\_at can measure the non-linear relationship between 1373534\_at and TRIM32 efficiently. 
In fact, this one-group linear model can explain $71.2\%$ of the variance in the expression of TRIM32.
Additionally, we calculate the group selection frequencies via stability selection \citep{meinshausen2010stability} based on 100 replications 
and report the five most frequently selected groups in Table~\ref{tab3}. 
SGSplicing identifies group 1373534\_at with the highest selection frequency, $62\%$, 
which takes values in the range from $60\%$ to $90\%$ recommended by \citet{meinshausen2010stability}.
By contrast, the other groups are not very likely to be selected.

\begin{table}[ht]
  \centering
  \caption{The 5 groups with the highest selection frequencies selected by SGSplicing based on 100 replications.}\label{tab3}
  \begin{tabular}{cccccc}
    \toprule
    & 1373534\_at  &       1374669\_at &           1368136\_at    &       1376747\_at    &       1376180\_at    \\
    \midrule      
    Frequency & 62\% & 27\% & 23\% & 23\% & 21\% \\
    \bottomrule
  \end{tabular}
\end{table}

\section{Conclusion}\label{sec:conclusion}
GSplicing is an efficient group selection method for non-overlapping group structure that can select the subset of groups with an exact model size. 
Specifically, we derive the optimal conditions of the augmented Lagrangian form of BSGS and approximate these optimal conditions by a group-splicing approach. 
To better adapt to high dimensionality, we propose a novel information criterion called GIC.
Coupled with GIC, we develop an adaptive algorithm, SGSplicing, to determine the optimal model size. 
Furthermore, we demonstrate that SGSplicing is capable of perfectly recovering the true subset of groups in polynomial time with high probability. 
Motivated by the unimodal-like solution path of SGSplicing, we apply a golden-section technique to accelerate the selection procedures.  
Additionally, we conduct a complete theoretical analysis under certain assumptions, including the statistical and convergence properties.
Finally, the numerical experiments illustrate that our methods have more accurate and robust statistical performance than other state-of-the-art methods.

Recently, group selection for overlapping group structure has become a popular and practical topic for research \citep{jacob2009group, jenatton2011structured, jain2016structured, won2020group}. 
This topic is not explored in this paper, but it would be constructive to extend our method from non-overlapping to overlapping group structure in our future work.

\appendix



\section{Technical proofs}\label{app-sec:proof}
\subsection{Proof of Lemma 1}
\begin{proof}
Denote $\A = \{j\in \mS : I(\|{\bm{\upsilon}}_{G_j}\|_2 \neq 0)\}$ and $\I = \A^c$.
Denote
\vspace{-0.cm}
\begin{align*}
  f_{\kappa}({\bm{\beta}}, {\bm{\upsilon}}, {\bm{d}}) =& \frac{1}{2n}\|{\bm{y}}-{\bm{X}}{\bm{\beta}}\|_2^2+{\bm{d}}^\top({\bm{\beta}}-{\bm{\upsilon}})+\frac{\kappa}{2}\|{\bm{\beta}}-{\bm{\upsilon}}\|_2^2.
\end{align*}

\vspace{-0.1cm}

\noindent Given $({\bm{\upsilon}}, {\bm{d}})$, $f_{\kappa}({\bm{\beta}}, {\bm{\upsilon}}, , {\bm{d}})$ is minimized with respect to ${\bm{\beta}}$ that satisfies
\vspace{-0.2cm}
\begin{equation}\label{lem15}
  -{\bm{X}}^\top({\bm{y}}-{\bm{X}}{\bm{\beta}})/n+{\bm{d}}+\kappa ({\bm{\beta}} - {\bm{\upsilon}})=0.
\end{equation}

\vspace{-0.1cm}

\noindent Given $({\bm{\beta}}, {\bm{d}})$, $f_{\kappa}({\bm{\beta}}, {\bm{\upsilon}}, {\bm{d}})$ can be minimized with respect to ${\bm{\upsilon}}$. 
For this case, we have
\begin{align*}
  f_{\kappa}({\bm{\beta}}, {\bm{\upsilon}}, {\bm{d}}) &\propto {\bm{d}}^\top({\bm{\beta}}-{\bm{\upsilon}})+\dfrac{\kappa}{2}\|{\bm{\beta}}-{\bm{\upsilon}}\|_2^2 = \dfrac{\kappa}{2}\|({\bm{\beta}}+\dfrac{1}{\kappa}{\bm{d}})-v\|_2^2.
\end{align*}
By the constraint $\|{\bm{\upsilon}}\|_{0, 2} =T$,
$f_{\kappa}({\bm{\beta}}, {\bm{\upsilon}}, {\bm{d}})$ is minimized by choosing $\A$ as the $T$ largest $\|{\bm{\beta}}_{G_j}+\dfrac{1}{\kappa}{\bm{d}}_{G_j}\|_2^2$, i.e., 
\vspace{-0.2cm}
\begin{align}\label{lem13}
  \begin{split}
  \A=&\{j\in \mS:\sum_{i=1}^{J}I(\|{\bm{\beta}}_{G_j}+\frac{1}{\kappa}{\bm{d}}_{G_j}\|_2^2\leqslant \|{\bm{\beta}}_{G_i}+\frac{1}{\kappa}{\bm{d}}_{G_i}\|_2^2)\leqslant T\}.
  \end{split}
\end{align}

\vspace{-0.1cm}

\noindent Therefore, we have 
\begin{align}\label{lem14}
  {\bm{\upsilon}}_{G_j} = [\Lambda ({\bm{\beta}}, {\bm{d}})]_j =
  \begin{cases}
    {\bm{\beta}}_{G_j} + \frac{1}{\kappa}{\bm{d}}_{G_j},\ &\text{if}\ j \in \A\\
    0, &\text{if}\ j \in \I
  \end{cases}.
\end{align}
Given $({\bm{\beta}}, {\bm{\upsilon}})$, $f_{\kappa}({\bm{\beta}}, {\bm{\upsilon}}, {\bm{d}})$ is minimized with respect to ${\bm{d}}$ that satisfies
\vspace{-0.1cm}
\begin{equation}\label{lem16}
  {\bm{\beta}} - {\bm{\upsilon}}=0.
\end{equation}
From \eqref{lem15}-\eqref{lem16}, 
we derive that a coordinate-wise minimizer $({\bm{\beta}}^\diamond, {\bm{\upsilon}}^\diamond, {\bm{d}}^\diamond)$ of $f_{\kappa}({\bm{\beta}}, {\bm{\upsilon}}, {\bm{d}})$ satisfy
\begin{align*}
  \begin{cases}
  &{\bm{\upsilon}}^\diamond = \Lambda ({\bm{\beta}}^\diamond, {\bm{d}}^\diamond),\\
  &-{\bm{X}}^\top({\bm{y}}-{\bm{X}}{\bm{\beta}}^\diamond)/n+{\bm{d}}^\diamond+\kappa ({\bm{\beta}}^\diamond - {\bm{\upsilon}}^\diamond)=0,\\
  &{\bm{\beta}}^\diamond - {\bm{\upsilon}}^\diamond=0.
\end{cases}
\end{align*}
By simple algebra, we derive the necessary optimal conditions as
\vspace{-0.1cm}
\begin{align*}
  &{\bm{\beta}}^\diamond_{\A^\diamond} = ({\bm{X}}^\top_{\A^\diamond} {\bm{X}}_{\A^\diamond})^{-1} {\bm{X}}_{\A^\diamond}^\top {\bm{y}},\ {\bm{\beta}}^\diamond_{\I^\diamond} = 0,\\
  &{\bm{d}}^\diamond_{\A^\diamond}=0,\ {\bm{d}}^\diamond_{\I^\diamond}={\bm{X}}^\top_{\I^\diamond}({\bm{y}}-{\bm{X}}{\bm{\beta}}^\diamond)/n,\\
  &{\bm{\upsilon}}^\diamond = {\bm{\beta}}^\diamond,\\
  &\A^\diamond =\{j\in \mS:\sum\limits_{i=1}^{J}I(\|{\bm{\beta}}_{G_j}^\diamond+\frac{1}{\kappa}{\bm{d}}^\diamond_{G_j}\|_2^2\leqslant \|{\bm{\beta}}_{G_i}^\diamond+\frac{1}{\kappa}{\bm{d}}^\diamond_{G_i}\|_2^2)\leqslant T\},\\
  &\I^\diamond = (\A^\diamond)^c.
\end{align*}
\qedsymbol
\end{proof}

\subsection{Proof of Lemma 2}
\begin{proof}
For part $(i)$, note that
\begin{align*}
  \begin{split}
    L({\bm{\beta}}^{\A^k\backslash j})-L({\bm{\beta}}^k)
    &=\frac{1}{2n}\|{\bm{y}}-{\bm{X}}{\bm{\beta}}^{\A^k\backslash j}\|_2^2-\frac{1}{2n}\|{\bm{y}}-{\bm{X}}{\bm{\beta}}^k\|_2^2\\
    &=\frac{1}{2n}\left(({\bm{\beta}}^{\A^k\backslash j})^\top {\bm{X}}^\top {\bm{X}} {\bm{\beta}}^{\A^k\backslash j}-2{\bm{y}}^\top {\bm{X}} {\bm{\beta}}^{\A^k\backslash j}\right)-\frac{1}{2n} \left( ({\bm{\beta}}^k)^\top {\bm{X}}^\top {\bm{X}} {\bm{\beta}}^k - 2{\bm{y}}^\top {\bm{X}} {\bm{\beta}}^k \right)\\
    &=({\bm{\beta}}^{\A^k\backslash j} - {\bm{\beta}}^k)^\top \frac{{\bm{X}}^\top {\bm{X}}}{2n} ({\bm{\beta}}^{\A^k\backslash j} - {\bm{\beta}}^k)+\frac{1}{n}({\bm{\beta}}^{\A^k\backslash j} - {\bm{\beta}}^k)^\top {\bm{X}}^\top ({\bm{y}}-{\bm{X}}{\bm{\beta}}^k)\\
    &=({\bm{\beta}}^k_{G_j})^\top \frac{{\bm{X}}_{G_j}^\top {\bm{X}}_{G_j}}{2n} {\bm{\beta}}^k_{G_j}+\frac{1}{n}{\bm{\beta}}_{G_j}^k {\bm{X}}_{G_j}^\top ({\bm{y}}-{\bm{X}}{\bm{\beta}}^k)\\
    &=\frac{1}{2}\|{\bm{\beta}}^k_{G_j}\|_2^2,
  \end{split}
\end{align*}
where the last equality follows from $\dfrac{{\bm{X}}_{G_j}^\top {\bm{X}}_{G_j}}{n} = \bm{\mathrm{I}}_{p_j}$ and ${\bm{d}}_{G_j} = {\bm{X}}_{G_j}^\top({\bm{y}}-{\bm{X}}{\bm{\beta}})/n=0,\ j \in \A$.

Next, we prove part $(ii)$. From the maximum profile likelihood estimator, we have the $j$th group of ${\bm{t}}^k_j$ are $({\bm{X}}_{G_j}^\top {\bm{X}}_{G_j})^{-1}{\bm{X}}_{G_j}^\top({\bm{y}}-{\bm{X}}{\bm{\beta}}^k)=(\dfrac{{\bm{X}}_{G_j}^\top {\bm{X}}_{G_j}}{n})^{-1}{\bm{d}}_{G_j}^k$. 
Similarly, 
we have
\begin{align*}
  \begin{split}
  L({\bm{\beta}}^k)-L({\bm{\beta}}^k+{\bm{t}}^k_j)&=\frac{1}{2n}\|{\bm{y}}-{\bm{X}}{\bm{\beta}}^k\|_2^2-\frac{1}{2n}\|{\bm{y}}-{\bm{X}}({\bm{\beta}}^k+{\bm{t}}^k_j)\|_2^2\\
  &=-({\bm{t}}^k_j)^\top \frac{{\bm{X}}^\top {\bm{X}}}{2n} {\bm{t}}^k_j+\frac{1}{n}{\bm{t}}_j^\top {\bm{X}}^\top({\bm{y}}-{\bm{X}}{\bm{\beta}}^k)\\
  &=-\frac{1}{2}({\bm{d}}_{G_j}^k)^\top (\frac{{\bm{X}}_{G_j}^\top {\bm{X}}_{G_j}}{n})^{-1} {\bm{d}}^k_{G_j}+({\bm{d}}^k_{G_j})^\top (\frac{{\bm{X}}_{G_j}^\top {\bm{X}}_{G_j}}{n})^{-1} {\bm{X}}_{G_j}^\top({\bm{y}}-{\bm{X}}{\bm{\beta}}^k)/n\\
  &=\frac{1}{2}({\bm{d}}_{G_j}^k)^\top (\frac{{\bm{X}}_{G_j}^\top {\bm{X}}_{G_j}}{n})^{-1} {\bm{d}}^k_{G_j}\\
  &=\frac{1}{2}\|{\bm{d}}_{G_j}^k\|_2^2,
    \end{split}
\end{align*}
where the third equality uses the definition of ${\bm{t}}_j^k$.
\qedsymbol
\end{proof}

\subsection{Proof of Lemma 3}
\begin{proof} 
Given $C < |\A^k|$,
we have
\begin{align*}
  \min_{j \in \mS^k_{C, 2}} \frac{1}{\kappa^2}\|{\bm{d}}^k_{G_j}\|_2^2 = \min_{j \in \mS^k_{C, 2}} \|{\bm{\beta}}^k_{G_j}+\frac{1}{\kappa}{\bm{d}}^k_{G_j}\|_2^2 \geqslant \max_{i \in \A^k}\|{\bm{\beta}}^k_{G_i}+\frac{1}{\kappa}{\bm{d}}^k_{G_i}\|_2^2 = \max_{i \in \A^k}\|{\bm{\beta}}^k_{G_i}\|_2^2 \geqslant \max_{i \in \mS^k_{C, 1}}\|{\bm{\beta}}^k_{G_i}\|_2^2.
\end{align*} 
By simple algebra, the corresponding range of $\kappa$ is
\begin{equation}\label{s2.7}
\kappa \leqslant \frac{\min_{j \in \mS^k_{C, 2}}\|{\bm{d}}^k_{G_j}\|_2}{\max_{i\in \mS^k_{C, 1}}\|{\bm{\beta}}^k_{G_i}\|_2}.
\end{equation}
Similar to \eqref{s2.7}, given $C+1$, we obtain that the range of $\kappa$ is
\begin{equation}\label{s2.8}
\kappa \leqslant \frac{\min_{j \in \mS^k_{C+1, 2}}\|{\bm{d}}^k_{G_j}\|_2}{\max_{i\in \mS^k_{C+1, 1}}\|{\bm{\beta}}^k_{G_i}\|_2}.
\end{equation}
Note that $\mS^k_{C, 1} \subseteq \mS^k_{C+1, 1}$ and $\mS^k_{C, 2} \subseteq \mS^k_{C+1, 2}$.
Given $C$, $\kappa$ takes values in the difference set between \eqref{s2.7} and \eqref{s2.8}.
Therefore, given $C<|\A^k|$, the corresponding range of $\kappa$ is
\begin{equation*}
\kappa \in \left(\frac{\min_{j \in \mS^k_{C+1, 2}}\|{\bm{d}}^k_{G_j}\|_2}{\max_{i\in \mS^k_{C+1, 1}}\|{\bm{\beta}}^k_{G_i}\|_2}, \frac{\min_{j \in \mS^k_{C, 2}}\|{\bm{d}}^k_{G_j}\|_2}{\max_{i\in \mS^k_{C, 1}}\|{\bm{\beta}}^k_{G_i}\|_2}\right].
\end{equation*}
Given $C = |\A^k|$, note that $\mS^k_{C, 1}= \A^k$. We have
\begin{align*}
  \min_{j \in \mS^k_{C, 2}} \frac{1}{\kappa^2}\|{\bm{d}}^k_{G_j}\|_2^2 = \min_{j \in \mS^k_{C, 2}} \|{\bm{\beta}}^k_{G_j}+\frac{1}{\kappa}{\bm{d}}^k_{G_j}\|_2^2 \geqslant \max_{i \in \A^k}\|{\bm{\beta}}^k_{G_i}+\frac{1}{\kappa}{\bm{d}}^k_{G_i}\|_2^2 = \max_{i \in \A^k}\|{\bm{\beta}}^k_{G_i}\|_2^2.
\end{align*} 
Therefore, by simple algebra,
the corresponding range of $\kappa$ is
\begin{equation*}
  \kappa \in \left(0, \frac{\min_{j \in \mS^k_{C, 2}}\|{\bm{d}}^k_{G_j}\|_2}{\max_{i\in \A^k}\|{\bm{\beta}}^k_{G_i}\|_2}\right].
\end{equation*}
\qedsymbol
\end{proof}

\subsection{Auxiliary Lemmas}
To simplify the proofs of main theorems, we provide three useful lemmas. Lemma \ref{lem:C} provides some valuable inequalities occurring frequently. In Lemma \ref{lem:A}, we prove the upper bound of $\|{\bm{\beta}}^*_{\A_{12}}\|_2$ and $\|{\bm{\beta}}^*_{\I_{12}}\|_2$ in terms of $\|{\bm{\beta}}^*_{\I_1}\|_2$ with high probability.
In Lemma \ref{lem:B}, we show that the components related to ${\bm{\varepsilon}}$ can be controlled by $n$ and $\|{\bm{\beta}}^*_{\I_1}\|_2$ with high probability. 

Denote the exchanged subsets of groups in the selected set and unselected set with size $ C = |\I_1|$, respectively, as
\begin{align*}
  &\mS_1 = \{j \in \hat\A:\sum_{i\in \hat \A}I(\|\hat {\bm{\beta}}_{G_j}\|_2^2 \geqslant \|\hat {\bm{\beta}}_{G_i}\|_2^2)\leqslant C\}, \\
  &\mS_2 = \{j \in \hat\I:\sum_{i\in \hat \I}I(\|\hat {\bm{d}}_{G_j}\|_2^2 \leqslant \|\hat {\bm{d}}_{G_i}\|_2^2) \leqslant C\}.
\end{align*}
Here $\mS_1$ represents the subset of groups exchanging from $\hat \A$ to $\hat \I$, and $\mS_2$ represents the subset of groups exchanging from $\hat \I$ to $\hat \A$.
Let
\begin{align*}
  &\A_1 = \hat \A \cap \A^*,\ \A_2=\hat \A\cap \I^*,\\
  &\I_1 = \hat \I \cap \A^*,\ \I_2 = \hat \I \cap \I^*.
\end{align*}
Denote the subset of groups preserving or exchanging in the selected set as
\vspace{-0.2cm}
\begin{align*}
  &\A_{11} = \A_1 \backslash \mS_1,\ \A_{21} = \A_2 \backslash \mS_1 ,\\
  &\A_{12}= \A_1\cap \mS_1,\ \A_{22} = \A_2 \cap \mS_1.
\end{align*}
And denote the subset of groups exchanging or preserving in the unselected set as
\vspace{-0.3cm}
\begin{align*}
  &\I_{11} = \I_1 \cap \mS_2,\ \I_{21} = \I_2 \cap \mS_2,\\
  &\I_{12}=\I_1\backslash \mS_2,\ \I_{22} = \I_2 \backslash \mS_2.
\end{align*}
Let $\tilde{\A} = (\hat \A\backslash \mS_1)\cup \mS_2$ and $\tilde{\I} = (\hat \I\backslash \mS_2)\cup \mS_1$ be the selected set and unselected set after group splicing, and ${\bm{H}}_{\A} = {\bm{X}}_{\A} ({\bm{X}}_{\A}^\top {\bm{X}}_{\A})^{-1}{\bm{X}}_{\A}^\top$ be the hat matrix and ${\bm{X}}_{G_j}^{(i)}$ be the $i$th column of sub-matrix ${\bm{X}}_{G_j}$.
Denote the least-squares estimator contained on $\cup_{j \in \A}G_j$ as $\hat {\bm{\beta}}_{\A}$.
Given $\A \subseteq \mB$, denote ${\bm{e}}_{\A}{\bm{u}}_{\mB}$ as the vector ${\bm{u}}_{\mB}$ contained in $\cup_{j \in \A} G_j$, where ${\bm{e}}_{\A} \in \mathbb{R}^{\#\{\mB\}}$ supports on $\cup_{j\in \A}G_j$ with all nonzero elements equal to one.

\begin{lemma}\label{lem:C}
  Let $\A$ and $\mB$ be disjoint subsets of groups of $\mS$ with  $|\A| \leqslant \tau$ and $|\mB| \leqslant \tau$. Assume ${\bm{X}}$ satisfies GSRC with order $2\tau$. Then for any ${\bm{u}} \in \mathbb{R}^{\#\{\A\}}$,  we have
  \begin{equation}\label{lemC3}
    n\left(c_*(\tau)-\frac{\omega_{\tau}^2}{c_*(\tau)}\right)\|{\bm{u}}\|_2 \leqslant \|{\bm{X}}_{\A}^\top(\bm{\mathrm{I}} - {\bm{H}}_{\mB}) {\bm{X}}_{\A}{\bm{u}}\|_2 \leqslant n\left(c^*(\tau)+\frac{\omega_{\tau}^2}{c_*(\tau)}\right)\|{\bm{u}}\|_2,
  \end{equation}
  \begin{equation}\label{lemC5}
    \frac{\|{\bm{u}}\|_2}{n\left(c^*(\tau)+\frac{\omega_{\tau}^2}{c_*(\tau)}\right)} \leqslant \|({\bm{X}}_{\A}^\top(\bm{\mathrm{I}} - {\bm{H}}_{\mB}) {\bm{X}}_{\A})^{-1}{\bm{u}}\|_2 \leqslant \frac{\|{\bm{u}}\|_2}{n\left(c_*(\tau)-\frac{\omega_{\tau}^2}{c_*(\tau)}\right)},
  \end{equation}
  where $c_*(\tau), c^*(\tau)$ and $\omega_{\tau}$ are defined in (C2).
  Additionally, $c_*(\tau)$ decreases while $c^*(\tau)$ increases as $\tau$ increases. $\omega_{\tau}$ is bounded by $\delta_{T} = \max\{1-c_*(2T), c^*(2T)-1\}$.
\end{lemma}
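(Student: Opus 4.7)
The plan is to decompose the target matrix as
\begin{equation*}
{\bm X}_{\A}^\top({\bm I} - {\bm H}_{\mB}){\bm X}_{\A} = {\bm X}_{\A}^\top {\bm X}_{\A} - {\bm X}_{\A}^\top {\bm X}_{\mB}({\bm X}_{\mB}^\top {\bm X}_{\mB})^{-1}{\bm X}_{\mB}^\top {\bm X}_{\A},
\end{equation*}
and to bound each summand separately using GSRC (applied at order $\tau$, which follows from GSRC at order $2\tau$ since $|\A|, |\mB|\leqslant \tau$) together with the off-diagonal cross constant $\omega_{\tau}$. For any ${\bm u}\in \mathbb{R}^{\#\{\A\}}$, GSRC delivers $n c_*(\tau)\|{\bm u}\|_2 \leqslant \|{\bm X}_{\A}^\top {\bm X}_{\A}{\bm u}\|_2 \leqslant n c^*(\tau)\|{\bm u}\|_2$ directly.

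For the subtracted quadratic term, set ${\bm v} := ({\bm X}_{\mB}^\top {\bm X}_{\mB})^{-1}{\bm X}_{\mB}^\top {\bm X}_{\A}{\bm u}$. Successive use of the $\omega_{\tau}$ bound, the GSRC operator-norm bound on $({\bm X}_{\mB}^\top {\bm X}_{\mB})^{-1}$, and a second $\omega_{\tau}$ bound yields $\|{\bm X}_{\mB}^\top {\bm X}_{\A}{\bm u}\|_2 \leqslant n\omega_{\tau}\|{\bm u}\|_2$, then $\|{\bm v}\|_2 \leqslant (\omega_{\tau}/c_*(\tau))\,\|{\bm u}\|_2$, and finally $\|{\bm X}_{\A}^\top {\bm X}_{\mB}{\bm v}\|_2 \leqslant n(\omega_{\tau}^2/c_*(\tau))\,\|{\bm u}\|_2$. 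Combining this with the GSRC bound on ${\bm X}_{\A}^\top {\bm X}_{\A}$ via the triangle inequality sandwiches $\|{\bm X}_{\A}^\top({\bm I}-{\bm H}_{\mB}){\bm X}_{\A}{\bm u}\|_2$ between $n(c_*(\tau)-\omega_{\tau}^2/c_*(\tau))\|{\bm u}\|_2$ and $n(c^*(\tau)+\omega_{\tau}^2/c_*(\tau))\|{\bm u}\|_2$, which is the first display. The inverse version follows by substituting ${\bm u}\mapsto ({\bm X}_{\A}^\top({\bm I}-{\bm H}_{\mB}){\bm X}_{\A})^{-1}{\bm w}$ into what has just been proved and rearranging (condition (C3) via $\delta_T$ will guarantee the lower bound is positive so the inverse exists).

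The monotonicity of $c_*(\tau)$ and $c^*(\tau)$ is immediate from their variational definition: the set of vectors supported on at most $\tau$ groups grows with $\tau$, so the infimum can only decrease and the supremum can only increase. For the bound on $\omega_{\tau}$, since $\A$ and $\mB$ are disjoint with $|\A\cup \mB|\leqslant 2\tau$, the matrix ${\bm X}_{\A}^\top {\bm X}_{\mB}/n$ appears as an off-diagonal block of $({\bm X}_{\A\cup \mB}^\top {\bm X}_{\A\cup \mB}/n) - {\bm I}$; GSRC at order $2\tau$ tells us this symmetric matrix has operator norm at most $\max\{1-c_*(2\tau), c^*(2\tau)-1\}$, and the operator norm of a sub-block cannot exceed that of the whole matrix, giving $\omega_{\tau}\leqslant \delta_{T}$ at the relevant level.

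The main obstacle is purely book-keeping: one must avoid the crude estimate $\|{\bm v}\|_2 \leqslant \omega_{\tau}\|{\bm u}\|_2$ in place of the sharper $(\omega_{\tau}/c_*(\tau))\|{\bm u}\|_2$, so that the characteristic correction $\omega_{\tau}^2/c_*(\tau)$ emerges with the right constants, and one must carefully track which GSRC bound acts on which block. No probabilistic or combinatorial argument is required; once the definitions of GSRC and $\omega_{\tau}$ are in hand, the lemma reduces to linear algebra and the triangle inequality.
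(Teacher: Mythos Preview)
Your proposal is correct and follows essentially the same route as the paper: the same decomposition ${\bm X}_{\A}^\top({\bm I}-{\bm H}_{\mB}){\bm X}_{\A} = {\bm X}_{\A}^\top {\bm X}_{\A} - {\bm X}_{\A}^\top {\bm X}_{\mB}({\bm X}_{\mB}^\top {\bm X}_{\mB})^{-1}{\bm X}_{\mB}^\top {\bm X}_{\A}$, the same GSRC/$\omega_\tau$ chain on each piece, the triangle inequality to combine, and the same off-diagonal sub-block argument for $\omega_\tau\leqslant\delta_T$. The only cosmetic addition is your explicit remark that positivity of the lower constant (hence invertibility) is ensured by the ambient assumption on $\delta_T$, which the paper leaves implicit.
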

\begin{proof}

  Note that ${\bm{X}}_{\A}^\top(\bm{\mathrm{I}}_n - {\bm{H}}_{\mB}) {\bm{X}}_{\A} = {\bm{X}}_{\A}^\top {\bm{X}}_{\A} - {\bm{X}}_{\A}^\top {\bm{X}}_{\mB}({\bm{X}}_{\mB}^\top {\bm{X}}_{\mB})^{-1}{\bm{X}}_{\mB}^\top {\bm{X}}_{\A}$.
  For the right-hand side of \eqref{lemC3}, we have
  \begin{align*}
    \|{\bm{X}}_{\A}^\top(\bm{\mathrm{I}}_n - {\bm{H}}_{\mB}) {\bm{X}}_{\A}\bm u\|_2 \leqslant& \|{\bm{X}}_{\A}^\top {\bm{X}}_{\A}{\bm{u}}\|_2 + \|{\bm{X}}_{\A}^\top {\bm{X}}_{\mB}({\bm{X}}_{\mB}^\top {\bm{X}}_{\mB})^{-1}{\bm{X}}_{\mB}^\top {\bm{X}}_{\A}{\bm{u}}\|_2\\
    \leqslant& nc^*(\tau)\|{\bm{u}}\|_2 + n\frac{\omega_{\tau}^2}{c_*(\tau)}\|{\bm{u}}\|_2\\
    =&n\left(c^*(\tau)+\frac{\omega_{\tau}^2}{c_*(\tau)}\right)\|{\bm{u}}\|_2,
  \end{align*}
  where the first inequality follows from triangle inequality, and the second inequality follows from the definition of $\omega_{\tau}$.
  Similarly, for the left-hand side, we have
  \begin{align*}
    \|{\bm{X}}_{\A}^\top(\bm{\mathrm{I}}_n - {\bm{H}}_{\mB}) {\bm{X}}_{\A}{\bm{u}}\|_2 \geqslant& \|{\bm{X}}_{\A}^\top {\bm{X}}_{\A}{\bm{u}}\|_2 - \|{\bm{X}}_{\A}^\top {\bm{X}}_{\mB}({\bm{X}}_{\mB}^\top {\bm{X}}_{\mB})^{-1}{\bm{X}}_{\mB}^\top {\bm{X}}_{\A}{\bm{u}}\|_2\\
    \geqslant& nc_*(\tau)\|{\bm{u}}\|_2 - n\frac{\omega_{\tau}^2}{c_*(\tau)}\|{\bm{u}}\|_2\\
    =&n\left(c_*(\tau)-\frac{\omega_{\tau}^2}{c_*(\tau)}\right)\|{\bm{u}}\|_2.
  \end{align*}
  This proves \eqref{lemC3}. \eqref{lemC5} is a direct consequence of \eqref{lemC3}.  
  Obviously, when $\tau$ increases, $c_*(\tau)$ decreases and $c^*(\tau)$ increases. The spectrum of ${\bm{X}}_{\A}^\top {\bm{X}}_{\mB}/n$
  can be bounded by $\delta_{T} = \max\{1-c_*(2T), c^*(2T)-1\}$ since ${\bm{X}}_{\A}^\top {\bm{X}}_{\mB}/n$ is an off-diagonal sub-matrix of ${\bm{X}}_{\A\cup \mB}^\top {\bm{X}}_{\A\cup\mB}/n-\bm{\mathrm{I}}_{\#\{\A\cup\mB\}}$.
  \qedsymbol
\end{proof}

\begin{lemma}\label{lem:A}
  Assume the conditions in Theorem 1 hold. With probability at least $1-\delta_1$, we have
  \begin{equation}\label{ap:Le1}
    \|{\bm{\beta}}^*_{\A_{12}}\|_2 \leqslant 2(1+\eta)\frac{\omega_T}{c_*(T)}\|{\bm{\beta}}^*_{\I_1}\|_2
  \end{equation}
  and
  \begin{equation}\label{ap:Le2}
    \|{\bm{\beta}}^*_{\I_{12}}\|_2 \leqslant 2(1+\eta)\frac{\left(\omega_T+\frac{\omega_T^2}{c_*(T)}\right)}{c_*(T)}\|{\bm{\beta}}^*_{\I_1}\|_2.
  \end{equation}
\end{lemma}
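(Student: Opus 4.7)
The plan is to prove both bounds by pairing the ordering properties that characterize $\mS_1$ and $\mS_2$ with explicit expressions for $\hat{{\bm{\beta}}}_{\hat\A}$ and $\hat{{\bm{d}}}_{\hat\I}$. Substituting ${\bm{y}} = {\bm{X}}_{\A_1}{\bm{\beta}}^*_{\A_1} + {\bm{X}}_{\I_1}{\bm{\beta}}^*_{\I_1} + {\bm{\varepsilon}}$ and using $\A_1 \subseteq \hat\A$ yields
\begin{align*}
\hat{{\bm{\beta}}}_{\hat\A} - {\bm{e}}_{\A_1}{\bm{\beta}}^*_{\hat\A} &= ({\bm{X}}_{\hat\A}^\top {\bm{X}}_{\hat\A})^{-1}{\bm{X}}_{\hat\A}^\top {\bm{X}}_{\I_1}{\bm{\beta}}^*_{\I_1} + ({\bm{X}}_{\hat\A}^\top {\bm{X}}_{\hat\A})^{-1}{\bm{X}}_{\hat\A}^\top {\bm{\varepsilon}}, \\
\hat{{\bm{d}}}_{\hat\I} &= \frac{1}{n}{\bm{X}}_{\hat\I}^\top(\bm{\mathrm{I}}_n - {\bm{H}}_{\hat\A})({\bm{X}}_{\I_1}{\bm{\beta}}^*_{\I_1} + {\bm{\varepsilon}}).
\end{align*}
The signal parts are then controlled via GSRC, since $\|{\bm{X}}_{\hat\A}^\top {\bm{X}}_{\I_1}/n\|_{op} \leqslant \omega_T$ and $\lambda_{\min}({\bm{X}}_{\hat\A}^\top {\bm{X}}_{\hat\A}/n) \geqslant c_*(T)$ together give the operator bound $\omega_T/c_*(T)$; the stochastic parts are deferred to a uniform sub-Gaussian argument at the end.

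For \eqref{ap:Le1}, I exploit that $\mS_1$ contains the $C=|\I_1|$ groups of $\hat\A$ with smallest $\|\hat{{\bm{\beta}}}_{G_j}\|_2$. A short cardinality count gives $|\A_{21}| = T - s^* + |\A_{12}| \geqslant |\A_{12}|$ because $T \geqslant s^*$, so I can pick any $\tilde\A_{21} \subseteq \A_{21}$ with $|\tilde\A_{21}| = |\A_{12}|$. The groupwise inequality $\max_{i\in \A_{12}}\|\hat{{\bm{\beta}}}_{G_i}\|_2 \leqslant \min_{j\in \A_{21}}\|\hat{{\bm{\beta}}}_{G_j}\|_2$ then upgrades to $\|\hat{{\bm{\beta}}}_{\A_{12}}\|_2 \leqslant \|\hat{{\bm{\beta}}}_{\tilde\A_{21}}\|_2$, and since ${\bm{\beta}}^*_{\tilde\A_{21}} = 0$ the triangle inequality yields
\[
\|{\bm{\beta}}^*_{\A_{12}}\|_2 \leqslant \|\hat{{\bm{\beta}}}_{\A_{12}} - {\bm{\beta}}^*_{\A_{12}}\|_2 + \|\hat{{\bm{\beta}}}_{\tilde\A_{21}} - {\bm{\beta}}^*_{\tilde\A_{21}}\|_2.
\]
Each right-hand term is at most $(\omega_T/c_*(T))\|{\bm{\beta}}^*_{\I_1}\|_2$ plus a noise residual; on the good event where every residual is bounded by $\eta(\omega_T/c_*(T))\|{\bm{\beta}}^*_{\I_1}\|_2$, the sum produces the claimed factor $2(1+\eta)$.

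For \eqref{ap:Le2}, the size identity $|\I_{11}| + |\I_{12}| = |\I_1| = C = |\mS_2| = |\I_{11}| + |\I_{21}|$ forces $|\I_{12}| = |\I_{21}|$, so the $\mS_2$-ordering gives $\|\hat{{\bm{d}}}_{\I_{12}}\|_2 \leqslant \|\hat{{\bm{d}}}_{\I_{21}}\|_2$. Decomposing
\[
\hat{{\bm{d}}}_{\I_{12}} = \frac{1}{n}{\bm{X}}_{\I_{12}}^\top{\bm{X}}_{\I_{12}}{\bm{\beta}}^*_{\I_{12}} + \frac{1}{n}{\bm{X}}_{\I_{12}}^\top{\bm{X}}_{\I_{11}}{\bm{\beta}}^*_{\I_{11}} - \frac{1}{n}{\bm{X}}_{\I_{12}}^\top{\bm{H}}_{\hat\A}{\bm{X}}_{\I_1}{\bm{\beta}}^*_{\I_1} + \frac{1}{n}{\bm{X}}_{\I_{12}}^\top(\bm{\mathrm{I}}_n - {\bm{H}}_{\hat\A}){\bm{\varepsilon}},
\]
I lower bound the diagonal term by $c_*(T)\|{\bm{\beta}}^*_{\I_{12}}\|_2$ via $\lambda_{\min}({\bm{X}}_{\I_{12}}^\top{\bm{X}}_{\I_{12}}/n) \geqslant c_*(T)$ and upper bound the two cross terms by $\omega_T\|{\bm{\beta}}^*_{\I_1}\|_2$ and $(\omega_T^2/c_*(T))\|{\bm{\beta}}^*_{\I_1}\|_2$ using Lemma \ref{lem:C}. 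The analogous expansion of $\hat{{\bm{d}}}_{\I_{21}}$ has no diagonal term because ${\bm{\beta}}^*_{\I_{21}} = 0$; combining the two sides through the ordering and absorbing the noise residuals gives the target bound $2(1+\eta)(\omega_T + \omega_T^2/c_*(T))/c_*(T)$.

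The principal obstacle will be the uniform high-probability control of the two noise residuals $\|({\bm{X}}_{\hat\A}^\top{\bm{X}}_{\hat\A})^{-1}{\bm{X}}_{\hat\A}^\top{\bm{\varepsilon}}\|_2$ and $\|{\bm{X}}_{\I_{12}}^\top(\bm{\mathrm{I}}_n - {\bm{H}}_{\hat\A}){\bm{\varepsilon}}/n\|_2$, since $\hat\A$ and $\I_{12}$ are data dependent. I plan to fix an arbitrary subset $\A$ of size $T$, apply a Hoeffding-type sub-Gaussian tail bound under (C1) to the relevant linear functionals of ${\bm{\varepsilon}}$, and then take a union bound over the at most $\binom{J}{T} \leqslant p^T$ candidate subsets. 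Using the lower bound $\|{\bm{\beta}}^*_{\I_1}\|_2^2 \geqslant |\I_1|\vartheta$ together with the minimum-signal assumption (C4), the resulting tail probability matches the quoted $\delta_1 = O(p\exp\{-nC_1\vartheta/(s^*p_{\max})\})$, completing the argument.
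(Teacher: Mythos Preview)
Your deterministic argument is essentially the paper's: both proofs use the ordering inherent in $\mS_1,\mS_2$, split $\hat{{\bm{\beta}}}$ and $\hat{{\bm{d}}}$ into a signal and a noise piece, and control the signal via GSRC. You work with the full least-squares map $({\bm{X}}_{\hat\A}^\top{\bm{X}}_{\hat\A})^{-1}{\bm{X}}_{\hat\A}^\top$, whereas the paper uses the profile form $({\bm{X}}_{\A_1}^\top(\bm{\mathrm{I}}_n-{\bm{H}}_{\A_2}){\bm{X}}_{\A_1})^{-1}{\bm{X}}_{\A_1}^\top(\bm{\mathrm{I}}_n-{\bm{H}}_{\A_2})$ for $\hat{{\bm{\beta}}}_{\A_1}$ and its analogue for $\hat{{\bm{\beta}}}_{\A_2}$, then invokes Lemma~\ref{lem:C}; both routes produce the same $\omega_T/c_*(T)$ constant. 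Your subset trick $\tilde\A_{21}$ is equivalent to the paper's $\sqrt{|\A_{12}|/|\A_{21}|}$ averaging since $|\A_{12}|\leqslant|\A_{21}|$. For \eqref{ap:Le2} the two decompositions of $\hat{{\bm{d}}}_{\I_{12}}$ differ only in whether the diagonal block ${\bm{X}}_{\I_{12}}^\top{\bm{X}}_{\I_{12}}$ is isolated before or after the projection $\bm{\mathrm{I}}_n-{\bm{H}}_{\hat\A}$, and the resulting bounds coincide.

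The one place your plan departs materially is the stochastic control. A union bound over $\binom{J}{T}\leqslant p^{T}$ candidate active sets produces a prefactor of order $p^{T}$, not $p$, so the resulting tail does \emph{not} match $\delta_1=O(p\exp\{-nC_1\vartheta/(s^*p_{\max})\})$ as you assert. The paper sidesteps the data-dependence of $\hat\A$ without enumerating subsets: after applying the deterministic GSRC inequalities (and the contraction $\|\bm{\mathrm{I}}_n-{\bm{H}}_{\hat\A}\|_{op}\leqslant 1$), every noise term is reduced to a bound on $\max_{j\in\mS,\,1\leqslant i\leqslant p_j}|({\bm{X}}_{G_j}^{(i)})^\top{\bm{\varepsilon}}|$. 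Since $\|{\bm{X}}_{\A}^\top{\bm{\varepsilon}}\|_2\leqslant\sqrt{\#\{\A\}}\max_{j,i}|({\bm{X}}_{G_j}^{(i)})^\top{\bm{\varepsilon}}|$ holds for \emph{any} $\A$, a single Hoeffding bound plus a union over the $p$ columns suffices and yields the correct $p$ prefactor in $\delta_1$. Replace your subset enumeration with this columnwise argument and your proof goes through with the stated probability.
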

\begin{proof}
  By the definition of $\mS_1$, we have
  \begin{equation*}
    \frac{1}{|\A_{12}|}\sum_{j \in \A_{12}}  \|\hat{\bm{\beta}}_{G_j}\|_2^2 \leqslant \frac{1}{|\A_{21}|}\sum_{j \in \A_{21}}  \|\hat{\bm{\beta}}_{G_j}\|_2^2.
  \end{equation*}
  Then,
  \begin{equation}\label{eq:lemA11}
    \|\hat{\bm{\beta}}_{\A_{12}}\|_2 \leqslant \sqrt{\frac{|\A_{12}|}{|\A_{21}|}}\|\hat{\bm{\beta}}_{\A_{21}}\|_2.
  \end{equation}
  Note that the maximum profile likelihood estimators of ${\bm{\beta}}_{\A_1}^*$ and ${\bm{\beta}}_{\A_2}^*$ are
  \begin{align}\label{eq:lemA1}
    \begin{split}
    \hat {\bm{\beta}}_{\A_1} &=({\bm{X}}^\top_{\A_1}(\bm{\mathrm{I}}_n-{\bm{H}}_{\A_2}){\bm{X}}_{\A_1})^{-1}{\bm{X}}_{\A_1}^\top(\bm{\mathrm{I}}_n-{\bm{H}}_{\A_2}){\bm{y}}\\
    &= {\bm{\beta}}^*_{\A_1} + ({\bm{X}}^\top_{\A_1}(\bm{\mathrm{I}}_n-{\bm{H}}_{\A_2}){\bm{X}}_{\A_1})^{-1}{\bm{X}}_{\A_1}^\top(\bm{\mathrm{I}}_n-{\bm{H}}_{\A_2})({\bm{X}}_{\I_1}{\bm{\beta}}^*_{\I_1}+{\bm{\varepsilon}})
  \end{split}
  \end{align}
  and
  \begin{align}\label{eq:lemA2}
    \begin{split}
    \hat {\bm{\beta}}_{\A_2} &=({\bm{X}}^\top_{\A_2}(\bm{\mathrm{I}}_n-{\bm{H}}_{\A_1}){\bm{X}}_{\A_2})^{-1}{\bm{X}}_{\A_2}^\top(\bm{\mathrm{I}}_n-{\bm{H}}_{\A_1}){\bm{y}}\\
    &= {\bm{\beta}}^*_{\A_2} + ({\bm{X}}^\top_{\A_2}(\bm{\mathrm{I}}_n-{\bm{H}}_{\A_1}){\bm{X}}_{\A_2})^{-1}{\bm{X}}_{\A_2}^\top(\bm{\mathrm{I}}_n-{\bm{H}}_{\A_1})({\bm{X}}_{\I_1}{\bm{\beta}}^*_{\I_1}+{\bm{\varepsilon}})\\
    &=({\bm{X}}^\top_{\A_2}(\bm{\mathrm{I}}_n-{\bm{H}}_{\A_1}){\bm{X}}_{\A_2})^{-1}{\bm{X}}_{\A_2}^\top(\bm{\mathrm{I}}_n-{\bm{H}}_{\A_1})({\bm{X}}_{\I_1}{\bm{\beta}}^*_{\I_1}+{\bm{\varepsilon}}),
  \end{split}
  \end{align}
  where the second equalities in \eqref{eq:lemA1} and \eqref{eq:lemA2} use the projection property of $\bm{\mathrm{I}}_n-{\bm{H}}_{\A_2}$ and $\bm{\mathrm{I}}_n-{\bm{H}}_{\A_1}$, and the last equality in \eqref{eq:lemA2} follows from ${\bm{\beta}}_{\A_2}^*=0$.
  By \eqref{eq:lemA1}, we have
  \begin{align}\label{eq:lemA3}
    \begin{split}
    \|\hat {\bm{\beta}}_{\A_{12}}\|_2 \geqslant& \|{\bm{\beta}}^*_{\A_{12}}\|_2 - \|{\bm{e}}^\top_{\A_{12}}({\bm{X}}^\top_{\A_1}(\bm{\mathrm{I}}_n-{\bm{H}}_{\A_2}){\bm{X}}_{\A_1})^{-1}{\bm{X}}_{\A_1}^\top(\bm{\mathrm{I}}_n-{\bm{H}}_{\A_2}){\bm{X}}_{\I_1}{\bm{\beta}}^*_{\I_1}\|_2-\\
    &\|{\bm{e}}^\top_{\A_{12}}({\bm{X}}^\top_{\A_1}(\bm{\mathrm{I}}_n-{\bm{H}}_{\A_2}){\bm{X}}_{\A_1})^{-1}{\bm{X}}_{\A_1}^\top(\bm{\mathrm{I}}_n-{\bm{H}}_{\A_2}){\bm{\varepsilon}}\|_2\\
    \geqslant& \|{\bm{\beta}}^*_{\A_{12}}\|_2 -\frac{\omega_T}{c_*(T)}\|{\bm{\beta}}^*_{\I_1}\|_2-\\
    &\|{\bm{e}}^\top_{\A_{12}}({\bm{X}}^\top_{\A_{1}}(\bm{\mathrm{I}}_n-{\bm{H}}_{\A_2}){\bm{X}}_{\A_1})^{-1}{\bm{X}}_{\A_1}^\top (\bm{\mathrm{I}}_n-{\bm{H}}_{\A_2}){\bm{\varepsilon}}\|_2,
  \end{split}
  \end{align}
  where the first inequality follows from the triangle inequality, and the second inequality follows from \eqref{lemC5} and the definition of $\omega_T$.
  Similarly, by \eqref{eq:lemA2}, we have
\begin{align}\label{eq:lemA4}
  \begin{split}
  \|\hat {\bm{\beta}}_{\A_{21}}\|_2 \leqslant& \|{\bm{e}}^\top_{\A_{21}}({\bm{X}}^\top_{A_2}(\bm{\mathrm{I}}_n-{\bm{H}}_{\A_1}){\bm{X}}_{\A_2})^{-1}{\bm{X}}_{\A_2}^\top(\bm{\mathrm{I}}_n-{\bm{H}}_{\A_1}){\bm{X}}_{\I_1}{\bm{\beta}}^*_{\I_1}\|_2+\\
  &\|{\bm{e}}^\top_{\A_{21}}({\bm{X}}^\top_{A_2}(\bm{\mathrm{I}}_n-{\bm{H}}_{\A_1}){\bm{X}}_{\A_2})^{-1}{\bm{X}}_{\A_2}^\top(\bm{\mathrm{I}}_n-{\bm{H}}_{\A_1}){\bm{\varepsilon}}\|_2\\
  \leqslant& \frac{\omega_T}{c_*(T)}\|{\bm{\beta}}^*_{\I_1}\|_2+\|{\bm{e}}^\top_{\A_{21}}({\bm{X}}^\top_{\A_{2}}(\bm{\mathrm{I}}_n-{\bm{H}}_{\A_1}){\bm{X}}_{\A_2})^{-1}{\bm{X}}_{\A_2}^\top (\bm{\mathrm{I}}_n-{\bm{H}}_{\A_1}){\bm{\varepsilon}}\|_2.
\end{split}
\end{align}
From \eqref{eq:lemA11}, \eqref{eq:lemA3} and \eqref{eq:lemA4}, we have
\begin{align}\label{lemA5}
  \begin{split}
  \|{\bm{\beta}}^*_{\A_{12}}\|_2 \leqslant &\left(1+\sqrt{\frac{|\A_{12}|}{|\A_{21}|}}\right)\frac{\omega_T}{c_*(T)}\|{\bm{\beta}}^*_{\I_1}\|_2+\\
  &\|{\bm{e}}^\top_{\A_{12}}({\bm{X}}^\top_{\A_{1}}(\bm{\mathrm{I}}_n-{\bm{H}}_{\A_2}){\bm{X}}_{\A_1})^{-1}{\bm{X}}_{\A_1}^\top (\bm{\mathrm{I}}_n-{\bm{H}}_{\A_2}){\bm{\varepsilon}}\|_2+\\
  &\sqrt{\frac{|\A_{12}|}{|\A_{21}|}}\|{\bm{e}}^\top_{\A_{21}}({\bm{X}}^\top_{A_2}(\bm{\mathrm{I}}_n-{\bm{H}}_{\A_1}){\bm{X}}_{\A_2})^{-1}{\bm{X}}_{\A_2}^\top(\bm{\mathrm{I}}_n-{\bm{H}}_{\A_1}){\bm{\varepsilon}}\|_2.
\end{split}
\end{align}
Since $\mS_1 = \A_{12} \cup \A_{22}$ and $\A_2 = \A_{21} \cup\A_{22}$.
Note that $|\I_1| = |\mS_1| = C$ and $|\I_1|+|\A_1| = |\A^*| \leqslant |\hat \A| = |\A_1| + |\A_2|$. We have $|\A_{12}| \leqslant |\A_{21}|$.
Therefore, we can simplify \eqref{lemA5} as
\begin{align}\label{lemA6}
  \begin{split}
  \|{\bm{\beta}}^*_{\A_{12}}\|_2 \leqslant &2\frac{\omega_T}{c_*(T)}\|{\bm{\beta}}^*_{\I_1}\|_2+\|{\bm{e}}^\top_{\A_{12}}({\bm{X}}^\top_{\A_{1}}(\bm{\mathrm{I}}_n-{\bm{H}}_{\A_2}){\bm{X}}_{\A_1})^{-1}{\bm{X}}_{\A_1}^\top (\bm{\mathrm{I}}_n-{\bm{H}}_{\A_2}){\bm{\varepsilon}}\|_2+\\
  &\|{\bm{e}}^\top_{\A_{21}}({\bm{X}}^\top_{A_2}(\bm{\mathrm{I}}_n-{\bm{H}}_{\A_1}){\bm{X}}_{\A_2})^{-1}{\bm{X}}_{\A_2}^\top(\bm{\mathrm{I}}_n-{\bm{H}}_{\A_1}){\bm{\varepsilon}}\|_2.
\end{split}
\end{align}
Next, we bound the components related to ${\bm{\varepsilon}}$ in terms of $\|{\bm{\beta}}^*_{\I_1}\|_2$. We have
\begin{align}\label{lemA7}
  \begin{split}
  &P\left(\|{\bm{e}}^\top_{\A_{12}}({\bm{X}}^\top_{\A_{1}}(\bm{\mathrm{I}}_n-{\bm{H}}_{\A_2}){\bm{X}}_{\A_1})^{-1}{\bm{X}}_{\A_1}^\top (\bm{\mathrm{I}}_n-{\bm{H}}_{\A_2}){\bm{\varepsilon}}\|_2>\eta\frac{\omega_T}{c_*(T)}\|{\bm{\beta}}^*_{\I_1}\|_2\right)\\
\leqslant & P\left(\|{\bm{X}}^\top_{\A_1}(\bm{\mathrm{I}}_n-{\bm{H}}_{\A_2}){\bm{\varepsilon}}\|_2>\eta\frac{n\omega_T\left(c_*(T)-\frac{\omega_T^2}{c_*(T)}\right)}{c_*(T)}\|{\bm{\beta}}^*_{\I_1}\|_2\right)\\
\leqslant& \sum_{j\in \A_1}\sum_{i=1}^{p_j}P\left(|({\bm{X}}^{(i)}_{G_j})^\top{\bm{\varepsilon}}| > \frac{n\eta\omega_T\left(c_*(T)-\frac{\omega_T^2}{c_*(T)}\right)}{c_*(T)\sqrt{\#\{\A_1\}}}\|{\bm{\beta}}^*_{\I_1}\|_2\right)\\
\leqslant& 2p \exp\{-nC_1\vartheta /s^*p_{\max}\} \leqslant \frac{\delta_1}{2},
\end{split}
\end{align}
where the first inequality follows from the right-hand side of \eqref{lemC5}, the second inequality follows from the idempotency of $\bm{\mathrm{I}}_n-{\bm{H}}_{\A_1}$, and the third inequality follows from the Hoeffding's inequality and $\#\{\A_1\} \leqslant s^*p_{\max}$ in which $C_1$ is some positive constant depending on the spectrum bounds in GSRC.
Similar to \eqref{lemA7}, we have
\begin{align}\label{lemA8}
  \begin{split}
  &P\left(\|{\bm{e}}^\top_{\A_{21}}({\bm{X}}^\top_{\A_{2}}(\bm{\mathrm{I}}_n-{\bm{H}}_{\A_1}){\bm{X}}_{\A_2})^{-1}{\bm{X}}_{\A_2}^\top (\bm{\mathrm{I}}_n-{\bm{H}}_{\A_1}){\bm{\varepsilon}}\|_2>\eta\frac{\omega_T}{c_*(T)}\|{\bm{\beta}}^*_{\I_1}\|_2\right)\\
\leqslant& 2p \exp\{-nC_1\vartheta /s^*p_{\max}\} \leqslant \frac{\delta_1}{2}.
\end{split}
\end{align}
Combini8ng \eqref{lemA6}, \eqref{lemA7} and \eqref{lemA8}, we have
\begin{align*}
  P\left(\|{\bm{\beta}}^*_{\A_{12}}\|_2 \leqslant 2(1+\eta)\frac{\omega_T}{c_*(T)}\|{\bm{\beta}}^*_{\I_1}\|_2\right) \geqslant 1-\delta_1.
  \end{align*}

Next, we turn to the proof of \eqref{ap:Le2}.
By the definition of $\mS_2$, we have
\begin{equation*}
  \frac{1}{|\I_{12}|}\sum_{j \in \I_{12}}  \|\hat d_{G_j}\|_2^2 \leqslant \frac{1}{|\I_{21}|}\sum_{j \in \I_{21}}  \|\hat d_{G_j}\|_2^2.
\end{equation*}
Since $|\I_{11}|+|\I_{21}|=|\mS_2|=|\I_1|=|\I_{11}|+|\I_{12}| = C$, we have $|\I_{21}| = |\I_{12}|$.
Then we obtain
\begin{equation}\label{lemA12}
  \|\hat d_{\I_{12}}\|_2 \leqslant \|\hat d_{\I_{21}}\|_2.
\end{equation}
Note that
\begin{align}\label{lemA9}
  \begin{split}
  n\|\hat d_{\I_{12}}\|_2 =& \|{\bm{X}}_{\I_{12}}^\top(\bm{\mathrm{I}}_n-{\bm{H}}_{\hat \A}){\bm{y}}\|_2=\|{\bm{X}}_{\I_{12}}^\top(\bm{\mathrm{I}}_n-{\bm{H}}_{\hat \A})({\bm{X}}_{\I_1}{\bm{\beta}}^*_{\I_1}+{\bm{\varepsilon}})\|_2\\
  \geqslant& \|{\bm{X}}_{\I_{12}}^\top(\bm{\mathrm{I}}_n-{\bm{H}}_{\hat \A}){\bm{X}}_{\I_{12}}{\bm{\beta}}^*_{\I_{12}}\|_2 - \|{\bm{X}}_{\I_{12}}^\top(\bm{\mathrm{I}}_n-{\bm{H}}_{\hat \A}){\bm{X}}_{\I_{11}}{\bm{\beta}}^*_{\I_{11}}\|_2-\\
  &\|{\bm{X}}_{\I_{12}}^\top(\bm{\mathrm{I}}_n-{\bm{H}}_{\hat \A}){\bm{\varepsilon}}\|_2\\
  \geqslant& n\left(c_*(T)-\frac{\omega_T^2}{c_*(T)}\right)\|{\bm{\beta}}^*_{\I_{12}}\|_2 - n\omega_T \|{\bm{\beta}}^*_{\I_{11}}\|_2 - \|{\bm{X}}_{\I_{12}}^\top(\bm{\mathrm{I}}_n-{\bm{H}}_{\hat \A}){\bm{\varepsilon}}\|_2\\
  \geqslant& nc_*(T)\|{\bm{\beta}}^*_{\I_{12}}\|_2 - n\left(\omega_T+\frac{\omega_T^2}{c_*(T)}\right) \|{\bm{\beta}}^*_{\I_1}\|_2 - \|{\bm{X}}_{\I_{12}}^\top(\bm{\mathrm{I}}_n-{\bm{H}}_{\hat \A}){\bm{\varepsilon}}\|_2,
  \end{split}
\end{align}
where the second inequality follows from \eqref{lemC3} and the definition of $\omega_T$.
Similarly, we have
\begin{align}\label{lemA10}
  \begin{split}
  n\|\hat d_{\I_{21}}\|_2 =& \|{\bm{X}}_{\I_{21}}^\top(\bm{\mathrm{I}}_n-{\bm{H}}_{\hat \A}){\bm{y}}\|_2=\|{\bm{X}}_{\I_{21}}^\top(\bm{\mathrm{I}}_n-{\bm{H}}_{\hat \A})({\bm{X}}_{\I_1}{\bm{\beta}}^*_{\I_1}+{\bm{\varepsilon}})\|_2\\
  \leqslant& \|{\bm{X}}_{\I_{21}}^\top(\bm{\mathrm{I}}_n-{\bm{H}}_{\hat \A}){\bm{X}}_{\I_{1}}{\bm{\beta}}^*_{\I_{1}}\|_2 +\|{\bm{X}}_{\I_{12}}^\top(\bm{\mathrm{I}}_n-{\bm{H}}_{\hat \A}){\bm{\varepsilon}}\|_2\\
  \leqslant& n\left(\omega_T+\frac{\omega_T^2}{c_*(T)}\right)\|{\bm{\beta}}^*_{\I_{1}}\|_2 + \|{\bm{X}}_{\I_{21}}^\top(\bm{\mathrm{I}}_n-{\bm{H}}_{\hat \A}){\bm{\varepsilon}}\|_2.
\end{split}
\end{align}
Combine\eqref{lemA12}, \eqref{lemA9} and \eqref{lemA10},
\begin{align}\label{lemA13}
  \begin{split}
  &2n\left(\omega_T+\frac{\omega_T^2}{c_*(T)}\right)\|{\bm{\beta}}_{\I_1}^*\|_2+\|{\bm{X}}_{\I_{21}}^\top(\bm{\mathrm{I}}_n-{\bm{H}}_{\hat \A}){\bm{\varepsilon}}\|_2\\
  \geqslant& nc_*(T)\|{\bm{\beta}}^*_{\I_{12}}\|_2 -\|{\bm{X}}_{\I_{12}}^\top(\bm{\mathrm{I}}_n-{\bm{H}}_{\hat \A}){\bm{\varepsilon}}\|_2.
  \end{split}
\end{align}
Note that
\begin{align}\label{lemA14}
  \begin{split}
  &P\left(\|{\bm{X}}_{\I_{21}}^\top(\bm{\mathrm{I}}_n-{\bm{H}}_{\hat \A}){\bm{\varepsilon}}\|_2 \geqslant \eta n\left(\omega_T+\frac{\omega_T^2}{c_*(T)}\right)\|{\bm{\beta}}_{\I_1}^*\|_2\right)\\
\leqslant& \sum_{j\in \I_{21}}\sum_{i=1}^{p_j}P\left(|({\bm{X}}^{(i)}_{G_j})^\top{\bm{\varepsilon}}| > \frac{n\eta \left(\omega_T+\frac{\omega_T^2}{c_*(T)}\right)}{\sqrt{\#\{\I_{21}\}}}\|{\bm{\beta}}^*_{\I_1}\|_2\right)\\
\leqslant& 2p \exp\{-nC_1\vartheta /s^*p_{\max}\} \leqslant \frac{\delta_1}{2},
  \end{split}
\end{align}
where the second inequality follows from $\#\{\I_{21}\} \leqslant s^*p_{\max}$. Here positive constant $C_1$ depends on the spectrum bounds in GSRC.
Similarly, we have,
\begin{align}\label{lemA15}
  \begin{split}
  P\left(\|{\bm{X}}_{\I_{12}}^\top(\bm{\mathrm{I}}_n-{\bm{H}}_{\hat \A}){\bm{\varepsilon}}\|_2 \geqslant \eta n\left(\omega_T+\frac{\omega_T^2}{c_*(T)}\right)\|{\bm{\beta}}_{\I_1}^*\|_2\right)\leqslant \frac{\delta_1}{2}.
  \end{split}
\end{align}
Combine \eqref{lemA13}, \eqref{lemA14} and \eqref{lemA15},
\begin{align*}
P\left(\|{\bm{\beta}}^*_{\I_{12}}\|_2 \leqslant 2(1+\eta)\frac{\left(\omega_T+\frac{\omega_T^2}{c_*(T)}\right)}{c_*(T)}\|{\bm{\beta}}^*_{\I_1}\|_2\right) \geqslant 1-\delta_1.
\end{align*}
This proves \eqref{ap:Le2}, which completes the proof of Lemma~\ref{lem:A}.
\qedsymbol
\end{proof}

\begin{lemma}\label{lem:B}
  Assume the conditions in Theorem 1 hold. We have
  \begin{equation}\label{ap:Le5}
    |({\bm{X}}_{\I_{1}}{\bm{\beta}}^*_{\I_{1}})^\top (\bm{\mathrm{I}}_n-{\bm{H}}_{\hat{\A}}){\bm{\varepsilon}}| \leqslant \frac{\eta}{8}n\left(c_*(T)-\frac{\omega_T^2}{c_*(T)}\right)\|{\bm{\beta}}^*_{\I_1}\|_2^2,
  \end{equation}
  \begin{equation}\label{ap:Le6}
    |({\bm{X}}_{\A_{12}\cup \I_{12}}{\bm{\beta}}^*_{\A_{12}\cup \I_{12}})^\top (\bm{\mathrm{I}}_n-{\bm{H}}_{\tilde{\A}}){\bm{\varepsilon}}| \leqslant \frac{\eta}{8}n\left(c_*(T)-\frac{\omega_T^2}{c_*(T)}\right)\|{\bm{\beta}}^*_{\I_1}\|_2^2,
  \end{equation}
  with probability at least $1-\delta_1$, and
  \begin{equation}\label{ap:Le4}
    \|{\bm{H}}_{\hat{\A}}{\bm{\varepsilon}}\|_2 \leqslant \sqrt{\frac{\eta}{4}n\left(c_*(T)-\frac{\omega_T^2}{c_*(T)}\right)\|{\bm{\beta}}^*_{\I_1}\|_2^2}
  \end{equation}
  \begin{equation}\label{ap:Le3}
    \|{\bm{H}}_{\tilde{\A}}{\bm{\varepsilon}}\|_2 \leqslant \sqrt{\frac{\eta}{4}n\left(c_*(T)-\frac{\omega_T^2}{c_*(T)}\right)\|{\bm{\beta}}^*_{\I_1}\|_2^2}
  \end{equation}
  with probability at least $1-\delta_2$.
\end{lemma}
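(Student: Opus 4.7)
\textbf{Proof proposal for Lemma \ref{lem:B}.} The four inequalities split naturally into two pairs that yield to the same two tools: Cauchy--Schwarz followed by a component-wise sub-Gaussian concentration bound, and a spectral-bound reduction from a projection to a linear functional of ${\bm{\varepsilon}}$. The plan is to first handle the two ``cross-term'' inequalities \eqref{ap:Le5}--\eqref{ap:Le6}, then the two ``projection'' inequalities \eqref{ap:Le4}--\eqref{ap:Le3}. The key bookkeeping point is that the index sets entering the union bound in the first pair are subsets of $\A^*$ (hence have group-count $\leqslant s^*$), whereas those entering the second pair are subsets of $\hat\A$ or $\tilde\A$ (hence have group-count $\leqslant T$); this is precisely what produces $\delta_1$ in the first pair and $\delta_2$ in the second.

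For \eqref{ap:Le5}, first I would apply Cauchy--Schwarz in the form
$|({\bm{X}}_{\I_1}{\bm{\beta}}^*_{\I_1})^\top(\bm{\mathrm{I}}_n-{\bm{H}}_{\hat\A}){\bm{\varepsilon}}| \leqslant \|{\bm{\beta}}^*_{\I_1}\|_2 \,\|{\bm{X}}_{\I_1}^\top(\bm{\mathrm{I}}_n-{\bm{H}}_{\hat\A}){\bm{\varepsilon}}\|_2$. For each coordinate $({\bm{X}}_{G_j}^{(i)})^\top(\bm{\mathrm{I}}_n-{\bm{H}}_{\hat\A}){\bm{\varepsilon}}$ with $j\in\I_1$, the orthonormalization $\|{\bm{X}}_{G_j}^{(i)}\|_2^2\leqslant n$ combined with the idempotency of $\bm{\mathrm{I}}_n-{\bm{H}}_{\hat\A}$ shows it is a mean-zero sub-Gaussian variable with proxy $O(n)$; Hoeffding's inequality then gives a tail bound of the form $2\exp(-t^2/(Cn))$. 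A union bound over the at most $\#\{\I_1\}\leqslant s^*p_{\max}$ coordinates, calibrated at
$t = C'\,n(c_*(T)-\omega_T^2/c_*(T))\|{\bm{\beta}}^*_{\I_1}\|_2/\sqrt{\#\{\I_1\}}$, collapses the $\sqrt{\#\{\I_1\}}$ factor against the exponent using $\|{\bm{\beta}}^*_{\I_1}\|_2^2\geqslant |\I_1|\vartheta$ and yields the prescribed rate $p\exp\{-nC_1\vartheta/(s^*p_{\max})\}=\delta_1$. Inequality \eqref{ap:Le6} follows along the same route: Cauchy--Schwarz produces the factor $\|{\bm{\beta}}^*_{\A_{12}\cup\I_{12}}\|_2$, which I would bound using Lemma~\ref{lem:A} as $\|{\bm{\beta}}^*_{\A_{12}\cup\I_{12}}\|_2\leqslant C\|{\bm{\beta}}^*_{\I_1}\|_2$; the remaining norm $\|{\bm{X}}_{\A_{12}\cup\I_{12}}^\top(\bm{\mathrm{I}}_n-{\bm{H}}_{\tilde\A}){\bm{\varepsilon}}\|_2$ is treated identically, noting that $\A_{12}\cup\I_{12}\subseteq \A^*$ so the union bound still spans at most $s^*p_{\max}$ coordinates, giving the same probability $1-\delta_1$ (after absorbing the Lemma~\ref{lem:A} failure event).

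For \eqref{ap:Le4} and \eqref{ap:Le3}, I would first pass from the projection to a linear functional via the spectral bound
$\|{\bm{H}}_{\hat\A}{\bm{\varepsilon}}\|_2^2 = {\bm{\varepsilon}}^\top{\bm{X}}_{\hat\A}({\bm{X}}_{\hat\A}^\top {\bm{X}}_{\hat\A})^{-1}{\bm{X}}_{\hat\A}^\top{\bm{\varepsilon}} \leqslant \|{\bm{X}}_{\hat\A}^\top{\bm{\varepsilon}}\|_2^2/(nc_*(T))$, which is valid under (C2) since $|\hat\A|\leqslant T$; the same inequality holds verbatim for $\tilde\A$ because $|\tilde\A|=|\hat\A|\leqslant T$. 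Now $\|{\bm{X}}_{\hat\A}^\top{\bm{\varepsilon}}\|_2^2 = \sum_{j\in\hat\A}\sum_{i=1}^{p_j}|({\bm{X}}_{G_j}^{(i)})^\top{\bm{\varepsilon}}|^2$ and I would bound each summand by Hoeffding, taking a threshold $t$ chosen so that $\#\{\hat\A\}\cdot t^2/(nc_*(T))$ equals the desired right-hand side $\frac{\eta}{4}n(c_*(T)-\omega_T^2/c_*(T))\|{\bm{\beta}}^*_{\I_1}\|_2^2$. With $\#\{\hat\A\}\leqslant Tp_{\max}$ and $\|{\bm{\beta}}^*_{\I_1}\|_2^2\geqslant\vartheta$, the resulting exponent becomes $-nC_2\vartheta/(Tp_{\max})$; the union bound over at most $Tp_{\max}\leqslant p$ coordinates then delivers the probability $1-\delta_2$.

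The main obstacle I expect is not any single step but the constant-tracking through Cauchy--Schwarz, Lemma~\ref{lem:A}, Hoeffding's inequality, and the spectral bound so that the exact prefactors $\frac{\eta}{8}(c_*(T)-\omega_T^2/c_*(T))$ and $\sqrt{\frac{\eta}{4}(c_*(T)-\omega_T^2/c_*(T))}$ appear on the right-hand sides. In particular, for \eqref{ap:Le6} the Lemma~\ref{lem:A} factor $2(1+\eta)(\omega_T+\omega_T^2/c_*(T))/c_*(T)$ must be absorbed by the Hoeffding threshold, which forces a careful choice of the positive constants $C_1, C_2$ inside $\delta_1,\delta_2$; since these constants are allowed to depend on the GSRC spectrum bounds, this absorption is benign but tedious. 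A second, minor subtlety is that Lemma~\ref{lem:A} is itself a probabilistic statement, so the event in \eqref{ap:Le6} is the intersection of Lemma~\ref{lem:A}'s event and a Hoeffding event, both of which contribute $O(\delta_1)$; this is handled by redefining $C_1$ to absorb the factor of $2$ in the union bound.
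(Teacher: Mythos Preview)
Your proposal is correct and follows essentially the same approach as the paper: Cauchy--Schwarz plus coordinate-wise Hoeffding for \eqref{ap:Le5}--\eqref{ap:Le6} (invoking Lemma~\ref{lem:A} to bound $\|{\bm{\beta}}^*_{\A_{12}\cup\I_{12}}\|_2$ in the second), and a spectral reduction of $\|{\bm{H}}_{\hat\A}{\bm{\varepsilon}}\|_2$ to $\|{\bm{X}}_{\hat\A}^\top{\bm{\varepsilon}}\|_2$ followed by coordinate-wise Hoeffding for \eqref{ap:Le4}--\eqref{ap:Le3}. The only cosmetic difference is that the paper's spectral reduction uses the looser factor $\sqrt{c^*(T)}/c_*(T)$ in place of your $1/\sqrt{c_*(T)}$, but both get absorbed into the free constant $C_2$.
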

\begin{proof}
  Firstly, we show ~\eqref{ap:Le5}.
\begin{align*}
  &P\left(|({\bm{X}}_{\I_{1}}{\bm{\beta}}^*_{\I_{1}})^\top (\bm{\mathrm{I}}_n-{\bm{H}}_{\tilde{\A}}){\bm{\varepsilon}}|> \frac{\eta}{8}n\left(c_*(T)-\frac{\omega_T^2}{c_*(T)}\right)\|{\bm{\beta}}^*_{\I_1}\|_2^2\right) \\
\leqslant &P\left(\|{\bm{X}}_{\I_1}^\top{\bm{\varepsilon}}\|_2 \|{\bm{\beta}}^*_{\I_1}\|_2> \frac{\eta}{8}n\left(c_*(T)-\frac{\omega_T^2}{c_*(T)}\right)\|{\bm{\beta}}^*_{\I_1}\|_2^2\right)\\
\leqslant & \sum_{j\in \I_1}\sum_{i=1}^{p_j}P\left(|({\bm{X}}^{(i)}_{G_j})^\top{\bm{\varepsilon}}|>\frac{\eta}{8\sqrt{\#\{\I_1\}}}n\left(c_*(T)-\frac{\omega_T^2}{c_*(T)}\right)\|{\bm{\beta}}^*_{\I_1}\|_2\right)\\
\leqslant &2p \exp\{-nC_1\vartheta /s^*p_{\max}\} \leqslant \delta_1,
\end{align*}
where the first inequality uses Cauchy inequality, and $C_1$ is some positive constant depending on the spectrum bounds in GSRC.

Next we prove \eqref{ap:Le6}. With probability at least $1-\delta_1$, by \eqref{ap:Le1} and \eqref{ap:Le2}, we have
\begin{align}\label{lemB1}
  \begin{split}
    \|{\bm{\beta}}^*_{\A_{12}\cup \I_{12}}\|_2^2 \leqslant&(2(1+\eta)\frac{\omega_T}{c_*(T)})^2\|{\bm{\beta}}^*_{\I_1}\|_2^2+(2(1+\eta)\frac{\left(\omega_T+\frac{\omega_T^2}{c_*(T)}\right)}{c_*(T)})^2\|{\bm{\beta}}^*_{\I_1}\|_2^2\\
    \leqslant&8(\frac{(1+\eta)\left(\omega_T+\frac{\omega_T^2}{c_*(T)}\right)}{c_*(T)})^2\|{\bm{\beta}}^*_{\I_1}\|_2^2.
  \end{split}
\end{align}
Thus, we have
\begin{align*}
  &P\left(|({\bm{X}}_{\A_{12}\cup \I_{12}}{\bm{\beta}}^*_{\A_{12}\cup \I_{12}})^\top (\bm{\mathrm{I}}_n-{\bm{H}}_{\tilde{\A}}){\bm{\varepsilon}}| > \frac{\eta}{8}n\left(c_*(T)-\frac{\omega_T^2}{c_*(T)}\right)\|{\bm{\beta}}^*_{\I_1}\|_2^2\right) \\
\leqslant &P\left(\|{\bm{X}}_{\A_{12}\cup \I_{12}}^\top{\bm{\varepsilon}}\|_2 \|{\bm{\beta}}^*_{\A_{12}\cup \I_{12}}\|_2> \frac{\eta}{8}n\left(c_*(T)-\frac{\omega_T^2}{c_*(T)}\right)\|{\bm{\beta}}^*_{\I_1}\|_2^2\right)\\
\leqslant &P\left(\|{\bm{X}}_{\A_{12}\cup \I_{12}}^\top{\bm{\varepsilon}}\|_2 (2\sqrt{2}\frac{(1+\eta)\left(\omega_T+\frac{\omega_T^2}{c_*(T)}\right)}{c_*(T)})\|{\bm{\beta}}^*_{\I_1}\|_2> \frac{\eta}{8}n\left(c_*(T)-\frac{\omega_T^2}{c_*(T)}\right)\|{\bm{\beta}}^*_{\I_1}\|_2^2\right)\\
\leqslant & \sum_{j\in \A_{12}\cup \I_{12}}\sum_{i=1}^{p_j}P\left(|({\bm{X}}^{(i)}_{G_j})^\top{\bm{\varepsilon}}|>\frac{\eta nc_*(T)\left(c_*(T)-\frac{\omega_T^2}{c_*(T)}\right)}{16\sqrt{2}(1+\eta)(\omega_T+\frac{\omega_T^2}{c_*(T)}t) \sqrt{\#\{\A_{12}\cup \I_{12}\}}}\|{\bm{\beta}}^*_{\I_1}\|_2\right)\\
\leqslant &2p \exp\{-nC_1\vartheta /s^*p_{\max}\} \leqslant \delta_1,
\end{align*}
where the second inequality follows from \eqref{lemB1}.

Next, we show \eqref{ap:Le4}. We have
\begin{align*}
  &P\left(\|{\bm{H}}_{\hat{\A}}{\bm{\varepsilon}}\|_2> \sqrt{\frac{\eta}{4}n\left(c_*(T)-\frac{\omega_T^2}{c_*(T)}\right)\|{\bm{\beta}}^*_{\I_1}\|_2^2}\right) \\
\leqslant &P\left(\frac{\sqrt{nc^*(T)}}{nc_*(T)}\|{\bm{X}}_{\hat{\A}}^\top {\bm{\varepsilon}}\|_2>\sqrt{\frac{\eta}{4}n\left(c_*(T)-\frac{\omega_T^2}{c_*(T)}\right)\|{\bm{\beta}}^*_{\I_1}\|_2^2}\right)\\
\leqslant & \sum_{j\in \hat{\A}}\sum_{i=1}^{p_j}P\left(|({\bm{X}}^{(i)}_{G_j})^\top{\bm{\varepsilon}}|>nc_*(T)\sqrt{\frac{\eta}{4c^*(T)\#\{\hat \A\}}\left(c_*(T)-\frac{\omega_T^2}{c_*(T)}\right)\|{\bm{\beta}}^*_{\I_1}\|_2^2}\right)\\
\leqslant &2p \exp\{-nC_2\vartheta/Tp_{\max}\} \leqslant \delta_2.
\end{align*} 
Here $C_2$ depends on the spectrum bounds in GSRC.
Similarly, we can obtain \eqref{ap:Le3}.
\qedsymbol
\end{proof}



\subsection{Proof of Theorem 1}
\begin{proof}
Assume $\I_1 \neq \emptyset$ and show that it will lead to a contradiction. Note that $\A^* \cap \tilde{\I} = \A_{12}\cup \I_{12}$.
Denote $\tilde{{\bm{\beta}}}$ as the least-squares estimator on $\cup_{j \in \tilde{\A}} G_j$.
The loss of $(\tilde{\A}, \tilde{\I})$ is
\begin{align}\label{thmA1}
  \begin{split}
  2nL(\tilde{{\bm{\beta}}}) = & {\bm{y}}^\top(\bm{\mathrm{I}}_n-{\bm{H}}_{\tilde{\A}}){\bm{y}} =({\bm{X}}_{\A_{12}\cup \I_{12}}{\bm{\beta}}^*_{\A_{12}\cup \I_{12}}+{\bm{\varepsilon}})^\top (\bm{\mathrm{I}}_n-{\bm{H}}_{\tilde{\A}}) ({\bm{X}}_{\A_{12}\cup \I_{12}}{\bm{\beta}}^*_{\A_{12}\cup \I_{12}}+{\bm{\varepsilon}})\\
=&({\bm{X}}_{\A_{12}\cup \I_{12}}{\bm{\beta}}^*_{\A_{12}\cup \I_{12}})^\top (\bm{\mathrm{I}}_n-{\bm{H}}_{\tilde{\A}}){\bm{X}}_{\A_{12}\cup \I_{12}}{\bm{\beta}}^*_{\A_{12}\cup \I_{12}}+{\bm{\varepsilon}}^\top (\bm{\mathrm{I}}_n-{\bm{H}}_{\tilde{\A}}){\bm{\varepsilon}}+\\
&2({\bm{X}}_{\A_{12}\cup \I_{12}}{\bm{\beta}}^*_{\A_{12}\cup \I_{12}})^\top(\bm{\mathrm{I}}_n-{\bm{H}}_{\tilde{\A}}){\bm{\varepsilon}}\\
\leqslant&nc^*(T)(\|{\bm{\beta}}^*_{\A_{12}}\|_2^2+\|{\bm{\beta}}^*_{\I_{12}}\|_2^2)+f_1({\bm{\varepsilon}})\\
\leqslant&8nc^*(T)(\frac{(1+\eta)\left(\omega_T+\frac{\omega_T^2}{c_*(T)}\right)}{c_*(T)})^2\|{\bm{\beta}}^*_{\I_1}\|_2^2+f_1({\bm{\varepsilon}})
  \end{split}
\end{align}
where the second inequality follows from \eqref{lemB1}, and denote $f_1({\bm{\varepsilon}}) = {\bm{\varepsilon}}^\top {\bm{\varepsilon}}+|{\bm{\varepsilon}}^\top {\bm{H}}_{\tilde{\A}}{\bm{\varepsilon}}|+2|({\bm{X}}_{\A_{12}\cup \I_{12}}{\bm{\beta}}^*_{\A_{12}\cup \I_{12}})^\top(\bm{\mathrm{I}}_n-{\bm{H}}_{\tilde{\A}}){\bm{\varepsilon}}|$.
The equality holds for the loss of $(\hat \A, \hat \I)$ is
\begin{align}\label{thmA2}
  \begin{split}
  2nL(\hat{{\bm{\beta}}}) = &{\bm{y}}^\top(\bm{\mathrm{I}}_n-{\bm{H}}_{\hat{\A}}){\bm{y}}=({\bm{X}}_{\I_{1}}{\bm{\beta}}^*_{\I_{1}}+{\bm{\varepsilon}})^\top (\bm{\mathrm{I}}_n-{\bm{H}}_{\hat{\A}}) ({\bm{X}}_{\I_{1}}{\bm{\beta}}^*_{\I_{1}}+{\bm{\varepsilon}})\\
\geqslant&n\left(c_*(T)-\frac{\omega_T^2}{c_*(T)}\right)\|{\bm{\beta}}^*_{\I_{1}}\|_2^2+f_2({\bm{\varepsilon}}),
\end{split}
\end{align}
where $f_2({\bm{\varepsilon}}) = {\bm{\varepsilon}}^\top {\bm{\varepsilon}}-2|({\bm{X}}_{\I_{1}}{\bm{\beta}}^*_{\I_{1}})^\top (\bm{\mathrm{I}}_n-{\bm{H}}_{\hat \A}){\bm{\varepsilon}}|-|{\bm{\varepsilon}}^\top {\bm{H}}_{\hat{\A}}{\bm{\varepsilon}}|$.

Therefore, with probability at least $1-\delta_1-\delta_2$, from \eqref{thmA1} and \eqref{thmA2}, we have
\begin{align*}
  L(\hat {\bm{\beta}}) - L(\tilde{{\bm{\beta}}})
  \geqslant&  \frac{1}{2n}( n\left(c_*(T)-\frac{\omega_T^2}{c_*(T)}\right)\|{\bm{\beta}}^*_{\I_1}\|_2^2+ f_2({\bm{\varepsilon}})-\\
  &8nc^*(T)(\frac{(1+\eta)\left(\omega_T+\frac{\omega_T^2}{c_*(T)}\right)}{c_*(T)})^2\|{\bm{\beta}}^*_{\I_1}\|_2^2  - f_1({\bm{\varepsilon}}))\\
  \geqslant&  \frac{(1-\eta)}{2}\left(c_*(T)-\frac{\omega_T^2}{c_*(T)}\right)\|{\bm{\beta}}^*_{\I_1}\|_2^2-4c^*(T)(\frac{(1+\eta)(\omega_T+\frac{\omega_T^2}{c_*(T)})}{c_*(T)})^2\|{\bm{\beta}}^*_{\I_1}\|_2^2\\
  \geqslant& \frac{(1-\mu_T)(1-\eta)\left(c_*(T)-\frac{\omega_T^2}{c_*(T)}\right)}{2}\|{\bm{\beta}}^*_{\I_1}\|_2^2\\
  \geqslant& \frac{(1-\mu_T)(1-\eta)\left(c_*(T)-\frac{\omega_T^2}{c_*(T)}\right)}{2}\vartheta\\
  >& \pi_T,
\end{align*}
where the second inequality follows from \eqref{ap:Le5}-\eqref{ap:Le3}, the third inequality follows form (C3) and the fourth inequality follows from (C4) and (C5).
Consequently,
\begin{equation*}
  P\left(L(\hat {\bm{\beta}}) - L(\tilde{{\bm{\beta}}})>\pi_T\right) \geqslant 1- \delta_1-\delta_2.
\end{equation*}
Thus Algorithm 1 continues iterations, which leads to a contradiction with $\I_1 \neq \emptyset$.
\qedsymbol
\end{proof}

\subsection{Proof of Theorem 2}

\begin{proof}
To prove Theorem 2, we need to analyze the gap between $\log L({\bm{\beta}}_1)$ and $\log L({\bm{\beta}}_2)$.
Using the inequality that $1 - \frac{1}{x} \leqslant \log x \leqslant x-1$ for any $x>0$, we have
\begin{equation}\label{thmB1}
  \dfrac{L({\bm{\beta}}_1)-L({\bm{\beta}}_2)}{L({\bm{\beta}}_1)}\leqslant \log \dfrac{L({\bm{\beta}}_1)}{L({\bm{\beta}}_2)} \leqslant \dfrac{L({\bm{\beta}}_1)-L({\bm{\beta}}_2)}{L({\bm{\beta}}_2)}.
\end{equation}
Let $\hat{{\bm{\beta}}}^* = \arg\min\limits_{ {\bm{\beta}}_{\I^*}=0}L({\bm{\beta}})$ be the least-squares estimator on $\cup_{j \in \A^*} G_j$.

First, we consider the case when $T < s^*$. With probability at least $1-\delta_1-\delta_2$, we have
\begin{align}\label{thmB5}
  \begin{split}
  2nL(\hat{{\bm{\beta}}}) - 2nL(\hat{\bm{\beta}}^*)=&{\bm{y}}^\top (\bm{\mathrm{I}}_n-{\bm{H}}_{\hat \A}){\bm{y}}-{\bm{\varepsilon}}^\top (\bm{\mathrm{I}}_n-{\bm{H}}_{\A^*}){\bm{\varepsilon}}\\
  =&({\bm{X}}_{\I_{1}}{\bm{\beta}}^*_{\I_{1}}+{\bm{\varepsilon}})^\top (\bm{\mathrm{I}}_n-{\bm{H}}_{\hat{\A}}) ({\bm{X}}_{\I_{1}}{\bm{\beta}}^*_{\I_{1}}+{\bm{\varepsilon}})-{\bm{\varepsilon}}^\top (\bm{\mathrm{I}}_n-{\bm{H}}_{\A^*}){\bm{\varepsilon}}\\
  \geqslant&n\left(c_*(T)-\frac{\omega_T^2}{c_*(T)}\right)\|{\bm{\beta}}^*_{\I_{1}}\|_2^2-2|({\bm{X}}_{\I_{1}}{\bm{\beta}}^*_{\I_{1}})^\top (\bm{\mathrm{I}}_n-{\bm{H}}_{\hat \A}){\bm{\varepsilon}}|-\\
  &|{\bm{\varepsilon}}^\top {\bm{H}}_{\hat{\A}}{\bm{\varepsilon}}|-|{\bm{\varepsilon}}^\top {\bm{H}}_{\A^*}{\bm{\varepsilon}}|\\
  \geqslant&(1-\frac{\eta}{2})n\left(c_*(T)-\frac{\omega_T^2}{c_*(T)}\right)\|{\bm{\beta}}^*_{\I_1}\|_2^2-|{\bm{\varepsilon}}^\top {\bm{H}}_{\A^*}{\bm{\varepsilon}}|,
  \end{split}
\end{align}
where the last inequality follows from \eqref{ap:Le5}, \eqref{ap:Le4} and \eqref{thmA2}.
Following from \eqref{ap:Le4} in Lemma \ref{lem:B}, we have
\begin{align}\label{thmB2}
  P\left(|{\bm{\varepsilon}}^\top {\bm{H}}_{\A^*} {\bm{\varepsilon}}| > \frac{n\eta}{2}(c_*(T)-\frac{\omega_T^2}{c_*(T)})\|{\bm{\beta}}^*_{\I_1}\|_2^2 \right) \leqslant \delta_2.
\end{align}
Combine \eqref{thmB5} and \eqref{thmB2}, with probability at least $1-\delta_1-\delta_2$,
\begin{align}\label{eq:a1}
  2nL(\hat{{\bm{\beta}}}) - 2nL(\hat{\bm{\beta}}^*)\geqslant(1-\eta)n\left(c_*(T)-\frac{\omega_T^2}{c_*(T)}\right)\|{\bm{\beta}}^*_{\I_1}\|_2^2.
\end{align}
Now turn to $2nL(\hat{{\bm{\beta}}}^*)$. With probability at least $1-\delta_2$ we have
\begin{align}\label{eq:a2}
  \begin{split}
  2nL(\hat{{\bm{\beta}}}^*) =& {\bm{\varepsilon}}^\top (\bm{\mathrm{I}}_n-{\bm{H}}_{\A^*}){\bm{\varepsilon}}\\
  \leqslant&\|{\bm{\varepsilon}}\|_2^2 + |{\bm{\varepsilon}}^\top {\bm{H}}_{\A^*} {\bm{\varepsilon}}|\\
  \leqslant& 2nL({\bm{\beta}}^*)+\frac{n\eta}{2}\left(c_*(T)-\frac{\omega_T^2}{c_*(T)}\right)\|{\bm{\beta}}^*_{\I_1}\|_2^2.
\end{split}
\end{align}
Combine \eqref{thmB1}, \eqref{eq:a1} and \eqref{eq:a2}, with probability at least $1-\delta_1-\delta_2$,
\begin{align*}
  \log\dfrac{L(\hat{{\bm{\beta}}})}{L(\hat{{\bm{\beta}}}^*)} \geqslant \frac{(1-\eta)(c_*(T)-\frac{\omega_T^2}{c_*(T)})\|{\bm{\beta}}_{\I_1}^*\|_2^2}{2L({\bm{\beta}}^*)+\frac{\eta}{2}(c_*(T)-\frac{\omega_T^2}{c_*(T)})\|{\bm{\beta}}^*_{\I_1}\|_2^2}=O(1).
\end{align*}
Let $\delta = O(p^{-\alpha})$.
Note that with probability at least $1-\delta_1-\delta_2 \geqslant 1 - \delta$ for some constant $0 < \alpha < 1$. 
Consequently, with probability at least $1-O(p^{-\alpha})$, 
\vspace{-0.1cm}
\begin{align*}
  \text{GIC}(\hat\A)-\text{GIC}(\A^*) &= n\log \dfrac{L(\hat{{\bm{\beta}}})}{L(\hat {\bm{\beta}}^*)}-(\#\{\A^*\}-\#\{\hat \A\})\log J\log(\log n)\\
  &\geqslant O(n) - \#\{\A^*\}\log J\log(\log n)\\
  &\geqslant O(n) - o(n) >0
\end{align*}
for a sufficiently large $n$, where the third inequality follows from condition (C6).

On the other hand, from Theorem 1, it holds for $T \geqslant s^*$ that
\begin{equation*}
  P\left(\hat \A \supseteq \A^*\right) \geqslant 1-\delta_1-\delta_2 \geqslant 1- \delta.
\end{equation*}
Therefore, when $T \geqslant s^*$, with probability at least $1-O(p^{-\alpha})$, we have
\begin{align*}
  L(\hat{{\bm{\beta}}}) =\frac{1}{2n} {\bm{y}}^\top (\bm{\mathrm{I}}_n-{\bm{H}}_{\hat \A}){\bm{y}} = \frac{1}{2n}{\bm{\varepsilon}}^\top (\bm{\mathrm{I}}_n-{\bm{H}}_{\hat \A}){\bm{\varepsilon}}.
\end{align*}
Especially, when $T=s^*$, we have
\begin{align*}
  L(\hat{{\bm{\beta}}})=L(\hat{{\bm{\beta}}^*}) = \frac{1}{2n}{\bm{\varepsilon}}^\top (\bm{\mathrm{I}}_n-{\bm{H}}_{\A^*}){\bm{\varepsilon}}.
\end{align*}
 
Let $\hat \A = \A^* \cup \mB$,
\vspace{-0.1cm}
\begin{align*}
  L(\hat{{\bm{\beta}}}^*)-L(\hat{{\bm{\beta}}}) &= \dfrac{1}{2n}{\bm{\varepsilon}}^\top({\bm{H}}_{\hat\A}-{\bm{H}}_{\A^*}){\bm{\varepsilon}}=\dfrac{1}{2n}{\bm{\varepsilon}}^\top(\bm{\mathrm{I}} - {\bm{H}}_{\A^*}){\bm{H}}_{\hat\A}(\bm{\mathrm{I}} - {\bm{H}}_{\A^*}){\bm{\varepsilon}}\\
  &=\dfrac{1}{2n}{\bm{\varepsilon}}^\top(\bm{\mathrm{I}}_n-{\bm{H}}_{\A^*}){\bm{X}}_{\mB}({\bm{X}}_{\mB}^\top(\bm{\mathrm{I}}_n-{\bm{H}}_{\A^*}){\bm{X}}_{\mB})^{-1}{\bm{X}}_{\mB}^\top(\bm{\mathrm{I}}_n-{\bm{H}}_{\A^*}){\bm{\varepsilon}}\\
  &=\dfrac{1}{2n}\|({\bm{X}}_{\mB}^\top(\bm{\mathrm{I}}_n-{\bm{H}}_{\A^*}){\bm{X}}_{\mB})^{-\frac{1}{2}}{\bm{X}}_{\mB}^\top(\bm{\mathrm{I}}_n-{\bm{H}}_{\A^*}){\bm{\varepsilon}}\|_2^2.
\end{align*}
Note that
\begin{align}\label{thmB3}
  \begin{split}
  &P\left(\frac{1}{\sqrt{2n}}\|({\bm{X}}_{\mB}^\top(\bm{\mathrm{I}}_n-{\bm{H}}_{\A^*}){\bm{X}}_{\mB})^{-\frac{1}{2}}{\bm{X}}_{\mB}^\top(\bm{\mathrm{I}}_n-{\bm{H}}_{\A^*}){\bm{\varepsilon}}\|_2 \geqslant t\right) \\
  \leqslant &P\left(\|{\bm{X}}_{\mB}^\top(\bm{\mathrm{I}}_n-{\bm{H}}_{\A^*}){\bm{\varepsilon}}\|_2 \geqslant \sqrt{2(c_*(T)-\frac{\omega_T^2}{c_*(T)})}nt\right)\\
  \leqslant & \sum_{j\in \mB}\sum_{i=1}^{p_j}P\left(|({\bm{X}}^{(i)}_{G_j})^\top {\bm{\varepsilon}}|>\sqrt{\frac{2}{\#\{\mB\}}(c_*(T)-\frac{\omega_T^2}{c_*(T)})}nt\right)\\
  \leqslant &2p \exp\{-\dfrac{C_1nt^2}{\#\{\mB\}}\}=\delta
\end{split}
\end{align}
for some positive constant $C_1$ depending on the spectrum bounds in GSRC.
Given $\delta = O(p^{-\alpha})$, with probability at least $1-\delta$, we can calculate the corresponding $t$ following from
\begin{align*}
  \frac{1}{2n}\|({\bm{X}}_{\mB}^\top(\bm{\mathrm{I}}_n-{\bm{H}}_{\A^*}){\bm{X}}_{\mB})^{-\frac{1}{2}}{\bm{X}}_{\mB}^\top(\bm{\mathrm{I}}_n-{\bm{H}}_{\A^*}){\bm{\varepsilon}}\|_2^2\leqslant \frac{\#\{\mB\}}{nC_1} \log \frac{2p}{\delta}.
\end{align*}
Therefore, with probability at least $1-\delta$,
\begin{align}\label{eq:a3}
  \begin{split}
  L(\hat{{\bm{\beta}}}^*)-L(\hat{{\bm{\beta}}}) =& \frac{1}{2n}\|({\bm{X}}_{\mB}^\top(\bm{\mathrm{I}}_n-{\bm{H}}_{\A^*}){\bm{X}}_{\mB})^{-\frac{1}{2}}{\bm{X}}_{\mB}^\top(\bm{\mathrm{I}}_n-{\bm{H}}_{\A^*}){\bm{\varepsilon}}\|_2^2\\
  \leqslant& \dfrac{(1+\alpha)\#\{\mB\}\log p}{nC_1}.
\end{split}
\end{align}
Define $\hat\sigma$ as the standard deviation of random variable ${\bm{\varepsilon}}_i$.
Similar to \eqref{eq:a3}, with probability at least $1-O(p^{-\alpha})$,  we have
\begin{align}\label{eq:a4}
  \begin{split}
  L(\hat{{\bm{\beta}}}) &=\frac{1}{2n}{\bm{\varepsilon}}^\top (\bm{\mathrm{I}}_n-{\bm{H}}_{\hat{\A}}) {\bm{\varepsilon}}\\
  &\geqslant \dfrac{1}{2n}\|{\bm{\varepsilon}}\|_2^2 - \frac{1}{2n}{\bm{\varepsilon}}^\top {\bm{H}}_{\hat{\A}} {\bm{\varepsilon}}\\
  &\geqslant \dfrac{\hat\sigma^2}{2} - \dfrac{(1+\alpha)\#\{\hat \A\}\log p}{nC_2},
  \end{split}
\end{align}
for some positive constant $C_2$ depending on the spectrum bounds in GSRC, where the second inequality follows from the law of large number.
From (C6), we have
\begin{align}\label{thmB4}
\frac{(1+\alpha)\#\{\hat \A\}\log p}{nC_2} \leqslant \frac{(1+\alpha)\#\{\hat \A_{T_{\max}}\}\log p}{nC_2}\rightarrow 0
\end{align}
for a sufficiently large $n$.
Combining \eqref{thmB1} \eqref{eq:a3}, \eqref{eq:a4} and \eqref{thmB4}, we have
\begin{equation*}
  n\log \dfrac{L(\hat{{\bm{\beta}}}^*)}{L(\hat{{\bm{\beta}}})}\leqslant \frac{\frac{\#\{\mB\}\log(p^{1+\alpha})}{C_1}}{\frac{\hat\sigma^2}{2} - \frac{\#\{\hat \A\}\log(p^{1+\alpha})}{nC_2}}\rightarrow \frac{2(1+\alpha)\#\{\mB\}\log p}{\hat\sigma^2C_1}
\end{equation*}
for a sufficiently large $n$.
Consequently, following from (C7), we have
\begin{align*}
  \text{GIC}(\A^*)-\text{GIC}(\hat \A) &= n\log\dfrac{L(\hat{{\bm{\beta}}}^*)}{L(\hat{{\bm{\beta}}})}-\#\{\mB\}\log J\log(\log n)\\
  &\leqslant O\left(\#\{\mB\}\log p \right)-\#\{\mB\}\log J\log(\log n)\\
  &\leqslant O\left(\#\{\mB\}\log (Jp_{\max})\right) -\#\{\mB\}\log J\log(\log n)\\
  &= O\left(\#\{\mB\}\log J +  \#\{\mB\}\log p_{\max}\right)-\#\{\mB\}\log J\log(\log n)\\
  &< 0
\end{align*}
for a sufficiently large $n$.
Therefore, Algorithm 2 identifies the true subset of groups $\A^*$ with probability at least $1-O(p^{-\alpha})$.
\qedsymbol
\end{proof}

\subsection{Proof of Theorem 3}
Denote $\mS_1^k$ and $\mS_2^k$ as the exchange subsets of groups in the $k$th iteration and
$\A_1^k = \A^k \cap \A^*,\ \I_1^k = \I^k \cap \A^*, \A_{12}^k = \A_1^k \cap \mS_1^k,\ \I_{12}^k = \I_1^k \cap (\mS_2^k)^c.$
\begin{proof}
Note that $\I_1^{k+1} = \A_{12}^k \cup \I_{12}^k$.
The error of loss in the $(k+1)$th iteration is
\begin{align}\label{thmC1}
  \begin{split}
  |2nL({\bm{\beta}}^{k+1}) - 2nL({\bm{\beta}}^*)|
  = & |({\bm{X}}_{\I_{1}^{k+1}}{\bm{\beta}}^*_{\I_{1}^{k+1}}+{\bm{\varepsilon}})^\top(\bm{\mathrm{I}}_n-{\bm{H}}_{ \A^{k+1}})({\bm{X}}_{\I_{1}^{k+1}}{\bm{\beta}}^*_{\I_{1}^{k+1}}+{\bm{\varepsilon}})-{\bm{\varepsilon}}^\top{\bm{\varepsilon}}|\\
  \leqslant&|({\bm{X}}_{\A_{12}^k\cup \I_{12}^k}{\bm{\beta}}^*_{\A_{12}^k\cup \I_{12}^k})^\top (\bm{\mathrm{I}}_n-{\bm{H}}_{\A^{k+1}}) ({\bm{X}}_{\A_{12}^k\cup \I_{12}^k}{\bm{\beta}}^*_{\A_{12}^k\cup \I_{12}^k})|+\\
  &2|({\bm{X}}_{\A_{12}^k\cup \I_{12}^k}{\bm{\beta}}^*_{\A_{12}^k\cup \I_{12}^k})^\top(\bm{\mathrm{I}}_n-{\bm{H}}_{ \A^{k+1}}){\bm{\varepsilon}}|+|{\bm{\varepsilon}}^\top {\bm{H}}_{ \A^{k+1}}{\bm{\varepsilon}}|\\
\leqslant & 8nc^*(T)[\frac{(1+\eta)(\omega_T+\frac{\omega_T^2}{c_*(T)})}{c_*(T)}]^2\|{\bm{\beta}}^*_{\I_1^{k}}\|_2^2+h_1({\bm{\varepsilon}}),
\end{split}
\end{align}
where $h_1({\bm{\varepsilon}}) = 2|({\bm{X}}_{\A_{12}^k\cup \I_{12}^k}{\bm{\beta}}^*_{\A_{12}^k\cup \I_{12}^k})^\top (\bm{\mathrm{I}}_n-{\bm{H}}_{ \A^{k+1}}){\bm{\varepsilon}}|+|{\bm{\varepsilon}}^\top {\bm{H}}_{ \A^{k+1}}{\bm{\varepsilon}}|$.

Similarly, the error of loss in the $k$th iteration is
\begin{align}\label{thmC2}
  \begin{split}
  |2nL({\bm{\beta}}^{k}) - 2nL({\bm{\beta}}^*)|=&|({\bm{X}}_{\I_{1}^{k}}{\bm{\beta}}^*_{\I_{1}^{k}}+{\bm{\varepsilon}})^\top(\bm{\mathrm{I}}_n-{\bm{H}}_{ \A^{k}})({\bm{X}}_{\I_{1}^{k}}{\bm{\beta}}^*_{\I_{1}^{k}}+{\bm{\varepsilon}})-{\bm{\varepsilon}}^\top{\bm{\varepsilon}}|\\
  \geqslant& |({\bm{X}}_{\I_{1}^{k}}{\bm{\beta}}^*_{\I_{1}^{k}})^\top(\bm{\mathrm{I}}_n-{\bm{H}}_{ \A^{k}})({\bm{X}}_{\I_{1}^{k}}{\bm{\beta}}^*_{\I_{1}^{k}})|-\\
&2|({\bm{X}}_{\I_{1}^{k}}{\bm{\beta}}^*_{\I_{1}^{k}})^\top(\bm{\mathrm{I}}_n-{\bm{H}}_{ \A^{k}}){\bm{\varepsilon}}|-|{\bm{\varepsilon}}^\top {\bm{H}}_{ \A^{k}}{\bm{\varepsilon}}|\\
\geqslant&n(c^*(T)-\frac{\omega_T^2}{c_*(T)})\|{\bm{\beta}}^*_{\I_{1}^k}\|_2^2-h_2({\bm{\varepsilon}}),
\end{split}
\end{align}
where $h_2({\bm{\varepsilon}}) = 2|({\bm{X}}_{\I_{1}^{k}}{\bm{\beta}}^*_{\I_{1}^{k}})^\top(\bm{\mathrm{I}}_n-{\bm{H}}_{ \A^{k}}){\bm{\varepsilon}}|+|{\bm{\varepsilon}}^\top {\bm{H}}_{ \A^{k}}{\bm{\varepsilon}}|$.
From the proof of Lemma~\ref{lem:B}, we have
\begin{equation}\label{thmC3}
  h_1({\bm{\varepsilon}}) \leqslant \frac{\eta}{2}\mu_T n\left(c^*(T)-\frac{\omega_T^2}{c_*(T)}\right)\|{\bm{\beta}}^*_{\I_{1}^k}\|_2^2
\end{equation}
and
\begin{equation}\label{thmC4}
  h_2({\bm{\varepsilon}}) \leqslant \frac{\eta}{2}n\left(c^*(T)-\frac{\omega_T^2}{c_*(T)}\right)\|{\bm{\beta}}^*_{\I_{1}^k}\|_2^2
\end{equation}
with probability at least $1-\delta_1-\delta_2$.
Combine \eqref{thmC1}-\eqref{thmC4},
\begin{align}\label{thmC5}
  \begin{split}
  |2nL({\bm{\beta}}^{k+1}) - 2nL({\bm{\beta}}^*)|\leqslant &8nc^*(T)(\frac{(1+\eta)(\omega_T+\frac{\omega_T^2}{c_*(T)})}{c_*(T)})^2\|{\bm{\beta}}^*_{\I_1^k}\|_2^2+h_1({\bm{\varepsilon}})\\
  \leqslant& \mu_T(1-\eta)n\left(c^*(T)-\frac{\omega_T^2}{c_*(T)}\right)\|{\bm{\beta}}^*_{\I_{1}^k}\|_2^2+h_1({\bm{\varepsilon}})\\
  \leqslant& \mu_T|2nL({\bm{\beta}}^{k}) - 2nL({\bm{\beta}}^*)|.
\end{split}
\end{align}
Let $\A^0 = \emptyset$ and use \eqref{thmC5} repeatedly,
\begin{align}\label{thmC6}
  \begin{split}
  |2nL({\bm{\beta}}^{k+1}) - 2nL({\bm{\beta}}^*)|\leqslant& \mu_T |2nL({\bm{\beta}}^{k+1}) - 2nL({\bm{\beta}}^*)|\\
  \leqslant& \mu_T^{k+1} |2nL({\bm{\beta}}^0) - 2nL({\bm{\beta}}^*)| \\
  \leqslant& \mu_T^{k+1}\|{\bm{y}}\|_2^2.
\end{split}
\end{align}
This completes the proof of part $(i)$, which shows the error bounds of loss decay geometrically.

Next, we prove the lower bound of the error of loss when $\I^k_1 \neq \emptyset$.
From \eqref{thmC2} and \eqref{thmC4}, with probability at least $1-\delta_1-\delta_2$, we have
\begin{align}\label{thmC7}
  \begin{split}
  |2nL({\bm{\beta}}^{k}) - 2nL({\bm{\beta}}^*)| \geqslant& n\left(c_*(T)-\frac{\omega_T^2}{c_*(T)}\right)\|{\bm{\beta}}^*_{\I_{1}^k}\|_2^2-h_2({\bm{\varepsilon}})\\
  \geqslant& (1-\frac{\eta}{2})n\left(c_*(T)-\frac{\omega_T^2}{c_*(T)}\right)\vartheta >0.
\end{split}
\end{align}
When the lower bounds exceed the upper bounds in \eqref{thmC6} , we can conclude that $\A^k \supseteq \A^*$.
Therefore, we have $\A^k \supseteq \A^*$ if
\begin{equation*}
  |2nL({\bm{\beta}}^{k}) - 2nL({\bm{\beta}}^*)| \leqslant \mu_T^k \|{\bm{y}}\|_2^2 \leqslant (1-\frac{\eta}{2})n\left(c_*(T)-\frac{\omega_T^2}{c_*(T)}\right)\vartheta
\end{equation*}
holds, equivalently,
\begin{equation*}
  \A^k \supseteq \A^*,\quad k>\log_{\frac{1}{\mu_T}}\dfrac{\|{\bm{y}}\|_2^2}{(1-\frac{\eta}{2})n\left(c_*(T)-\frac{\omega_T^2}{c_*(T)}\right)\vartheta},
\end{equation*}
which completes the proof of Theorem 3.
\qedsymbol
\end{proof}

\subsection{Proof of Corollary 3}
\begin{proof}
  Assume $\A^k \supseteq \A^*$. We have $\A^{k+1} \supseteq \A^*$ with probability at least $1-\delta_1-\delta_2$ from Theorem 3.
  Therefore, we have $L({\bm{\beta}}^{k+1}) = \frac{1}{2n}{\bm{\varepsilon}}^\top (\bm{\mathrm{I}}_n-{\bm{H}}_{\A^{k+1}}) {\bm{\varepsilon}}$ and $L({\bm{\beta}}^k) = \frac{1}{2n}{\bm{\varepsilon}}^\top (\bm{\mathrm{I}}_n-{\bm{H}}_{\A^k}) {\bm{\varepsilon}}$.

  Following similar derivation in \eqref{eq:a4}, with probability at least $1-O(p^{-\alpha})$, we have
  \begin{equation*}
      L({\bm{\beta}}^k) \leqslant \dfrac{\hat\sigma^2}{2} + \dfrac{\#\{\A^k\}\log(p^{1+\alpha})}{nC}
  \end{equation*}
  and
  \begin{equation*}
      L({\bm{\beta}}^{k+1}) \geqslant \dfrac{\hat\sigma^2}{2} - \dfrac{\#\{\A^{k+1}\}\log(p^{1+\alpha})}{nC},
  \end{equation*}
for some positive constant $C$ depending on the spectrum bounds in GSRC.
From (C5),
\begin{equation*}
  L({\bm{\beta}}^k)-L({\bm{\beta}}^{k+1}) \leqslant \dfrac{2\#\{\A^{k+1}\}\log(p^{1+\alpha})}{nC}  \leqslant \dfrac{2(1+\alpha)Tp_{\max}\log p}{nC} \leqslant \pi_T.
\end{equation*}

The gap of loss is smaller than the threshold $\pi_T$ in Algorithm 1 after $k$th iteration.
Combining Theorem 3, we can conclude that with probability at least $1-O(p^{-\alpha})$, Algorithm 1 stops after $O\left(\log_{\frac{1}{\mu_T}}\dfrac{\|{\bm{y}}\|_2^2}{(1-\frac{\eta}{2})n(c_*(T)-\frac{\omega_T^2}{c_*(T)})\vartheta}\right)$ iterations when $T>s$.
\qedsymbol
\end{proof}

\subsection{Proof of Theorem 4}
\begin{proof}
First, consider $0 < T < s^*$.
Since the loss decreases at least $\pi_T$ in each iteration, Algorithm 1 stops after $O(\frac{\|\bm y\|_2^2}{\pi_T})$ iterations for a given $T$.
Next consider $s^* \leqslant T \leqslant T_{\max}$. By Corollary 3 and (C3), Algorithm 1 stops after $O\left(\log_{\frac{1}{\mu_T}} \frac{\|{\bm{y}}\|_2^2}{T p_{\max} \log p\log(\log n)}\right)$ iterations.

Now we analyze the computational complexity of Algorithm 1 for a given model size $T$. First, computing the primal variable and dual variable takes $O(nTp_{\max}+np)$ steps and computing the $\ell_2$ norm of each group takes $O(p)$ steps. Next, finding the smallest or largest $C_{\max}$ contributions takes $O(JC_{\max})$ steps via Hoare's selection algorithm \citep{hoare}. 
For group splicing operations, the exchange repeats at most $C_{\max}$ times. Thus $O\left((nTp_{\max}+np)C_{\max}\right)$ steps at most are demanded.
Therefore, the total computational complexity of Algorithm 1 is
\begin{equation*}
  O\left((\log_{\frac{1}{\mu_T}}\frac{\|{\bm{y}}\|_2^2}{Tp_{\max}\log p\log(\log n)} I(s^* \leqslant T)+\frac{\|{\bm{y}}\|_2^2}{\pi_T}I(s^*> T))\times ((nTp_{\max}+np+J)C_{\max})\right).
\end{equation*}

Since $T$ varies from $1$ to $T_{\max}$ in Algorithm 2, the total computational complexity is
\begin{align*}
  \begin{split}
    O&(\log_{\frac{1}{\mu_T}}\frac{\|{\bm{y}}\|_2^2}{p_{\max}\log p\log(\log n)}((nT_{\max}p+nT_{\max}^2 p_{\max}+J)C_{\max})+\\
    &\frac{n\|{\bm{y}}\|_2^2}{p_{\max}\log p\log(\log n)}((np+ns^* p_{\max}+J)C_{\max}))\\
    \leqslant O&\left((T_{\max}\log_{\frac{1}{\mu_T}}\frac{\|{\bm{y}}\|_2^2}{p_{\max}\log p\log(\log n)}+\frac{n\|{\bm{y}}\|_2^2}{p_{\max}\log p\log(\log n)})(np+nT_{\max}p_{\max}+J)C_{\max}\right).
  \end{split}
\end{align*}
  \qedsymbol
\end{proof}

\section{$\ell_2$ error bounds}\label{app-sec:l2_bound}
Here we consider the $\ell_2$ error bounds of the estimator of GSplicing, which can help us understand how the estimator gradually approaches the ground truth.
\begin{theorem}\label{thm:l2_error}
Assume Conditions (C1)-(C5) hold, when $T \geqslant s^*$, we have
  \begin{equation*}
    P \left( \|{\bm{\beta}}^k - {\bm{\beta}}^*\|_2^2 \leqslant \frac{1+\eta+\frac{\omega_T}{c_*(T)}}{(1-\frac{\eta}{2})n\left(c_*(T)-\frac{\omega_T^2}{c_*(T)}\right)}\mu_T^k\|{\bm{y}}\|_2^2  \right) \geqslant 1 - \delta_1-\delta_2.
  \end{equation*}
\end{theorem}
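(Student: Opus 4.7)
The plan is to couple Theorem \ref{thm:conv_l2} with a support-based decomposition of the $\ell_2$ error that reuses machinery already built up in the proof of that theorem. Recall that, in proving Theorem \ref{thm:conv_l2}, one establishes (with probability at least $1-\delta_1-\delta_2$) the quadratic lower bound
\[
|2nL({\bm\beta}^k)-2nL({\bm\beta}^*)| \;\geq\; (1-\eta/2)\,n\!\left(c_*(T)-\omega_T^2/c_*(T)\right)\|{\bm\beta}^*_{\I_1^k}\|_2^2,
\]
where $\I_1^k = \I^k\cap\A^*$. Combining this with part~(i) of Theorem \ref{thm:conv_l2} immediately delivers, on the same event,
\[
\|{\bm\beta}^*_{\I_1^k}\|_2^2 \;\leq\; \frac{\mu_T^k\|{\bm y}\|_2^2}{(1-\eta/2)\,n\!\left(c_*(T)-\omega_T^2/c_*(T)\right)},
\]
which takes care of the contribution from the groups in $\A^*$ missed by $\A^k$.

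Since ${\bm\beta}^k$ is supported on $\A^k$ and ${\bm\beta}^*$ on $\A^*$, and since ${\bm\beta}^*$ vanishes on $\A_2^k = \A^k\cap\I^*$, the total $\ell_2$ error splits orthogonally as
\[
\|{\bm\beta}^k-{\bm\beta}^*\|_2^2 \;=\; \|{\bm\beta}^k_{\A^k}-{\bm\beta}^*_{\A^k}\|_2^2 + \|{\bm\beta}^*_{\I_1^k}\|_2^2.
\]
The second summand is already controlled. For the first, I plan to invoke the closed-form representation
\[
{\bm\beta}^k_{\A^k}-{\bm\beta}^*_{\A^k} \;=\; ({\bm X}_{\A^k}^\top {\bm X}_{\A^k})^{-1}{\bm X}_{\A^k}^\top\!\left({\bm X}_{\I_1^k}{\bm\beta}^*_{\I_1^k}+{\bm\varepsilon}\right).
\]
Bounding the deterministic piece through Lemma \ref{lem:C} and the definition of $\omega_T$ supplies a contribution of size $(\omega_T/c_*(T))\|{\bm\beta}^*_{\I_1^k}\|_2$, while the stochastic piece is handled by a Lemma \ref{lem:B}-style sub-Gaussian concentration on ${\bm X}_{\A^k}^\top {\bm\varepsilon}$ which, on the same event of probability at least $1-\delta_1-\delta_2$, contributes a term whose squared norm is of order $\eta\,\|{\bm\beta}^*_{\I_1^k}\|_2^2$.

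Choosing Young's-inequality weights carefully to merge the cross term between the two pieces, the goal is to obtain $\|{\bm\beta}^k_{\A^k}-{\bm\beta}^*_{\A^k}\|_2^2 \leq (\eta+\omega_T/c_*(T))\,\|{\bm\beta}^*_{\I_1^k}\|_2^2$. Summing with $\|{\bm\beta}^*_{\I_1^k}\|_2^2$ and substituting the Theorem \ref{thm:conv_l2} bound then produces exactly the claimed ratio with numerator $1+\eta+\omega_T/c_*(T)$. The main obstacle is this last constant-matching step: squaring a sum naturally produces $\eta^2+(\omega_T/c_*(T))^2$ plus a cross term, so one must exploit condition~(C3), which keeps $\omega_T/c_*(T)$ bounded away from one, to absorb the higher-order pieces into the linear factors without inflating the failure probability beyond $\delta_1+\delta_2$.
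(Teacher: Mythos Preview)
Your proposal is essentially the paper's own argument: the same orthogonal decomposition on $\A^k$ versus $\I_1^k$, the same least-squares residual identity for ${\bm\beta}^k_{\A^k}-{\bm\beta}^*_{\A^k}$, the same pairing of the $\omega_T/c_*(T)$ deterministic bound with a $\sqrt{\eta}\,\|{\bm\beta}^*_{\I_1^k}\|_2$ sub-Gaussian bound on $({\bm X}_{\A^k}^\top {\bm X}_{\A^k})^{-1}{\bm X}_{\A^k}^\top{\bm\varepsilon}$, and then substitution of the lower bound \eqref{thmC7} and upper bound \eqref{thmC6} from the proof of Theorem~\ref{thm:conv_l2}. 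The constant-matching worry you flag is real but is treated just as loosely in the paper: it passes directly from the two norm bounds to $\|{\bm\beta}^k_{\A^k}-{\bm\beta}^*_{\A^k}\|_2^2 \leq (\eta+\omega_T/c_*(T))\|{\bm\beta}^*_{\I_1^k}\|_2^2$ without expanding the square, so your plan to absorb the cross term via Young's inequality and (C3) is, if anything, more careful than the original.
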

Theorem~\ref{thm:l2_error} shows that, with high probability,
the $\ell_2$ error of estimator can be bounded by
a term proportional to $\|{\bm{y}}\|_2^2$.
Furthermore, an interesting fact unveiled by Theorem~\ref{thm:l2_error} is that
the error bounds decay geometrically during iterations.
Additionally, the error bounds of prediction, an immediate corollary of Theorem~\ref{thm:l2_error}, are given in the following.
\begin{corollary}\label{coro:predition_error}
  Assume the conditions in Theorem~\ref{thm:l2_error} hold. Prediction error in the $k$th iteration satisfies:
  \begin{equation*}
    P\left( \|{\bm{X}}({\bm{\beta}}^k - {\bm{\beta}}^*)\|_2^2 \leqslant \frac{1+\eta+\frac{\omega_T}{c_*(T)}}{(1-\frac{\eta}{2})\left(c_*(T)-\frac{\omega_T^2}{c_*(T)}\right)}c^*(T)\mu_T^k\|{\bm{y}}\|_2^2 \right) \geqslant 1 - \delta_1 - \delta_2.
  \end{equation*}
\end{corollary}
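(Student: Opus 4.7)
The plan is to treat Corollary \ref{coro:predition_error} as a direct consequence of Theorem \ref{thm:l2_error} obtained by translating the $\ell_2$ coefficient error into a prediction error through the GSRC upper bound from Condition (C2). Consequently the substantive work sits in proving Theorem \ref{thm:l2_error}, which I sketch below; the conversion step to the corollary is essentially one line. The entire argument lives on the same high-probability event of mass at least $1-\delta_1-\delta_2$ already constructed in the proof of Theorem \ref{thm:conv_l2}, so no new probabilistic estimate is needed.

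First I would decompose $\|{\bm\beta}^k-{\bm\beta}^*\|_2^2 = \|{\bm\beta}^k_{\A^k}-{\bm\beta}^*_{\A^k}\|_2^2 + \|{\bm\beta}^*_{\I_1^k}\|_2^2$, using that ${\bm\beta}^k$ is supported on $\A^k$ while ${\bm\beta}^*$ is supported on $\A^*$, and setting $\I_1^k=\A^*\setminus\A^k$. The second summand is controlled by inverting inequality \eqref{thmC7} from the proof of Theorem \ref{thm:conv_l2}: on the event where Theorem \ref{thm:conv_l2} and the control on $h_2({\bm\varepsilon})$ from Lemma \ref{lem:B} both hold, one obtains $\|{\bm\beta}^*_{\I_1^k}\|_2^2 \leqslant \mu_T^k\|{\bm y}\|_2^2 / \bigl[(1-\eta/2)n(c_*(T)-\omega_T^2/c_*(T))\bigr]$. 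For the first summand, plug $y=X_{\A^k}{\bm\beta}^*_{\A^k}+X_{\I_1^k}{\bm\beta}^*_{\I_1^k}+{\bm\varepsilon}$ into the least-squares formula for ${\bm\beta}^k_{\A^k}$ to get
\begin{equation*}
{\bm\beta}^k_{\A^k}-{\bm\beta}^*_{\A^k} = (X_{\A^k}^\top X_{\A^k})^{-1}X_{\A^k}^\top X_{\I_1^k}{\bm\beta}^*_{\I_1^k} + (X_{\A^k}^\top X_{\A^k})^{-1}X_{\A^k}^\top{\bm\varepsilon}.
\end{equation*}
The bias term is bounded by $(\omega_T/c_*(T))\|{\bm\beta}^*_{\I_1^k}\|_2$ from the definition of $\omega_T$ combined with GSRC on $\A^k$, and the noise term is bounded by passing through $\|H_{\A^k}{\bm\varepsilon}\|_2$ and invoking estimate \eqref{ap:Le4} of Lemma \ref{lem:B}. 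Both pieces are then proportional to $\|{\bm\beta}^*_{\I_1^k}\|_2$, so squaring and combining with the $\|{\bm\beta}^*_{\I_1^k}\|_2^2$ bound above produces the inequality of Theorem \ref{thm:l2_error}.

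For the corollary itself, ${\bm\beta}^k-{\bm\beta}^*$ is supported on at most $2T$ groups, so Condition (C2) applies and one may bound $\|X({\bm\beta}^k-{\bm\beta}^*)\|_2^2$ by $nc^*(T)\|{\bm\beta}^k-{\bm\beta}^*\|_2^2$ after splitting the difference over $\A^k$ and $\I_1^k$ to keep each block at group-cardinality at most $T$; substituting the bound from Theorem \ref{thm:l2_error} finishes the argument. The main obstacle sits entirely inside the proof of Theorem \ref{thm:l2_error}: obtaining the precise factor $1+\eta+\omega_T/c_*(T)$ requires careful constant chasing, since a naive $\|a+b\|_2^2 \leqslant 2\|a\|_2^2+2\|b\|_2^2$ would spoil the constant. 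One instead needs a triangle inequality on unsquared norms (or a balanced Young-type split) between the bias and noise contributions to $\|{\bm\beta}^k_{\A^k}-{\bm\beta}^*_{\A^k}\|_2$ so that the target constant emerges exactly.
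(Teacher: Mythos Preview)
Your approach is essentially the same as the paper's: decompose $\|{\bm\beta}^k-{\bm\beta}^*\|_2^2$ over $\A^k$ and $\I_1^k$, expand the least-squares residual on $\A^k$ into a bias term controlled by $\omega_T/c_*(T)$ and a noise term controlled on the $\delta_2$-event, then convert $\|{\bm\beta}^*_{\I_1^k}\|_2^2$ into the loss gap via \eqref{thmC7} and invoke the geometric decay \eqref{thmC6}; the corollary then follows by a single application of the GSRC upper bound to $X({\bm\beta}^k-{\bm\beta}^*)$. The paper bounds the noise term $\|(X_{\A^k}^\top X_{\A^k})^{-1}X_{\A^k}^\top{\bm\varepsilon}\|_2$ directly by $\sqrt{\eta}\|{\bm\beta}^*_{\I_1^k}\|_2$ rather than routing through $\|H_{\A^k}{\bm\varepsilon}\|_2$, but this is the same Hoeffding argument as in Lemma~\ref{lem:B} and your detour via \eqref{ap:Le4} works equally well.

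One small caveat on the last step: splitting ${\bm\beta}^k-{\bm\beta}^*$ over $\A^k$ and $\I_1^k$ and bounding each block by GSRC of order $T$ does not literally give $nc^*(T)\|{\bm\beta}^k-{\bm\beta}^*\|_2^2$, since the cross term $2\langle X_{\A^k}u_1,X_{\I_1^k}u_2\rangle$ survives and a naive $\|a+b\|_2^2\leqslant 2\|a\|_2^2+2\|b\|_2^2$ costs a factor~$2$. The clean one-line route is to note that ${\bm\beta}^k-{\bm\beta}^*$ is supported on $\A^k\cup\I_1^k$ with $|\A^k\cup\I_1^k|\leqslant T+s^*\leqslant 2T$ and apply (C2) directly, yielding $nc^*(2T)$; the paper's stated constant $c^*(T)$ is a (harmless) understatement that its own ``immediate corollary'' remark does not justify either.
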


\subsection{Proof of Theorem 6}
\begin{proof}
Note that $P\left(\|({\bm{X}}_{\A^k}^\top {\bm{X}}_{\A^k})^{-1}{\bm{X}}_{\A^k}^\top {\bm{\varepsilon}}\|_2 \leqslant \sqrt{\eta}\|{\bm{\beta}}_{\I_1^k}^*\|_2 \right) \geqslant 1- \delta_2$.
We have
\begin{align}\label{eq:a6}
  \begin{split}
  \|{\bm{\beta}}^k-{\bm{\beta}}^*\|_2^2 =&\|{\bm{\beta}}^k_{\A^k}-{\bm{\beta}}^*_{\A^k}\|_2^2+\|{\bm{\beta}}^*_{\I_1^k}\|_2^2=\|({\bm{X}}_{\A^k}^\top {\bm{X}}_{\A^k})^{-1}{\bm{X}}_{\A^k}^\top {\bm{y}}-{\bm{\beta}}^*_{\A^k}\|_2^2+\|{\bm{\beta}}^*_{\I_1^k}\|_2^2\\
=&\|({\bm{X}}_{\A^k}^\top {\bm{X}}_{\A^k})^{-1}{\bm{X}}_{\A^k}^\top ({\bm{X}}_{\A^k}{\bm{\beta}}^*_{\A^k}+{\bm{X}}_{\I_1^k}{\bm{\beta}}^*_{\I_1^k}+{\bm{\varepsilon}})-{\bm{\beta}}^*_{\A^k}\|_2^2+\|{\bm{\beta}}^*_{\I_1^k}\|_2^2\\
\leqslant & \left(1+\eta+\frac{\omega_T}{c_*(T)}\right)\|{\bm{\beta}}^*_{\I_1^k}\|_2^2.
\end{split}
\end{align}
Combine \eqref{thmC6}, \eqref{thmC7} and \eqref{eq:a6},
\begin{align*}
  \|{\bm{\beta}}^k-{\bm{\beta}}^*\|_2^2 \leqslant& \frac{1+\eta+\frac{\omega_T}{c_*(T)}}{(1-\frac{\eta}{2})n\left(c_*(T)-\frac{\omega_T^2}{c_*(T)}\right)}|2nL({\bm{\beta}}^{k}) - 2nL({\bm{\beta}}^*)|\\
  \leqslant& \frac{1+\eta+\frac{\omega_T}{c_*(T)}}{(1-\frac{\eta}{2})n\left(c_*(T)-\frac{\omega_T^2}{c_*(T)}\right)}\mu_T^k \|{\bm{y}}\|_2^2.
\end{align*}
This completes the proof of Theorem 6.
\end{proof}

\bibliographystyle{unsrtnat}
\bibliography{ref}

\end{document}